\documentclass{article}
\usepackage[utf8]{inputenc}
\usepackage{arxiv}

\usepackage{amsmath}
\usepackage{amssymb}
\usepackage{bm}
\usepackage{listings}
\newcommand{\norm}[1]{\left\lVert#1\right\rVert}
\DeclareMathOperator*{\argmin}{arg\,min}
\usepackage{mathtools}
\usepackage{xcolor}

\usepackage{ntheorem}
\theoremstyle{break}
\newtheorem{lemma}{Lemma}

\usepackage{changepage}   
\newenvironment{proof}{\begin{adjustwidth}{1cm}{1cm}}{\end{adjustwidth}}

\usepackage[toc,page]{appendix}
\usepackage{tabularx,longtable,multirow,subfigure,caption}
\usepackage{hyperref}
\usepackage{booktabs}
\usepackage{lineno}


\newcommand{\fref}[1]{Figure~\ref{#1}}

\date{}
\begin{document}

\title{Attention-based Convolutional Autoencoders for 3D-Variational Data Assimilation}

\author{ \href{https://orcid.org/0000-0002-7213-5508}{\hspace{1mm}Julian Mack} \\
	Data Science Institute\thanks{Data Science Institute, Imperial College London, UK} \\
	\texttt{julianfmack@gmail.com} \\
	\And
	{\hspace{1mm}Rossella Arcucci}\thanks{Corresponding author} \\
	Data Science Institute\\
	\texttt{r.arcucci@imperial.ac.uk} \\
	\And
	{\hspace{1mm}Miguel Molina-Solana}\\
	Data Science Institute\\
	Universidad de Granada\thanks{Dept. Computer Science and AI, Universidad de Granada, Spain}\\
	\texttt{mmolinas@ic.ac.uk} \\
	\And
	{\hspace{1mm}Yi-Ke Guo}\\
	Data Science Institute\\
	\texttt{y.guo@imperial.ac.uk} \\
	}


\maketitle

\begin{abstract}
We propose a new `Bi-Reduced Space' approach to solving 3D Variational Data Assimilation using Convolutional Autoencoders. We prove that our approach has the same solution as previous methods but has significantly lower computational complexity; in other words, we reduce the computational cost without affecting the data assimilation accuracy. We tested the new method with data from a real-world application: a pollution model of a site in Elephant and Castle, London and found that we could reduce the size of the background covariance matrix representation by $\mathcal{O}(10^3)$ and, at the same time, increase our data assimilation accuracy with respect to existing reduced space methods. 
\end{abstract}

\keywords{Variational Data Assimilation \and Attention Networks \and Convolutional Autoencoders}

\linenumbers
\nolinenumbers
\section{Introduction}

Data Assimilation (DA) is an uncertainty quantification technique in which observation data and a forecasting model are used in tandem to generate predictions that are more accurate than those that would be produced using either component independently. DA is computationally costly for large systems~\cite{MetOffice2019a} and under operational constraints, it is often necessary to solve the problem in a reduced space in order to achieve real-time assimilation. 
In most relevant DA operational software, a variable transformation is performed on the variational functional to reduce the computational cost needed for computing the covariance matrix explicitly; to reduce the space, only Empirical Orthogonal Functions (EOFs) of the first largest eigenvalues of the error covariance matrix are considered. Since its introduction to meteorology by Edward Lorenz, EOFs analysis, which is essentially based on a TSVD, has become a fundamental tool in computational fluid dynamic modelling for data diagnostics and dynamical model reduction. Real world applications of TSVD (EOFs) basically exploit the fact that these methods allow a decomposition of a data function into a set of orthogonal functions, which are designed so that only a few of these functions are needed in lower-dimensional approximations. Nevertheless, the accuracy of the solution obtained by truncating, exhibits a high sensitivity to the variation of the value of the truncation parameter \cite{arcucci2017variational,arcucci2018toward}, so that a suitable truncation parameter is needed. This is a severe drawback of truncation-based methods and limits the utility of operational software based on these methods.
Convolutional Autoencoders (CAEs) have had huge successes in computer vision~\cite{Vincent2008,Lore2017} and particularly in image compression~\cite{Theis2017,Balle2018}. In this work, we use CAEs to produce a reduced space in which DA can be performed efficiently.

The structure of this paper is as follows: in Section~\ref{sec:background} we cover related work and we present the contribution of the present work. Section~\ref{sec:def} provides preliminary concepts and definitions, and Section~\ref{sec:contrib_theory} introduces our theoretical contribution. As the success of our approach is heavily conditioned on the choice of a CAE architecture, in Section~\ref{sec:architecture_search} we summarise the results of our extensive architecture search before evaluating our approach against existing VarDA methods in Section~\ref{sec:expt_TSVDvsAE}. In Section~\ref{sec:discussion} we discuss these results before concluding in Section~\ref{sec:conc_fw}.

\section{Related work and contribution of the present work} \label{sec:background}
Forecasting models introduce uncertainty from numerous sources. These include, but are not limited to, uncertainty in initial conditions, imperfect representations of the underlying physical processes and numerical errors. As a result, a model without access to real-time data will accumulate errors until its predictions no longer correspond to reality~\cite{Tribbia2004}. Similarly, all observations will have an irreducible uncertainty as a result of imperfect measuring devices. The key idea in DA is that the overall uncertainty in a forecast can be reduced by producing a weighed average of model forecasts and observations. The canonical application of Data Assimilation (DA)~\cite{Lorenc1986,Lorenc1988} is Numerical Weather Prediction (NWP)~\cite{Courtier1994,Courtier1998,Huang2004a} but the technique has been utilised in contexts as diverse as oceanic modelling \cite{Evensen2003,Dobricic2008}, solar wind prediction \cite{Lang2019} and inner city pollution modelling \cite{Arcucci2019a,arcucci2018effective}. Our proposed approach is agnostic to the details of the forecasting model (i.e.\ it is non-intrusive) and is therefore applicable to any DA problem in which a reduced order system is used. 

Our proposed formulation of Variational DA \cite{Courtier1998,Huang2004a} extends the incremental formulation \cite{Courtier1994}. In 1992, Parrish et al. \cite{Parrish1992} proposed using a Control Variable Transform (CVT) to reduce the space of the background error covariance matrix $\bm{B}$ by performing Cholesky factorisation as $\bm{B}=\bm{V}\bm{V}^T$. Since then, many authors have used eigenanalysis techniques such as PCA or TSVD to reduce the rank of $\bm{V}$ \cite{Arcucci2019a}. In this work, we propose replacing these eigenanalysis approaches with a CAE that learns to compress $\bm{V}$ more efficiently, and with less information-loss than the removal of eigen-modes. 

This work builds on a previous publication \cite{Arcucci2019a} in which TSVD was used to precondition $\bm{V}$. The original authors used a test-site location in South London and synthetic data generated by Fluidity, an open-source finite-element fluid dynamic software (\url{http://fluidityproject.github.io/}). We test the proposed approach on the same domain and data to enable a clear comparison between the approaches. We find that our method gives considerably more accurate predictions and, in most cases, provides them sooner than the previous approach. In fact, our method is also more accurate (and much faster) than the CVT formulation of Parrish et al. \cite{Parrish1992}.

In this paper we make the following contributions:
\begin{enumerate}
    \item We propose a new `Bi-reduced space' 3D Variational Data Assimilation (3D-VarDA) formulation that has an online complexity that is independent of the number of assimilated observations (i.e.\ it can be used with arbitrarily dense sensor networks). We show that our approach has lower online complexity than \cite{Arcucci2019a} while also giving equivalent forecasts.
    \item We create and evaluate 3D extensions of a range of state-of-the-art CAEs for 2D image compression. To our knowledge, we are the first to extend the image compression network of \cite{Zhou2019} and image restoration GRDN of \cite{Kim2019} to three-dimensions. We find that Zhou et al.'s attention-based model \cite{Zhou2019} performs best, and make some small improvements to this system including the replacement of vanilla residual blocks \cite{He2016} with `NeXt' residual blocks \cite{Xie2017a} in order to reduce decoder inference time.
    \item This adapted CAE, in combination with our proposed DA formulation, achieves a substantial relative reduction in DA error of 37\% compared with the Arcucci et al. TSVD approach \cite{Arcucci2019a}. Depending on the number of assimilated observations, the proposed method is up x30 faster. We discuss the speed-accuracy tradeoff at length in Section~\ref{sec:expt_TSVDvsAE}. 
    \item We release a well tested open-source Python module \texttt{VarDACAE} that enables users to easily replicate our experiments, use our model implementations, and train CAEs for any Variational data assimilation problem. The repository can be found at \url{https://github.com/julianmack/Data_Assimilation}.
\end{enumerate}

\section{Preliminary Definitions} \label{sec:def}
In this section we define the key quantities for DA. In most cases, we follow the notation in Banister's review paper \cite{Bannister2017a}. 
\begin{itemize}
    \item Let $\bm{x}_t$ represent the state of the model at time $t$ such that: 
    \begin{equation}
        \bm{x}_t \in \mathbb{R}^{n}
    \end{equation}
    where $n$ is the number of elements in the model state vector. The state for $T$ time-steps can be given in a single matrix: \begin{equation}
        \bm{X} = [\bm{x}_0, \bm{x}_1 , ..., \bm{x}_T] \ \ \in \ \ \mathbb{R}^{n \times T}
    \end{equation}
    In most practical problems, $n$ is large and of order $ \geq \mathcal{O}(10^6)$. 
    \item Let $\bm{y}_t$ represent the observation space of the system where: \begin{equation}
        \bm{y}_t \in \mathbb{R}^{M}
    \end{equation}
    where typically $M << n$. The Met Office uses $M =0.01n$ \cite{MetOffice2019a}. 
    \item Let $\bm{\mathcal{H}}_t$ be an observation operator such that: \begin{equation}\bm{\mathcal{H}}_t[\bm{x}_t] = \bm{y}_t + \bm{\epsilon}_t\end{equation} where $\bm{\epsilon}_t \sim \mathcal{N}(\bm{0}, \bm{R}_t)$ is the observation error. Often, observations are assumed to be uncorrelated meaning that $\bm{R}_t$ is diagonal. When all observations are of the same type and made with the same device we have: 
    \begin{equation}
        \label{eq:R_def}
        \bm{R}_t = \sigma_0^2\bm{I}
    \end{equation}
    \item Let $\bm{\mathcal{M}}_{t-1,t}$ be the forecast model that propagates the system forward from time-step $t-1$ to $t$ such that: \begin{equation}\bm{x}_t = \bm{\mathcal{M}}_{t-1,t}[\bm{x}_{t-1}] + \bm{\eta}_t\end{equation} 
    where $\bm{\eta}_t \sim \mathcal{N}(\bm{0}, \bm{Q}_t)$ is the model error introduced over this interval. 
    \item Let $\bm{x}^b_t \in \mathbb{R}^n$ be the background state at time-step $t$. All \textit{a priori} information about the system is introduced through the first background state $\bm{x}^b_0$ and the model $\bm{\mathcal{M}}_{t-1,t}$. Future background estimates are then defined according to the free-running model in which $\bm{\eta}_t$ is assumed to be zero and therefore:
    \begin{equation}
        \bm{x}_t^b = \bm{\mathcal{M}}_{t-1,t}[\bm{x}^b_{t-1}]
    \end{equation}
    \item Let $\bm{B}_t$ represent the background state $\bm{x}^b_t$ covariance matrix. In theory, it is found by evaluating:
    \begin{equation} \label{eq:B_theory}
        \bm{B}_t = (\bm{x}^b_t - \bm{x^*}_t)(\bm{x}^b_t - \bm{x^*}_t)^T
    \end{equation} \\
    where $\bm{x^*}_t$ is the true state of the atmosphere. In practice, even if $\bm{x^*}_t$ were known, this matrix is too large to fit in memory as it has $\mathcal{O}(n^2)$ parameters which is $\geq \mathcal{O}(10^{12})$ for most practical problems.
\end{itemize}

\subsection{Variational DA, VarDA} \label{sec:VarDA}
VarDA involves minimising a cost function in order to find the most likely state values $\bm{X}^{DA}$  given the observations $\bm{y}_t$, the model predictions and their uncertainties. The problem is to find the initial state $\bm{x}_0^{DA}$ that satisfies:
\begin{equation}
    \bm{x}_0^{DA} = \argmin_{\bm{x}_0} J(\bm{x}_0)
\end{equation}
\begin{equation}\label{eq:cost_4d}
\begin{split}
    J(\bm{x}_0) = \frac{1}{2}\norm{ \bm{x}_0 - \bm{x}^b_0 }^2_{\bm{B}^{-1}_0} + \frac{1}{2}\sum^T_{t=0}\norm{\bm{y}_t - \bm{\mathcal{H}}_t[\bm{x}_t]}^2_{\bm{R}_t^{-1}}&  
\end{split}
\end{equation}

The first term in cost function $J(\bm{x}_0)$ measures the difference between the initial model state and our \textit{a priori} expectation of this state. The second term encodes the difference between the observations and the model forecasts. The cost function is explicitly differentiated and then approximately solved by first-order minimisation routines. Note that we are assuming all errors are Gaussian by using this least-squares formulation\footnote{This formulation has an equivalent solution to the Kalman Filter approach.}. This formulation is known as the `strong constraint 4D-Var' where:
\begin{itemize}
    \item `4D' refers to the fact that we are considering three spatial dimensions as well as one temporal dimension. It is contrasted with 3D-Var in which a single time-step is assimilated.
    \item `Strong' refers to the fact that model errors are assumed to be zero \cite{Zupanski1997}\footnote{The more general 4D-Var weak constraint incremental formulation is given in \cite{Courtier1994} and reviewed in a modern context in \cite{Bannister2017a}.}.
\end{itemize}
In the current work we are considering the 3D case where the cost function is:
\begin{equation}\label{eq:cost_3d}
    J(\bm{x}) = \frac{1}{2}\norm{ \bm{x}- \bm{x}^b }^2_{\bm{B}^{-1}} + \frac{1}{2}\norm{\bm{y} - \bm{\mathcal{H}}[\bm{x}]}^2_{\bm{R}^{-1}}
\end{equation}
Note that we have dropped the $t$ subscripts as we are only assimilating a single time-step. 

\subsubsection{Incremental VarDA}
If $\bm{\mathcal{H}}_{t}$ and $\bm{\mathcal{M}}_{t-1,t}$ are linear (which they are not in general), the cost functions (\ref{eq:cost_4d}) and (\ref{eq:cost_3d}) are convex. We can approximately linearize these operators about the background state $\bm{x}^b$ by formulating the problem in terms of perturbations to this state in a method known as the incremental formulation \cite{Courtier1994}:
\begin{equation}
\delta\bm{x} \coloneqq  \bm{x} - \bm{x}^b 
\end{equation} 
The problem statement then becomes:
\begin{align}
\begin{split}
    \delta&\bm{x}^{DA} = \argmin_{ \delta\bm{x}} J(\delta\bm{x}) \\ \label{eq:3d_incr}
     J(\delta\bm{x})  = 
     \frac{1}{2}&  \delta\bm{x}^T\bm{B}^{-1}\delta\bm{x} + \frac{1}{2}(\bm{d} - \bm{H}\delta\bm{x})^T \bm{R}^{-1}(\bm{d} - \bm{H}\delta\bm{x})
\end{split}
\end{align}

where $\bm{H}$ is the observation operator linearized about the background state and the `misfit' between observation and expected observation is:
\begin{equation}\label{eq:misfitH}
\bm{d} = \bm{y} - \bm{H}\bm{x}^b
\end{equation}

\subsubsection{Control Variable Transform}
As $\bm{B}$ is in $\mathcal{O}(n^2)$ it must be represented implicitly. A common way of doing this is by using the formulation proposed in \cite{Parrish1992} which states that:
\begin{align}
\delta\bm{x}  &= \bm{V} \bf{w}\label{eq:V_decode} \\
\bm{B} &= \bm{V V}^T \label{eq:b_sqrt}
\end{align}
where $ \bm{V}$ is the Cholesky factorisation of  $\bm{B}$. In this case the problem can be written as:
\begin{align} \label{eq:3d_preconditioned}
    \begin{split}
        & \mathbf{w}^{DA} = \argmin_{\mathbf{w}} J(\bf{w}) \\
    J(\mathbf{w})  &= 
     \frac{1}{2} \mathbf{w}^T\mathbf{w} + \frac{1}{2}(\bm{d} - \bm{H}\bm{V}\mathbf{w} )^T \bm{R}^{-1} (\bm{d} - \bm{H}\bm{V}\mathbf{w} )
    \end{split}
\end{align}
Following (\ref{eq:B_theory}), we can see that $\bm{V}$ is theoretically found by stacking a series of background states $\bm{X}^b$ and subtracting the true state, $\bm{x^*}$:
\begin{equation}
    \bm{V} = (\bm{X}^b - \bm{x^*})
\end{equation}
In reality, we do not know $\bm{x^*}$ but we can estimate $\bm{V}$ with a sample of $S$ model state forecasts $\bm{X}^b$ that we set aside as `background' such that:
\begin{equation*}
    \bm{X}^b = [\bm{x}_0^b, \bm{x}_1^b, ..., \bm{x}_S^b]  \in \mathbb{R}^{n \times S}
\end{equation*}
\begin{equation}
    \bm{V} = (\bm{X}^b - \bm{ x}^b) \ \in \mathbb{R}^{n \times S}
\end{equation}
where $\bm{ x}^b \coloneqq  \bar{\bm{ x}}^b $ is the mean of the sample of background states. In the incremental 3D-VarDA formulation this gives:
\begin{equation} \label{eq:V_full}
    \bm{V}= [\delta \bm{x}_0^b,  \ \delta \bm{x}_1^b,  \ ...,  \ \delta \bm{x}_S^b]  \in \mathbb{R}^{n \times S}
\end{equation}
where $\delta \bm{x}_i^b = \bm{x}_i -  \bm{ x}^b$. Note that $\textbf{w} \in \mathbb{R}^S$ and since $S << n$ in all practical cases, $\bf{w}$ is referred to as the `reduced space'. $\bm{V}$ is an affine transform from the reduced space to the full space and is used to obtain the assimilated state after minimising $J(\mathbf{w})$:
\begin{equation}
    \delta\bm{x} = \bm{V}\bf{w}
\end{equation} 

The problem in (\ref{eq:3d_preconditioned}) is poorly conditioned because, in most practical contexts, the matrix $\bm{V}$ is poorly conditioned\footnote{In other words, the ratio of $\bm{V}$'s largest to smallest eigenvalue is large.}.

\subsection{Truncated SVD}\label{sec:svd_background}
One way to precondition $\bm V$ and increase the speed of convergence is to use an eigenanalysis technique such as SVD \cite{Chai2007,Cheng2010} or PCA to generate Empirical Orthogonal Functions (EOFs) \cite{Lorenz1956} and remove low-variance modes from $\bm{V}$. We have implemented 3D-VarDA with TSVD as described in \cite{Arcucci2019a} and we evaluate the success of our proposed approach against this scheme in Section~\ref{sec:expt_TSVDvsAE}. In order to draw out the theoretical differences between the methods we briefly summarise TSVD here. \\
The singular value decomposition of matrix $\bm V$ is as follows:
\begin{equation}
    \bm V = \bm U \bm \Sigma \bm W^T
\end{equation}
where $\bm U \in \mathbb{R}^{n \times S}$, $\bm \Sigma \in \mathbb{R}^{S \times S}$ are both orthogonal and $\bm{W} \in \mathbb{R}^{n \times S} $ is diagonal and contains $\bm V$'s eigenvalues $\sigma_i$ such that:
\begin{align}
  \Sigma = \text{diag} [\sigma_1, \sigma_2, ... , \sigma_S]
\end{align}
where the eigenvalues appear in decreasing order:
\begin{equation}
\sigma_1 > \sigma_2 > ... > \sigma_S > 0
\end{equation}
To perform TSVD, $\tau$ of the modes are retained where $0 < \tau < S$ and the matrix $\bm V_{\tau}$ is reconstructed such that:
\begin{align}
\Sigma_{\tau} &= \text{diag} [\sigma_1, ... , \sigma_{\tau}, 0, ..., 0] \\
\bm {V}_{\tau} &= \bm U \bm \Sigma_{\tau} \bm W^T
\end{align}

This has generalised inverse:
\begin{equation}
    \bm {V}_{\tau}^+ = \bm W \bm \Sigma^+_{\tau} \bm U^T
\end{equation}
where $\bm \Sigma_{\tau}^+$ is the generalised inverse of $\bm \Sigma_{\tau}$ such that:
\begin{equation}
    \bm \Sigma_{\tau} = \text{diag} \bigg[\frac{1}{\sigma_1}, ... , \frac{1}{\sigma_{\tau}}, 0, ..., 0\bigg]
\end{equation}

\subsection{Autoencoders} \label{sec:AE_defs} \label{sec:background_AEs}
AEs are a self-supervised machine learning method first proposed in \cite{Rumelhart1986}. They consist of two components: an encoder $f(\bm{x})$ which compresses the input $\bm{x}$ to a small representation $\mathbf{z}$ (the AE's reduced space is referred to as the `latent space' in the machine learning literature), and a decoder $g(\mathbf{z})$ which reconstructs the input:
\begin{align} \label{eq:AE_def}
    \begin{split}
        f(\bm{x}) &= \mathbf{z} \\
    g(\mathbf{z}) &= \hat{\bm{x}}
    \end{split}
\end{align}
where the input data $\bm{x}$ and reconstruction $\hat{\bm{x}}$ are both $\in \mathbb{R}^n $ and the latent representation $\mathbf{z} \in \mathbb{R}^m$. As $m < n$, the `information bottleneck' forces the AE to find and exploit redundancies in the training data in order to implicitly model the data distribution\footnote{Although they are rarely thought of in this way, AEs are theoretically equivalent to a clustering algorithm \cite{Baldi2012} in the sense that the encoder learns to map commonly co-occurring inputs to a single internal representation.}. PCA is a special case of an AE with linear activations \cite{Baldi1989}.

Typically AEs are trained with the L1 or L2 reconstruction error. In this work we found that networks trained with the L2 loss $I_2(\bm{x}, \hat{\bm{x}})$ performed consistently better for DA than those trained with $I_1(\bm{x}, \hat{\bm{x}})$. We experimented with fine-tuning our models with the L1 loss as suggested in \cite{Lu2019} but this did not provide a consistent improvement.

AEs are useful in any problems in which a latent data representation is required for a downstream task but they have had success as a standalone solution in anomaly detection \cite{Sakurada2014,Baur2019} and image denoising \cite{Vincent2008,Lore2017}. There are many variants of Autoencoder but we have focused on the image compression CAE literature to guide our AE design. Variational AutoEncoders \cite{Kingma2013} have achieved recent success in a range of generative modelling tasks, particularly NLP \cite{Kusner2017,Pu2016}, and enforce the latent orthogonality condition that we will discuss in Section~\ref{sec:contrib_theory} but they typically produce poorer reconstructions than vanilla AEs (see chapter 20 of \cite{deeplearningbook2017}) so we do not use them in this work. Similarly, Huang et al. proposed a GAN-based AEs to generate visually plausible reconstructions \cite{Huang2019} but there is a lot of work to be done in proving the `correctness' of GAN-generated samples and we were not confident that `visually plausible' in the image domain would translate to scientifically correct in the 3D field domain. Nevertheless, this is certainly another route for future work.

The majority of CAE architectures in the DA literature are in the field of Reduced Order Modelling (ROM) \cite{Wang2017,Merwe2007,Wang2016,Loh2018}. ROMs are only applicable in 4D-VarDA when an online method of generating model forecasts is required. Their architecture reflects this: as this is a sequence-to-sequence problem \cite{Sutskever2014} most modern ROMs use LSTMs or other RNN variants to make their predictions. The result is that these architectures are not applicable in this work. Instead, we have reviewed the image compression literature and implemented a range of state of the art networks from this field. In the remainder of this section we describe the image compression problem and the SOTA networks for this task. The details and results of our architecture search are deferred to Section~\ref{sec:architecture_search}.

\subsubsection{AEs for Image Compression} \label{sec:ImageCompression}

\begin{table}[!htb]
    \let\center\empty
    \let\endcenter\relax
    \centering
    \resizebox{\textwidth}{!}{\begin{tabular}{cccccccccccc}
\toprule
\multicolumn{4}{c}{ \large{\textbf{System details}} }  & \multicolumn{7}{c}{ \large{\textbf{Building Blocks}} } & \large{\textbf{Equivalent model in} }\\
\textbf{Year} & \textbf{CLIC pos.} & \textbf{Authors/Team} & \textbf{Paper }& \textbf{GDNs}\cite{Balle2015}& \textbf{Parallel Filters} \cite{Szegedy2015}& \textbf{Multi-scale} \cite{Ronneberger2015}& \textbf{Attention} \cite{Bahdanau2014}  & \textbf{RDB} \cite{Zhang2020} & \textbf{CBAM} \cite{Woo2018}   & \textbf{RAB} \cite{Zhang2019}& \large{\textbf{section \ref{sec:expt_architecture}}}\\\midrule
2017 & -& Theis et al.  & \cite{Theis2017}  & $\times$& $\times$& $\times$& $\times$& $\times$& $\times$& $\times$& $\sim$  Backbone \\
2018 & -& Mentzer et al.& \cite{Mentzer2018} & $\times$& $\times$& $\times$& $\times$& $\times$& $\times$& $\times$& ResNeXt3-27-1-vanilla\\
2018 & -& Cheng et al.  & \cite{Cheng2018}  & $\times$& \checkmark& $\times$& $\times$& $\times$& $\times$& $\times$& ResNeXt3-3-N-vanilla$^{\mathsection}$ \\
2018 & -& Balle et al.& \cite{Balle2018}  & \checkmark & $\times$& $\times$& $\times$& $\times$& $\times$& $\times$& $\sim$  Tucodec (no RAB)  \\\midrule
2018 & 1st  & Tucodec& \cite{Zhou2018}& \checkmark & \checkmark& \checkmark  & $\times$& $\times$& $\times$& $\times$& Tucodec (no RAB)\\
2018 & 2nd  & iipTiramisu& \cite{Chen2018}& $\times$& $\times$& $\times$& $\times$& \checkmark & $\times$& $\times$& RBD3NeXt-8-1-vanilla \\
2018 & 2nd  & AmnesiackLite\textsuperscript{*} & -& - & -& -  & - & - & - & - & - \\
2018 & 3rd  & ZTESmartVideo\textsuperscript{*} & -& - & -& -  & - & - & - & - & - \\
2018 & 4th  & yfan & \cite{Fan2018}& $\times$& \checkmark& \checkmark  & $\times$& $\times$& $\times$& $\times$& ResNeXt3-4-32-vanilla$^{\ddagger}$\\ \midrule
2019 & 1st  & Tucodec& \cite{Zhou2019}& \checkmark & \checkmark& \checkmark  & \checkmark & $\times$& $\times$& \checkmark & Tuocodec\\
2019 & 2nd& ETRI$^{\dagger}$ & \cite{Cho2019a}& $\times$& \checkmark& $\times$& \checkmark & \checkmark & \checkmark & $\times$& $\sim$ Backbone + GRDN$^{\nabla}$\\
2019 & 2nd  & Joint  & \cite{ZhouJ2019}  & \checkmark & \checkmark& \checkmark  & $\times$& $\times$& $\times$& $\times$& Tucodec (no RAB)\\
2019 & 3rd  & NJUVisionPSNR & \cite{Lu2019}& $\times$& \checkmark& \checkmark  & $\times$& $\times$& $\times$& $\times$& $\sim$  Tucodec (no RAB)  \\
2019 & 4th  & Vimicro& \cite{Li2019}& \checkmark & \checkmark& \checkmark  & $\times$& $\times$& $\times$& $\times$& Tucodec (no RAB)\\ \bottomrule
\end{tabular}
}
    \caption{\label{tab:CLIC_res}Successful CLIC entries and their precursors for 2018 and 2019. CLIC uses multiple compression metrics (PSNR, MS-SIMM etc) and while success on one metric generally implies success on another, this is not always the case. When there is ambiguity, we have given the highest positioning a system achieved. In the final column we have provided the name of the system according to the naming conventions we define in Section~\ref{sec:contrib_architecture} ---we implemented and evaluated all of the named variants. In this column a `$\sim$' symbol implies our implemented system is similar, but not equivalent, to the model in question.\\
    \tiny \textsuperscript{*} To our knowledge, these teams did not produce a publication detailing their approach.\\
    $^\dagger$ This system was also the fastest in 2019.\\
    $^\mathsection$The original model did not have skip-connections but it otherwise identical to our `ResNeXt3-3-N-vanilla'.\\
    $^\ddagger$The authors use `wide-activated residual blocks' in which the channel size is increased by a factor of 4 internally within the block. The ResNeXt authors \cite{Xie2017a} have shown that this is equivalent to their system with a cardinality of 4. \\
    $^\nabla$This system was not an end-to-end AE but used the VVC compression standard \cite{VVC2019}. 
    }
\end{table} 

In lossy Image Compression (IC), an image is contracted to a more space-efficient representation with a small loss of information. Since 2017, CAE-based IC systems have started to outperform traditional IC algorithms \cite{Theis2017} such as JPEG and JPEG 2000\footnote{In fact, a new \textit{lossless} CAE-based IC format L3C was proposed in May 2019 that also outperforms traditional lossless methods \cite{Mentzer2019}.}. There are a few key differences between the IC problem and ours\footnote{Specifically, that the encoder output in an IC system is a bitstream while in our case it is simply a vector of floats. This adds complexity in comparison with our system in a number of ways: firstly, there is a tradeoff in IC between the compressed size and quality so most systems are trained with a multi-task loss of bitstream entropy and reconstruction error; secondly, the quantization operation is non-differentiable and therefore the systems cannot be trained by backpropagation directly; and finally these systems require `importance maps' \cite{Li2018}, a form of attention to determine how many bits should be allocated to each region of the image.} but the similarities mean that the IC-CAE literature provides a useful starting point. Crucially, unlike physical-field compression, the IC problem has received a great deal of attention from the machine learning community. For example, CLIC or `Challenges on Learned Image Compression' runs annually during CVPR to find the state of the art in lossy IC. We used the papers of the CLIC winners and runners up for the years 2018 \cite{Zhou2018,Chen2018,Fan2018} and 2019 \cite{Zhou2019,Fan2018,Cho2019a,ZhouJ2019,Lu2019,Li2019} and followed their citations to give approximately 40 relevant machine learning papers. The successful entrants used a variety of architectural components including attention-based models \cite{Bahdanau2014}, GDNs (Generalised Divisive Normalisation transformations) \cite{Balle2015}, multi-scale learning \cite{Ronneberger2015} and complex residual blocks (RBs) \cite{He2016,Xie2017a,Zhang2020,Woo2018,Zhang2019} in order to improve their compression quality. A summary of these CAE architectures and an overview of their constituent elements is given in Table~\ref{tab:CLIC_res}. We have implemented a variant on all of these systems according to a framework described in Section~\ref{sec:contrib_architecture}. 

Of particular note, are the \textit{Tucodec} team who came first in CLIC-2018 and CLIC-2019. In 2019, two of the top-five finishing teams (\textit{Joint} \cite{ZhouJ2019} and \textit{Vimicro} \cite{Li2018}) used a network that was virtually identical to \textit{Tucodec}'s entry from the previous year. However, the Tucodec team improved on their previous design with the addition of Residual Attention Blocks (RABs)\footnote{We note that the \textit{Tucodec} authors assert incorrectly throughout their paper that they are using \textit{non-local} attention blocks or `RNABs' instead of the simpler local-attention RABs (proposed in the same paper by \cite{Zhang2019}). We have used RABs in our implementation of their system.} and won the competition for a second year running. In our experiments in Section~\ref{sec:expt_architecture} we found that our 3D extension of their 2019 system (see \fref{fig:zhou2019}) performed consistently better than the other architectures that we considered. 

\begin{figure}[!htb]
    \center{\includegraphics[width=.7\textwidth]{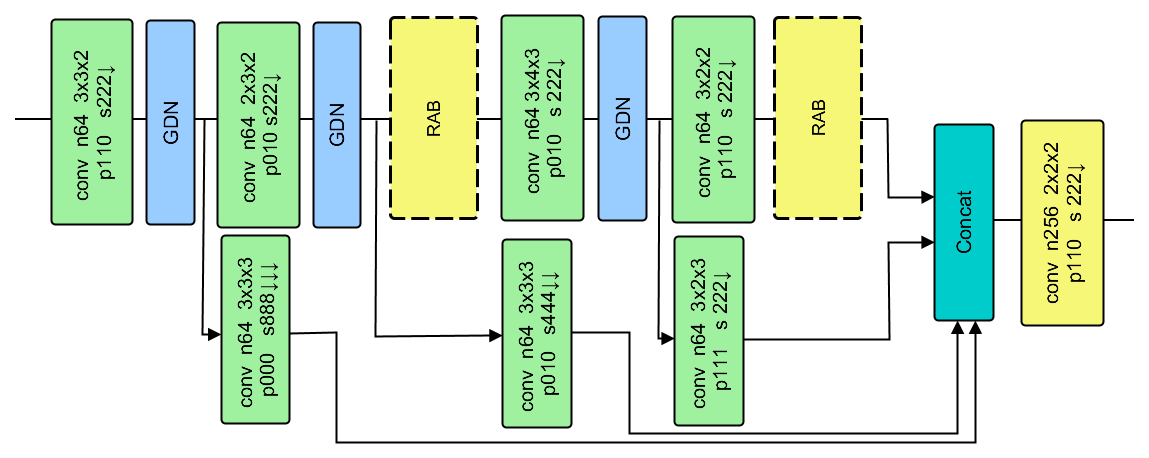}}
    \caption{\label{fig:zhou2019}The \textit{Tucodec} encoder \cite{Zhou2019}. The decoder is symmetric except that it does not have the multi-scale path. All parameters are specific to our 3D implementation. }
\end{figure}
      

\section{`Bi-reduced space' formulation of DA} \label{sec:contrib_theory}
In this section we describe our proposed DA formulation with AEs and discus its computational cost. The new approach involves non-trivial changes to the incremental CVT formulation in (\ref{eq:3d_preconditioned}). A subtlety to note is that there are two `reduced' spaces in this case: the reduced space of size $S$ introduced by the CVT and the reduced space of size $m$ introduced by the encoder-decoder framework. As our method utilises both of these spaces we refer to it as the `bi-reduced space' formulation. To avoid confusion, we describe the AE space as the `latent space' (denoted with $\mathbf{z}$) and the CVT space as the `reduced space' (denoted with $\mathbf{w}$).

In Section \ref{sec:def_const}, we define a number of quantities required for our formulation in \ref{sec:proposed_formulation} and introduce a series of constraints and assumptions under-which we can show (in Section \ref{sec:equivalence}) that the proposed approach is equivalent to the Parish et al. CVT formulation \cite{Parrish1992}.

\subsection{Definitions and constraints} \label{sec:def_const}

\subsubsection{Constraint: mean-centred data}
All data is mean-centred with respect to the historical mean $\bar{\bm{x}}^b$. As this sets $\bm{x}^b = \bm{0}$ for all equations in Section~\ref{sec:background},  (\ref{eq:AE_def}) has multiple equivalent expressions:
\begin{align}
    \begin{split}
         f(\delta\bm{x}_i) &= f(\bm{x}_i - \bm{x}^b) = f(\bm{x}_i) = \mathbf{z}_i \\
    g(\delta\bm{z}_i) &= g(\mathbf{z}_i) = \bm{x}_i - \bm{x}^b= \bm{x}_i  \\
    \end{split}
\end{align}
This constraint is necessary for Section~\ref{sec:equivalence} but has the added benefit of ensuring that encoder inputs are symmetrically distributed about $\bm{0}$. The absolute state can be reconstructed by adding $\bar{\bm{x}}^b$ to the mean-centred state. Whenever the DA performance is evaluated in this paper the absolute state is used. 

\subsubsection{Definitions}
We define a series of quantities below where a subscript $l$ implies the matrix or vector has been replaced by its latent-space equivalent:
\begin{itemize}
    \item Let $f^o(\cdot)$ be the `observation encoder' which maps from the full observation space of size $M$ to the reduced observation space of size $M_l$ such that:
\begin{align}
    f^o(\cdot) & \coloneqq \bm{H}_l f  \bm{H}^+ (\cdot) \in \mathbb{R}^{M_l \times M} \\
    f^o(\cdot) &= \ \ f  \bm{H}^+ (\cdot)  \ \ \ \in \mathbb{R}^{m \times M}  \label{eq:foReduced}
\end{align}
where $\bm{H}^+ \in \mathbb{R}^{n \times M}$ is the generalised inverse of $\bm{H}$ which maps from the observation space to the full space and $\bm{H}_l f\bm{H}^+(\cdot)$ is the compound operator where the three transforms are applied in the order $\bm{H}^+, f, \bm{H}_l$ to an observation $ \in \mathbb{R}^{M} $. This is an under-determined problem so there will be many equivalent $\bm{H}^+$ operators. $\bm{H}_l  \in \mathbb{R}^{M_l \times m}$ is the latent space observation operator which maps from the latent space of size $m$ to the latent observation space of size $M_l$. We have implicitly defined $\bm{H}_l \coloneqq \bm{I}$ which implies $M_l = m$. Another way of thinking of this is that the entire latent space is observable to us.
\item Let $\bm{V}_l$ be the latent space equivalent of $\bm{V}$ such that:
\begin{align}\label{eq:V_l}
\begin{split}
    \bm{V}_l = f(\bm{V}) &= \big[f(\delta \bm{x}_0^b),  \ f(\delta \bm{x}_1^b),  \ ...,  \ f(\delta \bm{x}_S^b)\big]  \\
   \ \ \ \bm{V}_l &= \big[\mathbf{z}_0^b,  \ \mathbf{z}_1^b,  \ ...,  \ \mathbf{z}_S^b\big] \in \mathbb{R}^{ m \times S} \\
\end{split}
\end{align}
Note that we are representing the information in matrix $\bm{V}$ of size  $ \mathbb{R}^{ n \times S}$ in a matrix of size  $\mathbb{R}^{ m \times S}$. In our implementation $m \sim 0.001n$ so this is an $\mathcal{O}(10^3)$ reduction in the size of our background covariance representation.
\item Let $\bm{d}_l$ be the latent misfit such that:
\begin{equation} \label{eq:latent_d}
    \bm{d}_l \coloneqq  f^o (\bm{d})  \in \mathbb{R}^{ m }
\end{equation}
\item Let $\bm{R}_l$ be the latent observation error covariance. Recall that the full-space covariance $\bm{R}$ is computed over the observation error $\bm{\epsilon} = \bm{H}\bm{x}^b - \bm{y}$ and as such, for observations $\bm{y}$, can be calculated as $\bm{R} = \mathbb{E}[ \ \bm{\epsilon} \bm{\epsilon}^T \ ]$. We define an equivalent observation covariance matrix $\bm{R}$ such that:
\begin{align}
    \bm{R}_l &\coloneqq \mathbb{E}[ \ \bm{\epsilon}_l \bm{\epsilon}_l^T \ ] \label{eq:R_l_1}\\
    \text{where      } \ \ \ \bm{\epsilon}_l &= f^o (\bm{\epsilon})
\end{align}

\end{itemize}

\subsubsection{Assumption: Orthonormal latent features}
If the full-space observation errors $\bm{\epsilon}$ are uncorrelated (and $\bm{R} = \sigma_0^2 \bm{I}$), then the latent observation errors $\bm{\epsilon}_l$ will also be uncorrelated as they are derived directly from these full space errors giving:
\begin{align}
     \bm{R}_l &= \sigma_l^2 \bm{I} \label{eq:R_l_2}
\end{align}
where $\sigma_l$ is the latent observation error standard deviation. However, we note that the uncorrelated observation assumption breaks down as the observation locations move closer to one another (i.e.\ as $M$ increases). In this work we consider the case in which $M=n$, pushing this traditional DA assumption to breaking point. Nevertheless, (\ref{eq:R_l_2}) will hold in all scenarios if we impose the additional constraint that all latent features are orthonormal to one another. It is worth making a brief comment on the conditions under which this will be true:
\begin{enumerate}
    \item Latent dimensions are not orthogonal in the general case but they will be \linebreak `approximately' perpendicular. To see why intuitively: if an AE is producing a good reconstruction, the latent hyperplane must span the majority of the data distribution manifold. If it is able to do this with a very small number of latent features then it must be rare for latent features to double-up and span identical areas of the data distribution. 
    \item It is possible to enforce latent orthogonal features with the use of a Variational Autoencoder \cite{Doersch2016}. This approach is not explored in this paper for reasons discussed in Section~\ref{sec:architecture_search} but this is a clear avenue for future work. 
    \item To obtain unit length features, batch normalisation could be used in the encoder. However, like Chen et al. \cite{Chen2018}, we found that batch normalisation greatly hampered the AEs ability to produce good reconstructions and hence we did not use it in the backbone network of our models. For this reason, we do not make statements on the relation between the magnitude of $\sigma_0$ and $\sigma_l$ (although in practice we treat them as equal). 
\end{enumerate}

\subsubsection{Observation encoder in practice}
In the general case, calculation of the latent misfit $\mathbf{d}_l$ in (\ref{eq:latent_d}) requires evaluation of the observation encoder operator $f^o$ on the full-observation space:
\begin{align}
    \bm{d}_l &=  f^o (\bm{d})\\
    \bm{d}_l &= f^o  (\bm{y} - \bm{H} \bm{x}^b)
\end{align}
where as a result of the mean-centering constraint we have:
\begin{equation}\label{eq:dl_final}
    \bm{d}_l = f^o (\bm{y} )   
\end{equation}
We hypothesise that $f^o(\bm{y})$ could be modelled with a CAE framework that adds an implicit interpolation network $\bm{H}^+$ to the trained encoder network $f (\cdot)$ as shown in \fref{fig:f^o_scheme}. This system could be trained with the reconstruction error over $\bm{y}$ and $\hat{\bm{y}} = \bm{H}f^o(\bm{y})$. 

\begin{figure}[!htb]
        \center{\includegraphics[width=0.6\textwidth]
        {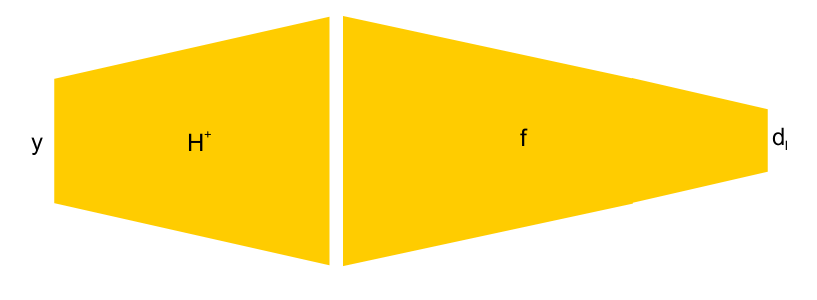}}
        \caption{\label{fig:f^o_scheme}Scheme to create the $f^o (\cdot)$ operator. This network is trained end-to-end with the observation reconstruction error. $\bm{H}^+$ is a convolutional network and $f(\cdot)$'s weights are fine-tuned to create $f^o (\cdot) = f\bm{H}^+(\cdot)$.}
      \end{figure}

However, in this work we are using synthetic data meaning that the full state is available as observation. As such, we sidestep this complexity and use: 
\begin{equation}
    \bm{H} = \bm{H}^+ = \bm{I}
\end{equation} 
which implies: 
\begin{align}
    \begin{split}
        \bm{d}_l &=  f\bm{H}^+(\bm{y})\\
        \bm{d}_l &\equiv  f(\bm{y}) \equiv f(\bm{x}^{obs}) \equiv \bm{z}^{obs} \\
        \text{where} \ \ \ \bm{y} &= \bm{x}^{obs}\in \mathbb{R}^n
    \end{split}
\end{align}
In other words, we use the encoder to obtain our latent misfit. Verifying the feasibility of the scheme in \fref{fig:f^o_scheme} is a necessary criterion for using the proposed approach operationally. 

\subsection{Proposed 3D-VarDA formulation}\label{sec:proposed_formulation}
Our proposed bi-reduced space formulation is:
\begin{align} \label{eq:3d_latent}
    \begin{split}
        &\mathbf{w}^{DA}_l = \argmin_{\mathbf{w}_l} J(\mathbf{w}_l) \\
    J(\mathbf{w}_l)  = \frac{1}{2} &\mathbf{w}_l^T\mathbf{w}_l + \frac{1}{2}\norm{\bm{d}_l - \bm{V}_l\mathbf{w}_l}^2_{\bm{R}_l^{-1}}
    \end{split}
\end{align}
Once this has been minimised in the reduced space the result can be restored to the full space in a two-stage transformation:
\begin{enumerate}
    \item Multiplication by $\bm{V_l}$ to move from the reduced space representation $\in \mathbb{R}^S$ to the latent space $\in \mathbb{R}^m$.
    \item Applying the decoder $g(\cdot)$ to move from the latent space to the full space $\in \mathbb{R}^n$.
\end{enumerate}

 Overall this gives: 
\begin{equation}
    \delta \bm{x}^{DA} = g(\bm{V_l}\mathbf{w}_l^{DA})
\end{equation}

Note that, allowing for the different definitions of the latent variables $\bm{V}_l,  \ \bm{d}_l$ and $\bm{R}_l$,\linebreak the formulation in (\ref{eq:3d_latent}) is identical to the mono-reduced space formulation in (\ref{eq:3d_preconditioned}). As such, we were able to use the same cost-function, gradient and minimisation implementations when comparing the two approaches\footnote{This gave us confidence that the execution time comparisons in Section~\ref{sec:expt_TSVDvsAE} are not biased towards either method as the result of implementation details.}.


\subsection{Proof of equivalence} \label{sec:equivalence}
Our proposed formulation is equivalent to the mono-space formulation in (\ref{eq:3d_preconditioned}) in that:
\begin{equation} \label{eq:equiv}
    \mathbf{w}^{DA} = \mathbf{w}_l^{DA}
\end{equation} 
This is true under three assumptions:
\begin{enumerate}
    \item For high-performing autoencoder, we assume that the AE compression is lossless meaning $g(f(\bm{x})) =  \bm{x}$. 
    \item All features in the latent representation $\mathbf{z}$ are orthonormal as discussed above.
    \item The \textit{observation} space contains sufficient information to construct a good approximation of the full space $\bm{x}$. This is a necessary condition in the creation of the $f^o$ operator in \fref{fig:f^o_scheme} and is more likely to hold when $M$ is large. This assumption is discussed in more detail in the proof of Lemma 5 below.
\end{enumerate}

We will state and prove a series of lemmas to produce the result in (\ref{eq:equiv}). 
\begin{lemma}
Let $\mathbf{w}^{DA}$ and $\mathbf{w}^{DA}_l$ denote the solutions of \eqref{eq:3d_preconditioned} and \eqref{eq:3d_latent} respectively, we have that 

\begin{equation}
\mathbf{w}^{DA} = (\bm{I} + \bm{V}^T \bm{H}^T \bm{R}^{-1} \bm{H} \bm{V})^{-1} \bm{V}^T \bm{H}^T \bm{R}^{-1}\bm{d}
\end{equation}
and 
\begin{equation}
    \mathbf{w}^{DA}_l = (\bm{I} + \bm{V}_l^T  \bm{R}_l^{-1}  \bm{V}_l)^{-1} \bm{V}_l^T \bm{R}_l^{-1}\bm{d}_l
    \end{equation}

{\bf Proof}: 
The gradient of (\ref{eq:3d_preconditioned}) is:
\begin{align*}
    \nabla J(\mathbf{w})  &= 
     \mathbf{w} - \bm{V}^T\bm{H}^T\bm{R}^{-1} \big(\bm{d} - \bm{H}\bm{V}\mathbf{w} \big)
\end{align*}
Setting this to zero and solving for $\mathbf{w}$ will give the optimal value $\mathbf{w}^{DA}$ as required to complete the proof: 
\begin{align*}
     \bm{V}^T\bm{H}^T\bm{R}^{-1} \bm{d}  &=
     \big( \bm{I} + \bm{V}^T \bm{H}^T \bm{R}^{-1} \bm{H} \bm{V}\big)\mathbf{w}^{DA}  \\
     \mathbf{w}^{DA} &= (\bm{I} + \bm{V}^T \bm{H}^T \bm{R}^{-1} \bm{H} \bm{V})^{-1} \bm{V}^T \bm{H}^T \bm{R}^{-1}\bm{d}
\end{align*}
\end{lemma}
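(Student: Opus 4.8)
The plan is to treat \eqref{eq:3d_preconditioned} and \eqref{eq:3d_latent} as what they are: strictly convex quadratic programs in the reduced variable (the $\tfrac12\mathbf{w}^T\mathbf{w}$ term alone already forces strict convexity, irrespective of the data term), so that the unique global minimiser is characterised by the first-order stationarity condition $\nabla J = \bm{0}$. For the mono-reduced problem I would expand $J(\mathbf{w})$, differentiate using the symmetry of $\bm{R}^{-1}$, and obtain
\begin{equation*}
\nabla J(\mathbf{w}) = \mathbf{w} - \bm{V}^T\bm{H}^T\bm{R}^{-1}\bigl(\bm{d} - \bm{H}\bm{V}\mathbf{w}\bigr).
\end{equation*}
Setting this to $\bm{0}$ and collecting the terms that are linear in $\mathbf{w}$ yields the normal equations $\bigl(\bm{I} + \bm{V}^T\bm{H}^T\bm{R}^{-1}\bm{H}\bm{V}\bigr)\mathbf{w}^{DA} = \bm{V}^T\bm{H}^T\bm{R}^{-1}\bm{d}$, and the claimed closed form follows by left-multiplying through by the inverse.

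The only point that genuinely needs an argument — rather than a real obstacle — is that the matrix being inverted is actually invertible. Since $\bm{R}$ is a covariance matrix it is positive definite (concretely $\bm{R} = \sigma_0^2\bm{I}$, see \eqref{eq:R_def}), so $\bm{V}^T\bm{H}^T\bm{R}^{-1}\bm{H}\bm{V} = (\bm{H}\bm{V})^T\bm{R}^{-1}(\bm{H}\bm{V})$ is positive semidefinite; hence $\bm{I} + \bm{V}^T\bm{H}^T\bm{R}^{-1}\bm{H}\bm{V}$ has all eigenvalues $\geq 1$ and is invertible. The same observation shows the Hessian of $J$ is positive definite, confirming that the stationary point is the unique global minimiser rather than a saddle or a maximum, which is what legitimises calling it $\mathbf{w}^{DA}$.

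For the latent problem I would not repeat the computation from scratch. The cost function $J(\mathbf{w}_l) = \tfrac12\mathbf{w}_l^T\mathbf{w}_l + \tfrac12\norm{\bm{d}_l - \bm{V}_l\mathbf{w}_l}^2_{\bm{R}_l^{-1}}$ is obtained from $J(\mathbf{w})$ under the substitutions $\bm{H}\mapsto\bm{H}_l = \bm{I}$, $\bm{V}\mapsto\bm{V}_l$, $\bm{d}\mapsto\bm{d}_l$, $\bm{R}\mapsto\bm{R}_l$, and $\bm{R}_l = \sigma_l^2\bm{I}$ is again positive definite by the orthonormal-latent-features assumption \eqref{eq:R_l_2}. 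Consequently the identical derivation applies verbatim and gives $\mathbf{w}^{DA}_l = \bigl(\bm{I} + \bm{V}_l^T\bm{R}_l^{-1}\bm{V}_l\bigr)^{-1}\bm{V}_l^T\bm{R}_l^{-1}\bm{d}_l$. In summary there is no hard step: the lemma is a direct normal-equations calculation, and its purpose is purely to place both DA solutions in a common algebraic form so that the subsequent lemmas can compare them term by term and conclude \eqref{eq:equiv}.
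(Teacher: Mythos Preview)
Your proposal is correct and follows essentially the same approach as the paper: compute $\nabla J(\mathbf{w})$, set it to zero, and solve the resulting linear system, then obtain the latent formula by the substitution $\bm{H}\mapsto\bm{H}_l=\bm{I}$, $\bm{V}\mapsto\bm{V}_l$, $\bm{d}\mapsto\bm{d}_l$, $\bm{R}\mapsto\bm{R}_l$. The only difference is that you add explicit justifications for invertibility and strict convexity that the paper leaves implicit.
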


As the cost functions in \eqref{eq:3d_preconditioned} and \eqref{eq:3d_latent}  are mathematically equivalent where each operator is replaced by its latent equivalent in the bi-reduced space formulation and $\bm{H}_l = \bm{I}$, we can write the exact solutions of $\mathbf{w}^{DA}$ and $\mathbf{w}^{DA}_l$ in the following form:
\begin{align*}
    \mathbf{w}^{DA} &= (\bm{I} + \bm{A})^{-1} \bm{b} \\
    \mathbf{w}_l^{DA} &= (\bm{I} + \bm{A}_l)^{-1} \bm{b}_l 
       \end{align*}
    where, the matrices $\bm{A}$ and $\bm{A}_l$ are such that:
    \begin{equation}\label{eq:matrixA}
        \bm{A} = \bm{V}^T \bm{H}^T \bm{R}^{-1} \bm{H} \bm{V},
    \end{equation}
    \begin{equation}\label{eq:matrixAl}
        \bm{A}_l = \bm{V}_l^T  \bm{R}_l^{-1}  \bm{V}_l
    \end{equation}
    and $\bm{b}$ and $\bm{b}_l$ are:
    \begin{equation}\label{eq:matrixb}
         \bm{b} = \bm{V}^T \bm{H}^T \bm{R}^{-1}\bm{d}
    \end{equation}
    \begin{equation}\label{eq:matrixbl}
        \bm{b}_l = \bm{V}_l^T \bm{R}_l^{-1}\bm{d}_l
    \end{equation}
\begin{lemma}\label{lemma:AandAl}
Let $\bm{A}$ and $\bm{A}_l$ be the matrices defined in \eqref{eq:matrixA} and in \eqref{eq:matrixAl} respectively. The following result held:
    \begin{equation}\label{eq:A_l_A_lemma}
        \bm{A} =\bm{A}_l
    \end{equation}
\end{lemma}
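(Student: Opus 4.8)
The plan is to collapse both $\bm{A}$ and $\bm{A}_l$ to scalar multiples of the same matrix $\bm{V}_l^T\bm{V}_l$ using the three standing assumptions, and then check that the two scalars agree. First I would insert the simplifications that hold in this setting. Since the full state is observed, $\bm{H}=\bm{H}^+=\bm{I}$, and since observation errors are uncorrelated, $\bm{R}=\sigma_0^2\bm{I}$ by \eqref{eq:R_def}; hence \eqref{eq:matrixA} becomes $\bm{A}=\sigma_0^{-2}\,\bm{V}^T\bm{V}$. Likewise, by the orthonormal-latent-features assumption $\bm{R}_l=\sigma_l^2\bm{I}$ (equation \eqref{eq:R_l_2}), so \eqref{eq:matrixAl} becomes $\bm{A}_l=\sigma_l^{-2}\,\bm{V}_l^T\bm{V}_l$. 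The claim is therefore equivalent to $\sigma_0^{-2}\,\bm{V}^T\bm{V}=\sigma_l^{-2}\,\bm{V}_l^T\bm{V}_l$.

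Next I would relate $\bm{V}$ to $\bm{V}_l$. By the lossless-compression assumption $g(f(\cdot))$ is the identity on model states, and the columns of $\bm{V}$ are exactly such states (the $\delta\bm{x}_i^b$ of \eqref{eq:V_full}), so $\bm{V}=g(f(\bm{V}))=g(\bm{V}_l)$ with $\bm{V}_l=f(\bm{V})$ as in \eqref{eq:V_l}. Treating the decoder as (locally) linear on the data manifold --- write $\bm{G}$ for its matrix, whose columns are precisely the ``latent features'' --- gives $\bm{V}=\bm{G}\bm{V}_l$ and hence $\bm{V}^T\bm{V}=\bm{V}_l^T(\bm{G}^T\bm{G})\bm{V}_l$. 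The orthonormal-latent-features assumption is exactly $\bm{G}^T\bm{G}=\bm{I}_m$, so $\bm{V}^T\bm{V}=\bm{V}_l^T\bm{V}_l$.

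Finally I would pin down the constants. Since the encoder inverts the decoder, $f$ has matrix $\bm{G}^+=\bm{G}^T$ (using $\bm{G}^T\bm{G}=\bm{I}_m$), so the latent observation error $\bm{\epsilon}_l=f^o(\bm{\epsilon})=f(\bm{\epsilon})=\bm{G}^T\bm{\epsilon}$ has covariance $\bm{R}_l=\bm{G}^T\bm{R}\,\bm{G}=\sigma_0^2\,\bm{G}^T\bm{G}=\sigma_0^2\bm{I}_m$, i.e.\ $\sigma_l=\sigma_0$. Chaining the last three observations yields $\bm{A}=\sigma_0^{-2}\,\bm{V}^T\bm{V}=\sigma_0^{-2}\,\bm{V}_l^T\bm{V}_l=\sigma_l^{-2}\,\bm{V}_l^T\bm{V}_l=\bm{A}_l$, which is \eqref{eq:A_l_A_lemma}.

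The main obstacle, and the only step that is not mere bookkeeping, is the passage from the nonlinear encoder/decoder to a linear, inner-product-preserving map on the part of input space where the data lives. The assumptions as literally stated only control $g\circ f$ pointwise on data and the orthonormality of the latent directions; turning that into the identity $g(\bm{a})^T g(\bm{b})=\bm{a}^T\bm{b}$ for the columns of $\bm{V}_l$ relies on the (reasonable but informal) picture that a high-performing AE with few orthonormal latent features realises the data manifold as an affine subspace on which $f$ and $g$ act as mutually inverse orthogonal coordinate maps. I would state this linearisation explicitly as the operative reading of the assumptions rather than bury it, since everything else follows by direct substitution.
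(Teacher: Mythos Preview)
Your argument is correct under the same informal linearisation that the paper itself relies on, but the route differs from the paper's. The paper first establishes two auxiliary identities, $\bm{V}_l = f^o\bm{H}\bm{V}$ (Lemma~\ref{lemma:VlV}) and $\bm{R}_l^{-1} = ((f^o)^T)^{-1}\bm{R}^{-1}(f^o)^{-1}$ (Lemma~\ref{lemma:RlR}), substitutes both into $\bm{A}_l=\bm{V}_l^T\bm{R}_l^{-1}\bm{V}_l$, and then cancels $(f^o)^T((f^o)^T)^{-1}$ and $(f^o)^{-1}f^o$ to recover $\bm{V}^T\bm{H}^T\bm{R}^{-1}\bm{H}\bm{V}=\bm{A}$. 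In particular it keeps $\bm{H}$ and $\bm{R}$ symbolic and never reduces to scalar covariances; the same two lemmas are then reused verbatim to prove $\bm{b}=\bm{b}_l$.

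Your approach instead specialises immediately to $\bm{H}=\bm{I}$, $\bm{R}=\sigma_0^2\bm{I}$, $\bm{R}_l=\sigma_l^2\bm{I}$, linearises the \emph{decoder} as an isometry $\bm{G}$ with $\bm{G}^T\bm{G}=\bm{I}_m$, and obtains $\bm{V}^T\bm{V}=\bm{V}_l^T\bm{V}_l$ together with $\sigma_l=\sigma_0$. What you gain is transparency: you name the operative step (treating the AE as an orthogonal coordinate map on the data manifold) rather than writing an inverse of a non-square nonlinear map, and you actually derive $\sigma_l=\sigma_0$, which the paper explicitly declines to do. What you lose is a little generality --- the paper's cancellation works formally for non-scalar $\bm{R}$ and general $\bm{H}$ --- and modularity, since the paper's Lemmas~\ref{lemma:VlV}--\ref{lemma:RlR} are set up precisely so that both $\bm{A}=\bm{A}_l$ and $\bm{b}=\bm{b}_l$ drop out by the same substitution.
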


\begin{lemma}\label{lemma:bandbl}
Let $\bm{b}$ and $\bm{b}_l$ be the matrices defined in \eqref{eq:matrixb} and in \eqref{eq:matrixbl} respectively. The following result held:
    \begin{equation}\label{eq:b_l_b_lemma}
         \bm{b} =\bm{b}_l
    \end{equation}
\end{lemma}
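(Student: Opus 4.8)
The plan is to prove Lemma~\ref{lemma:bandbl} by the same linear-algebraic route as Lemma~\ref{lemma:AandAl}, using that $\bm{b}$ and $\bm{b}_l$ are assembled from exactly the same pieces as $\bm{A}$ and $\bm{A}_l$ but with one factor of $\bm{V}$ (resp.\ $\bm{V}_l$) replaced by the misfit $\bm{d}$ (resp.\ $\bm{d}_l$). Under the three assumptions of Section~\ref{sec:equivalence} I would treat the encoder $f$ as acting, on the subspace spanned by the physical states and background perturbations, like a linear isometry $\bm{F}$ onto the latent space $\mathbb{R}^m$: the lossless-compression assumption gives $g(f(\bm{u}))=\bm{u}$ for every such $\bm{u}$, and the orthonormal-latent-features assumption forces the decoder to be the adjoint of $\bm{F}$, so that $\bm{F}^T\bm{F}$ is the orthogonal projector $\bm{P}$ onto the $m$-dimensional data subspace, with $\bm{P}\bm{u}=\bm{u}$ on that subspace. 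In the synthetic-data setting we additionally have $\bm{H}=\bm{H}^+=\bm{I}$, $\bm{R}=\sigma_0^2\bm{I}$ and, by \eqref{eq:R_l_2}, $\bm{R}_l=\sigma_l^2\bm{I}$ with $\sigma_l=\sigma_0$ per the paper's convention.

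First I would rewrite $\bm{b}_l$ in full-space terms: from \eqref{eq:V_l} we have $\bm{V}_l=f(\bm{V})=\bm{F}\bm{V}$, from \eqref{eq:latent_d} and \eqref{eq:foReduced} we have $\bm{d}_l=f^o(\bm{d})=f\bm{H}^+(\bm{d})=\bm{F}\bm{d}$, and $\bm{R}_l^{-1}=\sigma_0^{-2}\bm{I}$, whence $\bm{b}_l=\bm{V}_l^T\bm{R}_l^{-1}\bm{d}_l=\sigma_0^{-2}\bm{V}^T\bm{F}^T\bm{F}\bm{d}=\sigma_0^{-2}\bm{V}^T\bm{P}\bm{d}$. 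The crucial step is then to observe that every column $\delta\bm{x}_i^b$ of $\bm{V}$ is a physical background perturbation, so the lossless assumption gives $g(f(\delta\bm{x}_i^b))=\delta\bm{x}_i^b$, i.e.\ $\bm{P}\delta\bm{x}_i^b=\delta\bm{x}_i^b$; hence $\bm{P}\bm{V}=\bm{V}$ and therefore $\bm{V}^T\bm{P}=\bm{V}^T$. Substituting yields $\bm{b}_l=\sigma_0^{-2}\bm{V}^T\bm{d}=\bm{V}^T\bm{H}^T\bm{R}^{-1}\bm{d}=\bm{b}$, which is \eqref{eq:b_l_b_lemma}. Assumption~3 --- that the observation space carries enough information to reconstruct the full state --- is what makes $\bm{d}_l=f^o(\bm{d})$ the genuine latent image of $\bm{d}$ (and guarantees $\bm{P}\bm{d}=\bm{d}$ as well); when $\bm{H}=\bm{I}$ this is automatic, but I would keep it explicit so the argument also covers the general $f^o=\bm{H}_l f\bm{H}^+$ construction of \fref{fig:f^o_scheme}.

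The main obstacle is not the algebra but the justification of the linearisation: a CAE is genuinely nonlinear, so identities such as $\bm{V}_l=\bm{F}\bm{V}$ and ``$\bm{F}^T\bm{F}$ is a projector'' hold only on the data manifold and only to the extent that the lossless and orthonormality assumptions hold. I would treat this exactly as Lemma~\ref{lemma:AandAl} does --- argue at the level of the idealised assumptions --- and emphasise that the entries of $\bm{b}$ and $\bm{b}_l$ are inner products of precisely the same type as the entries of the Gram matrices $\bm{A}$ and $\bm{A}_l$, so once Lemma~\ref{lemma:AandAl} is granted, $\bm{b}=\bm{b}_l$ follows by the identical reasoning with $\bm{d}$ in the role of one column of $\bm{V}$. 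A secondary loose end is the relation between $\sigma_l$ and $\sigma_0$, which the paper deliberately leaves open and treats as equal; I would simply note that $\bm{b}=\bm{b}_l$ holds modulo that convention.
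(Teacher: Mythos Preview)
Your argument is correct, but it takes a different route from the paper's own proof. The paper proves Lemma~\ref{lemma:bandbl} exactly as it proves Lemma~\ref{lemma:AandAl}: it invokes Lemma~\ref{lemma:VlV} ($\bm{V}_l=f^o\bm{H}\bm{V}$) and Lemma~\ref{lemma:RlR} ($\bm{R}_l^{-1}=((f^o)^T)^{-1}\bm{R}^{-1}(f^o)^{-1}$), substitutes into $\bm{b}_l=\bm{V}_l^T\bm{R}_l^{-1}\bm{d}_l$, and then cancels $(f^o)^T((f^o)^T)^{-1}$ and $(f^o)^{-1}f^o$ formally to recover $\bm{V}^T\bm{H}^T\bm{R}^{-1}\bm{d}=\bm{b}$. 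No projector, no explicit isometry, no specialisation to $\bm{H}=\bm{I}$ or $\bm{R}=\sigma_0^2\bm{I}$.

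The contrast is instructive. Your route linearises the encoder as $\bm{F}$, interprets $\bm{F}^T\bm{F}$ as the orthogonal projector $\bm{P}$ onto the data subspace, and then uses the lossless assumption to get $\bm{V}^T\bm{P}=\bm{V}^T$. This is more transparent about \emph{where} each of the three assumptions of Section~\ref{sec:equivalence} enters, and it makes clear that the ``cancellation'' only works on the data manifold. The paper's route is shorter and keeps $\bm{H}$ and $\bm{R}$ general, but it treats $f^o:\mathbb{R}^M\to\mathbb{R}^m$ as if it were invertible; the cancellations $(f^o)^{-1}f^o=\bm{I}$ are really the same projector identity you use, just left implicit. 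A further difference: by going through Lemma~\ref{lemma:RlR} the paper never needs the separate convention $\sigma_l=\sigma_0$ that you flag as a loose end, since the relation between $\bm{R}_l$ and $\bm{R}$ is already encoded there.
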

The overall result in equation~\eqref{eq:equiv} follows directly from Lemmas~\ref{lemma:AandAl} and \ref{lemma:bandbl} but to prove these we need two further results in Lemmas~\ref{lemma:VlV} and \ref{lemma:RlR}.
\begin{lemma}\label{lemma:VlV}
Let $\bm{V}_l$ be the reduced matrix as defined in \eqref{eq:V_l}, $f^o$ defined in \eqref{eq:foReduced}, $\bm{H}$ as in \eqref{eq:misfitH} and $\bm{V}$ as in \eqref{eq:V_full}. The followind result held:
    \begin{equation}\label{eq:V_l_V_lemma}
        \bm{V}_l = f^o \bm{H} \bm{V} 
    \end{equation}
    \begin{proof}
    \textbf{Proof of Lemma~\ref{lemma:VlV}}:
    by definition, we have 
    \begin{equation}\label{eq:VlVinLemma}
        \bm{V}_l \coloneqq f \bm{V}
    \end{equation}
    then, from \eqref{eq:VlVinLemma} we have
    \begin{equation}
         \bm{V}_l \approx f \bm{H}^+ \bm{H} \bm{V} 
    \end{equation}
    Then the thesis follows from the definition of $f^o$ in \eqref{eq:foReduced}:
    \begin{align*}
    \begin{split} \label{eq:Vl_expression}
        \bm{V}_l &= f^o \bm{H} \bm{V} .
    \end{split}
    \end{align*}
   We note that $\bm{H}^+ \bm{H}$ acts is an information bottleneck operator in which only information contained in the observation locations is propagated from $\bm{V}$. 
    Here we can assume that $f \bm{V} \approx f \bm{H}^+ \bm{H} \bm{V}$ due to some weak assumptions:
    \begin{itemize}
        \item The observation space of size $M$ contains sufficient information to construct a good approximation of the latent representation. This is equivalent to assumption iii) above because, if the observation space contains all information in the full space, by assumption i) it should also contain all information in the latent space. To show why this might be true consider that there must be large redundancies in the full space in order for the CAE framework to have any success. We argue in the following section that, in all practical scenarios, $m, S < M$. If a state of size $m$ can contain most of the information of a state of size $n$ it is not unlikely that a state of size $M$ might contain the same information. More concretely, in Section~\ref{sec:expt_TSVDvsAE} we demonstrate that the Arcucci et al. CVT with TSVD DA method~\cite{Arcucci2019a} suffers no degradation in accuracy when just 10\% of the total state space is used as observations ($M = 0.1n$) and there is only a 5\% degradation when $M = 0.01n$. 
        \item The reduced space of size $S$ also contains sufficient information to construct the latent representation of size $m$. This condition is implied by the lossless compression assumption i) as the reconstruction of the full state passes through the reduced space and the latent space. 
    \end{itemize}
    \end{proof}
\end{lemma}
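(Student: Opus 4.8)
The plan is to read \eqref{eq:V_l_V_lemma} as nothing more than the definition of $\bm{V}_l$ with one extra factor inserted, and to put all the mathematical weight on showing that this inserted factor acts as the identity on the relevant subspace. First I would unfold the definition: by \eqref{eq:V_l}, $\bm{V}_l$ is the matrix whose $i$-th column is $f(\delta\bm{x}_i^b)=\bm{z}_i^b$, i.e.\ $\bm{V}_l=f\bm{V}$ with $f$ acting columnwise on $\bm{V}=[\delta\bm{x}_0^b,\dots,\delta\bm{x}_S^b]$; this step is pure bookkeeping and uses no assumptions. Then I would set $\bm{P}\coloneqq\bm{H}^+\bm{H}$, which for the Moore--Penrose choice of generalised inverse is the orthogonal projector onto $\mathrm{row}(\bm{H})\subseteq\mathbb{R}^n$, i.e.\ onto exactly the state directions that the observation operator resolves (its orthogonal complement being $\ker\bm{H}$). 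The identity to be proved then reduces to the claim that composing with $\bm{P}$ before encoding changes nothing, so that $\bm{V}_l=f\bm{V}\approx f\bm{P}\bm{V}=f\bm{H}^+\bm{H}\bm{V}=f^o\bm{H}\bm{V}$, the last equality being precisely the definition \eqref{eq:foReduced} of the observation encoder.

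The genuine content is the middle approximation $\bm{V}\approx\bm{P}\bm{V}$, which I would justify column-by-column: each background perturbation $\delta\bm{x}_i^b$ can be reconstructed to within the encoder's tolerance from its component in $\mathrm{row}(\bm{H})$, hence $f(\bm{P}\delta\bm{x}_i^b)\approx f(\delta\bm{x}_i^b)$. This is exactly assumption (iii) of Section~\ref{sec:equivalence} together with the lossless-compression assumption (i): if the observation space carries enough information to recover the full state, it carries enough to recover its latent code. Here I would be explicit about three caveats the reader should keep in mind --- that this is an approximate, not an exact, identity; that it is vacuous in the synthetic-data regime $\bm{H}=\bm{H}^+=\bm{I}$ used in the experiments, where $\bm{P}=\bm{I}$ and the statement collapses to $\bm{V}_l=f\bm{V}$; and that since $\bm{H}^+$ is under-determined, the lemma is to be read for a fixed but arbitrary choice of generalised inverse, with $f^o$ inheriting that choice.

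Finally I would tie the chain together, flagging the one formal subtlety that writing $f(\bm{H}^+\bm{H}\bm{V})=(f\bm{H}^+)(\bm{H}\bm{V})$ treats $f$ as (approximately) linear so that it commutes with composition of the surrounding linear operators --- legitimate in the spirit of the incremental, linearized formulation of this section, but worth noting because the trained CAE encoder is only approximately linear on the data manifold. The main obstacle, and effectively the whole substance of the lemma, is therefore the heuristic step $\bm{V}\approx\bm{H}^+\bm{H}\bm{V}$: there is no abstract reason a nonlinear encoder should satisfy it, so I would support it with the redundancy-plus-observability intuition (large redundancy in the full space makes it plausible that a vector of size $M$, with $m,S<M$ in all regimes considered, already determines the latent code) and with the empirical evidence quoted in the statement --- no accuracy loss for the TSVD baseline at $M=0.1n$, and only a $5\%$ degradation at $M=0.01n$. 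Everything else in the argument is formal substitution.
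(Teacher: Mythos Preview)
Your proposal is correct and follows essentially the same route as the paper: start from the definition $\bm{V}_l=f\bm{V}$, insert the factor $\bm{H}^+\bm{H}$ with the approximation $f\bm{V}\approx f\bm{H}^+\bm{H}\bm{V}$ justified by assumptions (i) and (iii), and then invoke the definition $f^o=f\bm{H}^+$. Your version is in fact more careful than the paper's --- the projector interpretation of $\bm{H}^+\bm{H}$, the observation that the lemma is vacuous when $\bm{H}=\bm{I}$, and the flag about implicit linearity of $f$ in the composition are all points the paper glosses over --- but the argument is the same.
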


\begin{lemma}\label{lemma:RlR}
Let $\bm{R}_l$ be the reduced matrix as defined in \eqref{eq:R_l_2}, $f^o$ defined in \eqref{eq:foReduced}, $\bm{H}$ as in \eqref{eq:misfitH} and $\bm{R}$ as in \eqref{eq:R_def}. The following result held:
   \begin{equation}\label{eq:R_l_R_lemma}
        \bm{R}_l^{-1} = ((f^o)^T)^{-1} \bm{R}^{-1} (f^o)^{-1} 
  \end{equation}
        \begin{proof}
        \textbf{Proof of Lemma~\ref{lemma:RlR}}:
     \begin{align*}
    \begin{split} \label{eq:Rl_expression}
    \bm{R}_l &\coloneqq \mathbb{E}[ \ \bm{\epsilon}_l \bm{\epsilon}_l^T \ ]\\
    \bm{R}_l &= \mathbb{E}[ f^o  \bm{\epsilon}  \bm{\epsilon}^T (f^o)^T   ]\\
    \end{split}
    \end{align*}
    because the observations are uncorrelated, we have:
     \begin{align*}
    \begin{split}
    \bm{R}_l &= f^o \ \mathbb{E}[ \ \bm{\epsilon}  \bm{\epsilon}^T   \ ] \ (f^o)^T\\
    \bm{R}_l &= f^o \ \bm{R} \ (f^o)^T\\
    \bm{R}_l^{-1} &= ((f^o)^T)^{-1} \bm{R}^{-1} (f^o)^{-1}
    \end{split}
    \end{align*}
    \end{proof}

    \begin{proof}
    \textbf{Proof of Lemma~\ref{lemma:AandAl}}: we use Lemmas 5 and 6 to give:
    \begin{align}
        \begin{split}
            \bm{A}_l &= \bm{V}_l^T  \bm{R}_l^{-1}  \bm{V}_l \\
             \end{split}
    \end{align}  
    from \eqref{eq:R_l_R_lemma} and \eqref{eq:V_l_V_lemma}, we have:
      \begin{align}
        \begin{split}
            \bm{A}_l &= \bm{V}^T \bm{H}^T (f^o)^T    ((f^o)^T)^{-1} \bm{R}^{-1} (f^o)^{-1} f^o \bm{H} \bm{V}  
             \end{split}
    \end{align}
    which gives the \eqref{eq:A_l_A_lemma}.
    \end{proof}
    \begin{proof}
    \textbf{Proof of Lemma~\ref{lemma:bandbl}}:
    \begin{align}
        \begin{split}
            \bm{b}_l &= \bm{V_l}^T \bm{R_l}^{-1}\bm{d_l} \\
             \end{split}
    \end{align}
             from \eqref{eq:R_l_R_lemma} and \eqref{eq:V_l_V_lemma}, we have:
             \begin{align}
        \begin{split}
            \bm{b}_l &= \bm{V}^T \bm{H}^T (f^o)^T ((f^o)^T)^{-1} \bm{R}^{-1} (f^o)^{-1} f^o(\bm{d})\\
        \end{split}
    \end{align}
    which gives the \eqref{eq:b_l_b_lemma}.
    \end{proof}
\end{lemma}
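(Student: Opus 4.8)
The plan is to obtain the identity by a direct second-moment computation: start from the definition $\bm{R}_l \coloneqq \mathbb{E}[\bm{\epsilon}_l\bm{\epsilon}_l^T]$ of \eqref{eq:R_l_1}, substitute $\bm{\epsilon}_l = f^o(\bm{\epsilon})$, pull the operator $f^o$ outside the expectation, recognise what remains as $\bm{R} = \mathbb{E}[\bm{\epsilon}\bm{\epsilon}^T]$, and finally invert the resulting matrix product. Two things make the first move legitimate: $f^o$ is non-random, so it leaves the expectation by linearity; and, exactly as $\bm{\mathcal{H}}$ is replaced by its linearisation $\bm{H}$ in the incremental formulation, I would read $f^o = f\bm{H}^+$ from \eqref{eq:foReduced} as the linearisation of that composite about the (zero, mean-centred) background state, so that $f^o(\bm{\epsilon})$ is the matrix--vector product $f^o\bm{\epsilon}$. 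Then $\bm{R}_l = \mathbb{E}[(f^o\bm{\epsilon})(f^o\bm{\epsilon})^T] = f^o\,\mathbb{E}[\bm{\epsilon}\bm{\epsilon}^T]\,(f^o)^T = f^o\bm{R}(f^o)^T$, where I use that $\bm{\epsilon}$ is zero-mean with covariance $\bm{R}$ (which is $\sigma_0^2\bm{I}$ under the uncorrelated-observation assumption \eqref{eq:R_def}). Inverting via the usual reversal $(ABC)^{-1} = C^{-1}B^{-1}A^{-1}$ yields $\bm{R}_l^{-1} = ((f^o)^T)^{-1}\bm{R}^{-1}(f^o)^{-1}$, which is \eqref{eq:R_l_R_lemma}. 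As a consistency check, $\bm{R}_l = \sigma_0^2\, f^o(f^o)^T$, and this is a scalar multiple of the identity — matching $\bm{R}_l = \sigma_l^2\bm{I}$ in \eqref{eq:R_l_2} — precisely under the orthonormal-latent-feature assumption of Section~\ref{sec:equivalence}.

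The genuine obstacle is that $f^o \in \mathbb{R}^{m \times M}$ is in general not square (there is no reason $m = M$), so $(f^o)^{-1}$ and $((f^o)^T)^{-1}$ do not literally exist and the last inversion step is only valid once $f^o$ may be treated as a bijection. This is exactly where the three standing hypotheses of Section~\ref{sec:equivalence} have to enter: $\bm{H}_l = \bm{I}$ forces $M_l = m$, so the latent observation space and the latent space have matching dimension; assumption (iii) --- that the $M$-dimensional observation space carries enough information to reconstruct the state, hence, via the lossless-compression assumption (i), the $m$-dimensional latent state --- makes $f\bm{H}^+$ full row rank, an honest change of coordinates onto $\mathbb{R}^m$ rather than a genuine compression on the subspace that matters; and the orthonormal-feature assumption (ii) both guarantees that $f^o(f^o)^T$ is invertible and pins down $\bm{R}_l = \sigma_l^2\bm{I}$. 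I would therefore either state \eqref{eq:R_l_R_lemma} modulo these hypotheses, reading $(f^o)^{-1}$ as a genuine inverse restricted to the relevant $m$-dimensional subspace, or phrase it with the Moore--Penrose pseudoinverse, in which case I would also have to check that the pseudoinverse acts as a true inverse in precisely the positions where it is used downstream.

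It is worth noting how the ensuing $\bm{A}_l = \bm{A}$ and $\bm{b}_l = \bm{b}$ of Lemmas~\ref{lemma:AandAl} and \ref{lemma:bandbl} fall out once \eqref{eq:R_l_R_lemma} is available: substituting it together with $\bm{V}_l = f^o\bm{H}\bm{V}$ (Lemma~\ref{lemma:VlV}) and $\bm{d}_l = f^o(\bm{d})$ from \eqref{eq:latent_d} into \eqref{eq:matrixAl} and \eqref{eq:matrixbl} produces the adjacent pairs $(f^o)^T((f^o)^T)^{-1}$ and $(f^o)^{-1}f^o$, each of which collapses to $\bm{I}$, leaving $\bm{V}^T\bm{H}^T\bm{R}^{-1}\bm{H}\bm{V}$ and $\bm{V}^T\bm{H}^T\bm{R}^{-1}\bm{d}$, i.e.\ \eqref{eq:matrixA} and \eqref{eq:matrixb}. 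That cancellation is the single place where the invertibility of $f^o$ is actually exercised, so it is the step I would be most careful to support using the hypotheses above, and it is where the dimensional match $M_l = m$ and the orthonormal-feature assumption, rather than mere low reconstruction error, are doing the essential work of the equivalence argument.
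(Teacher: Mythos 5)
Your core derivation is the same as the paper's: substitute $\bm{\epsilon}_l = f^o(\bm{\epsilon})$, pull the deterministic linear operator $f^o$ out of the expectation, recognise $\bm{R}$, and invert the triple product. Two of your side observations are nonetheless worth flagging because they improve on the paper's own presentation. First, you correctly ground the step $\mathbb{E}[f^o\bm{\epsilon}\bm{\epsilon}^T(f^o)^T] = f^o\,\mathbb{E}[\bm{\epsilon}\bm{\epsilon}^T]\,(f^o)^T$ in the determinism and linearity of $f^o$; the paper attributes this step to the observations being uncorrelated, which is the wrong reason (uncorrelatedness is what makes $\bm{R}$ diagonal, not what lets $f^o$ leave the expectation). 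Second, and more substantively, you put your finger on the fact that $f^o \in \mathbb{R}^{m\times M}$ is not square, so $(f^o)^{-1}$ and $((f^o)^T)^{-1}$ do not exist in the literal sense; the paper writes these inverses without comment. Your diagnosis of where this actually bites — in the downstream cancellations $(f^o)^T((f^o)^T)^{-1}$ and $(f^o)^{-1}f^o$ appearing in Lemmas~\ref{lemma:AandAl} and \ref{lemma:bandbl}, where with a Moore--Penrose pseudoinverse one obtains the projection onto the row space of $f^o$ rather than $\bm{I}_M$, and the identity only holds when restricted to the subspace that $\bm{H}\bm{V}$ and $\bm{d}$ actually live in — is exactly right, and this is the precise place where the standing assumptions of Section~\ref{sec:equivalence} (orthonormal latent features, informative observation space, $M_l = m$) are doing the real work. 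The paper does list those assumptions, but never makes explicit that they are what licenses treating $f^o$ as invertible; your write-up closes that gap.
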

This completes the proof that $\mathbf{w}^{DA} =   \mathbf{w}_l^{DA}$.

\subsection{Advantages over TSVD: Theory} \label{sec:background_tsvdvsAE}
Eigenanalysis techniques such as PCA and TSVD are alternative methods of producing reduced space representations of data and, as discussed, have been used canonically in preconditioned 3D-VarDA \cite{Arcucci2019a}. Having summarised the key components of our proposed system, it is now possible to discus the theoretical reasons why our method produces a) higher quality compression and b) is faster than the traditional methods. We verify these advantages experimentally in Section~\ref{sec:expt_TSVDvsAE}.
 
\subsubsection{Compression quality}
A well-trained CAE will produce higher quality reconstructions than those using TSVD for a number of reasons:
\begin{enumerate}
    \item The mean of the training data distribution can be stored `for free' in the decoder leaving space in the latent representation to encode sample variations.
    \item CAEs explicitly use location data and can therefore utilise properties like local smoothness in order to compress the input more efficiently\footnote{Note that DA localisation approaches do utilise location information but not in a way that increases compression quality \cite{Montmerle2018}.}.
    \item The latent features are created from non-linear combinations of the inputs meaning they are likely to be of greater expressive quality. This eigenanalysis approach is only optimal when the data is drawn from a Gaussian distribution\footnote{More formally, PCA truncated at mode $\tau$ gives optimal reconstruction for all \textit{linear} models of rank $\tau$.}.
    \item By design, in \textit{truncated} SVD, some of the information is intentionally discarded. This is not the case in the CAE framework. 
\end{enumerate}

\subsubsection{Computational Complexity}
The proposed method also has lower computational complexity in an online setting. Here `online costs' refers to any calculation that must take place when a new set of observations are made. `Offline costs' are everything else and includes the TSVD computation and the CAE training. The online complexities of the Parish et al. reduced space approach $R_{\text{on}}$ \cite{Parrish1992} and our bi-reduced space approach $B_{\text{on}}$ are:
\begin{align}
    R_{\text{on}} &=\mathcal{O}(I_1M^2 + nS) \\
    B_{\text{on}} &= \mathcal{O}(nm)
\end{align}
where $I_1$ is the number of iterations in the reduced space VarDA minimisation routine, $M$ is the number of observations, $S$ is the reduced space size (which is equal to the size of the historical data sample) and $m$ is the latent dimension size in our proposed method. We note that to achieve comparable accuracy with the two methods we will typically have $S > m,$ (and $M > S$) so $R_{\text{on}} > B_{\text{on}}$. We derive these results in the following sections. 

We note that our proposed method's online complexity $ B_{\text{on}}$ is independent of the number of observations $M$ meaning it is never necessary to arbitrarily reduce the number of assimilated observations in order to meet the latency requirements of the system. We also note that the cost of training a CAE is \textit{considerably} larger than of performing TSVD but as these operations occur offline they are not of primary importance in the creation of an operational system. We give the derivation of the online and offline complexities in the following sections but first discuss the encoder and decoder inference complexities: \\

\textbf{$f(\bm{x})$ and $g(\mathbf{z})$ complexity} \\
With input of size $n$ and output of size $m$, the encoder and decoder inference complexities are of order $\mathcal{O}(nm)$.
\begin{proof}
\textbf{Justification}: \textit{in the simplest possible encoder consisting of a single fully connected layer, the complexity of mapping from the full to the latent space would be $\mathcal{O}(nm)$ exactly. The convolutional case is more complex and will be given by  $\mathcal{O}(nK)$ for a architecture-specific constant $K$ but we think that logical CAE design choices give $m \approx K$ since:
\begin{enumerate}
    \item $K$ will not be $<<m$ as this would mean the encoder was introducing the information bottleneck at a location other than the latent space. 
    \item Similarly, $K$ should not be $>> m$ as this would negate the computational benefit of using convolutions over a linear network.
\end{enumerate}
A symmetric argument gives the decoder complexity as $\mathcal{O}(nm)$. }
\end{proof}
\subsubsection{Online Computational Cost Derivation}\label{sec:complexity}

There are two steps that contribute to the online-cost:
\begin{enumerate}
    \item The evaluation of the cost function and its gradient during the minimisation. 
    \item Restoring the calculated $\mathbf{w}^{DA}$ to the full space.
\end{enumerate} 
In the following, we use the symbol $\cdot$ to indicate the operation under consideration. 
\begin{lemma}
    The online complexity of step i) in the Reduced space method is:
    $$\mathcal{O}(I_1(MS + M^2))$$ 
    where $I_1$ is the number of iterations in the minimisation routine.
\end{lemma}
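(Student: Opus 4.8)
The plan is to count the cost of one evaluation of the cost function $J(\mathbf{w})$ together with its gradient $\nabla J(\mathbf{w})$, and then to multiply by the iteration count $I_1$: step i) of the reduced-space method is precisely these $I_1$ evaluations performed by the first-order minimiser applied to \eqref{eq:3d_preconditioned}. Differentiating \eqref{eq:3d_preconditioned} gives $\nabla J(\mathbf{w}) = \mathbf{w} - \bm{V}^T\bm{H}^T\bm{R}^{-1}(\bm{d} - \bm{H}\bm{V}\mathbf{w})$, so the task reduces to bounding the matrix--vector products that appear here (and the cheaper ones in $J$ itself).

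Next I would exploit the offline/online split. The matrix $\bm{G} \coloneqq \bm{H}\bm{V} \in \mathbb{R}^{M\times S}$ depends only on the background ensemble $\bm{X}^b$ and the observation geometry $\bm{H}$, not on the incoming observation values $\bm{y}$, so it is assembled once and stored offline; likewise any representation of $\bm{R}^{-1}$ that is needed is precomputed offline, and the misfit $\bm{d} = \bm{y}-\bm{H}\bm{x}^b$ is formed before the minimisation at a cost of at most $\mathcal{O}(M)$ (since $\bm{H}\bm{x}^b$ is itself observation-independent). With $\bm{G}$ in hand, each gradient evaluation is the left-to-right chain: $\mathbf{w}\mapsto \bm{G}\mathbf{w}$ at cost $\mathcal{O}(MS)$; then $\bm{r}\coloneqq \bm{d}-\bm{G}\mathbf{w}$ at cost $\mathcal{O}(M)$; then $\bm{R}^{-1}\bm{r}$ at cost $\mathcal{O}(M^2)$ for a dense $\bm{R}$ (collapsing to $\mathcal{O}(M)$ when $\bm{R}=\sigma_0^2\bm{I}$ as in \eqref{eq:R_def}, but we state the general bound); then $\bm{G}^T(\bm{R}^{-1}\bm{r})$ at cost $\mathcal{O}(MS)$; and finally a vector subtraction from $\mathbf{w}$ at cost $\mathcal{O}(S)$. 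Evaluating $J(\mathbf{w})$ reuses $\bm{r}$ and $\bm{R}^{-1}\bm{r}$ and needs only the inner products $\mathbf{w}^T\mathbf{w}$ and $\bm{r}^T(\bm{R}^{-1}\bm{r})$, which are $\mathcal{O}(S)$ and $\mathcal{O}(M)$ and thus dominated by the gradient cost. Summing, one iteration costs $\mathcal{O}(MS + M^2)$, and therefore step i) costs $\mathcal{O}(I_1(MS + M^2))$.

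The one delicate point, and the main obstacle to a fully rigorous argument, is the bookkeeping of what counts as online: one must justify that $\bm{H}\bm{V}$ and the inversion of $\bm{R}$ are genuinely observation-independent, and that the products are always associated so that no object whose size involves $n$---in particular neither $\bm{V}\in\mathbb{R}^{n\times S}$ nor $\bm{H}\in\mathbb{R}^{M\times n}$ standing alone---is touched inside the loop; this is precisely why $\bm{H}\bm{V}$ is stored explicitly rather than as the pair $(\bm{H},\bm{V})$. One could go further and precompute $\bm{V}^T\bm{H}^T\bm{R}^{-1}\bm{H}\bm{V}\in\mathbb{R}^{S\times S}$ offline to reduce each iteration to $\mathcal{O}(S^2)$, but the stated bound reflects the implementation we actually use (and against which the bi-reduced method is timed), in which the observation-space operators are applied explicitly at every iteration. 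Finally, since $M\geq S$ in practice the bound also reads $\mathcal{O}(I_1 M^2)$, matching the $R_{\text{on}}$ expression once the $\mathcal{O}(nS)$ cost of step ii) is added.
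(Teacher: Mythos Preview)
Your argument follows the same skeleton as the paper's: precompute $\bm{H}\bm{V}$ offline so that nothing of size $n$ appears in the loop, then tally the per-iteration matrix--vector products and multiply by $I_1$. The one substantive difference is where the $M^2$ term comes from. You attribute it to applying a dense $\bm{R}^{-1}$ to the residual (and correctly remark that under the diagonal assumption \eqref{eq:R_def} this collapses to $\mathcal{O}(M)$), whereas the paper attributes it to the inner product $(\bm{d}-\bm{HV}\mathbf{w})^T(\bm{d}-\bm{HV}\mathbf{w})$ itself while \emph{assuming} $\bm{R}$ diagonal. Your accounting is the more defensible one: an inner product of $M$-vectors is $\mathcal{O}(M)$, so under a strictly diagonal $\bm{R}$ the paper's stated source of the quadratic term does not actually yield $M^2$. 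Your version therefore buys a cleaner justification of the bound as written, at the cost of invoking the general (dense) $\bm{R}$ case that the rest of the paper does not otherwise use. Your closing remarks on the alternative $\mathcal{O}(S^2)$ precomputation and on the online/offline bookkeeping are additional context the paper omits, but they do not change the proof.
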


\begin{proof}
\textbf{Proof of Lemma 7}: \textit{we repeat the cost function (\ref{eq:3d_preconditioned}) here for convenience:}
\begin{align} \label{eq:3d_preconditioned_appendix}
    \begin{split}
         \mathbf{w}^{DA} = \argmin_{\mathbf{w}} J(\bf{w}) \\
    J(\mathbf{w}) = 
     \frac{1}{2} \mathbf{w}^T\mathbf{w} + \frac{1}{2}\norm{\bm{d} - \bm{H}\bm{V}\mathbf{w}}^2_{\bm{R}^{-1}} 
    \end{split}
\end{align} 

\textit{A naive implementation of \ref{eq:3d_preconditioned_appendix} (and its derivative) would be dominated by the matrix multiplication $\bm{H} \cdot \bm{V}$ but, as $\bm{H}$ and $\bm{V}$ always appear together, this quantity can be pre-computed and the minimisation complexity is independent of $n$. A single iteration of the VarDA minimisation has complexity $\mathcal{O}(MS + M^2)$ where the first term originates from $\bm{HV} \cdot \mathbf{w}$ while the second is from $(\bm{d} - \bm{HV}\mathbf{w})^T \cdot (\bm{d} - \bm{HV}\mathbf{w})$ where we are assuming $\bm{R}$ is diagonal. With $I_1$ iterations this gives Lemma 7. }
\end{proof}

\begin{lemma}
The online complexity of step i) in our bi-reduced space method is: 
$$\mathcal{O}(I_2(mS + m^2))$$ 
where $I_2$ is the number of iterations in the minimisation routine.
    \begin{proof}
    \textbf{Proof of Lemma 8}: this argument is almost identical to Lemma 7 except that we replace $\bm{H} \bm{V} \in \mathbb{R}^{M \times S}$ with $\bm{V}_l \in \mathbb{R}^{m \times S}$ in the cost function. Altering these dimensions gives the required result.
    \end{proof}
\end{lemma}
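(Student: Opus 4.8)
The plan is to mirror the argument of Lemma 7 verbatim, with the bi-reduced cost function \eqref{eq:3d_latent} in place of \eqref{eq:3d_preconditioned}. First I would write out the gradient of the objective,
\[
\nabla J(\mathbf{w}_l) = \mathbf{w}_l - \bm{V}_l^T \bm{R}_l^{-1}\bigl(\bm{d}_l - \bm{V}_l \mathbf{w}_l\bigr),
\]
and observe that both $\bm{V}_l = f(\bm{V}) \in \mathbb{R}^{m\times S}$ and $\bm{d}_l = f^o(\bm{d}) \in \mathbb{R}^m$ are fixed before the minimisation loop is entered — they depend only on the background sample and the new observation vector, not on the iterate $\mathbf{w}_l$ — so every object touched inside a single iteration lives in a space of dimension $m$ or $S$. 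In particular, and crucially for the comparison with $R_{\mathrm{on}}$, no factor of $n$ enters the per-iteration cost.

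Next I would tally the dominant operations in one iteration. The product $\bm{V}_l\cdot\mathbf{w}_l$ (and the transposed product $\bm{V}_l^T\cdot(\,\cdot\,)$ appearing in the gradient) costs $\mathcal{O}(mS)$; the weighted inner product $(\bm{d}_l - \bm{V}_l\mathbf{w}_l)^T \bm{R}_l^{-1} (\bm{d}_l - \bm{V}_l\mathbf{w}_l)$ costs $\mathcal{O}(m^2)$ in general (and only $\mathcal{O}(m)$ under the orthonormal-latent-features assumption, where $\bm{R}_l = \sigma_l^2\bm{I}$); the term $\mathbf{w}_l^T\mathbf{w}_l$ and the first-order update of $\mathbf{w}_l$ are each $\mathcal{O}(m)$. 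Retaining the conservative $\mathcal{O}(m^2)$ bound — to parallel the $\mathcal{O}(M^2)$ term of Lemma 7 and to avoid leaning on diagonality of $\bm{R}_l$ — a single iteration is $\mathcal{O}(mS + m^2)$, and running the minimiser for $I_2$ iterations gives $\mathcal{O}(I_2(mS + m^2))$, as claimed. This is exactly the dimension substitution $\bm{H}\bm{V}\in\mathbb{R}^{M\times S}\ \mapsto\ \bm{V}_l\in\mathbb{R}^{m\times S}$ relative to Lemma 7.

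The only genuinely delicate point — and the main thing I would be careful to state explicitly — is the bookkeeping of what counts as ``step i)''. The encoder evaluation producing $\bm{d}_l = f(\bm{y})$ and the evaluation $\bm{V}_l = f(\bm{V})$ each cost $\mathcal{O}(nm)$ by the encoder-complexity result proved above, but the former is incurred \emph{once} per new observation set rather than once per iteration, and the latter is an offline cost; hence neither enters the iterative bound of Lemma 8. I would note that these contributions (together with the full-space restoration $\delta\bm{x}^{DA} = g(\bm{V}_l\mathbf{w}_l^{DA})$, also $\mathcal{O}(nm)$) are accounted for separately in the overall tally $B_{\mathrm{on}} = \mathcal{O}(nm)$. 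Beyond this clarification the proof is a routine substitution into the calculation already performed for Lemma 7, so I do not anticipate any substantive obstacle.
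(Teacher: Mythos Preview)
Your proposal is correct and follows exactly the paper's approach: the paper's own proof is a one-line appeal to Lemma~7 with the dimension substitution $\bm{H}\bm{V}\in\mathbb{R}^{M\times S}\mapsto\bm{V}_l\in\mathbb{R}^{m\times S}$, and you carry this out in full detail. One small slip to fix: $\mathbf{w}_l\in\mathbb{R}^S$, not $\mathbb{R}^m$, so $\mathbf{w}_l^T\mathbf{w}_l$ and the iterate update are $\mathcal{O}(S)$ rather than $\mathcal{O}(m)$ --- but this is dominated by the $\mathcal{O}(mS)$ term anyway and does not affect the stated bound.
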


\begin{lemma}
Restoring the calculated $\mathbf{w}^{DA}$ to the full space in the mono-reduced space formulation is in $\mathcal{O}(nS)$.
    \begin{proof}
    \textbf{Proof of Lemma 9}: the product $ \bm{V} \cdot \mathbf{w}^{DA}$ is in $ \mathcal{O}(nS)$.
    \end{proof}
\end{lemma}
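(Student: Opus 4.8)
The plan is straightforward, since restoring the minimiser to the full space in the mono-reduced (Parrish et al.\ \cite{Parrish1992}) formulation consists of a single operation: forming $\delta\bm{x}^{DA} = \bm{V}\mathbf{w}^{DA}$. Because $\bm{V} \in \mathbb{R}^{n \times S}$ and $\mathbf{w}^{DA} \in \mathbb{R}^{S}$, this is a dense matrix--vector product with output in $\mathbb{R}^{n}$. Computing the $i$-th component $(\bm{V}\mathbf{w}^{DA})_i = \sum_{j=1}^{S} V_{ij}\, w^{DA}_j$ costs $S$ multiplications and $S-1$ additions, so forming all $n$ components costs $\mathcal{O}(nS)$, which is exactly the claimed bound. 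I would simply state that the restoration step \emph{is} the product $\bm{V}\cdot\mathbf{w}^{DA}$ and invoke the standard cost of a dense $(n\times S)$-by-$(S\times 1)$ multiplication.

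The only thing worth a sentence is confirming that no cheaper evaluation exists, so that $\mathcal{O}(nS)$ is tight and not merely an upper bound. Here $\bm{V} = [\delta\bm{x}_0^b,\ldots,\delta\bm{x}_S^b]$ is assembled directly from differences of sampled background states (cf.\ \eqref{eq:V_full}) and in general carries no exploitable sparsity or low-rank structure, while $\mathbf{w}^{DA}$ is a generic reduced-space vector; hence every one of the $nS$ entry products is genuinely needed. There is essentially no obstacle to this argument — the lemma is pure bookkeeping whose role is to supply the $nS$ term in $R_{\text{on}}$, so that combining Lemmas 7 and 9 (and the analogous bound for the bi-reduced method) yields the comparison $R_{\text{on}} > B_{\text{on}}$ asserted in Section~\ref{sec:background_tsvdvsAE}.
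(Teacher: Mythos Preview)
Your proposal is correct and follows exactly the same approach as the paper: the restoration step is the single matrix--vector product $\bm{V}\mathbf{w}^{DA}$ with $\bm{V}\in\mathbb{R}^{n\times S}$, which costs $\mathcal{O}(nS)$. You have simply fleshed out the arithmetic and added a remark on tightness, neither of which the paper bothers with, but the core argument is identical.
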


\begin{lemma}
Restoring $\mathbf{w}_l^{DA}$ to the full space in the bi-reduced space formulation is in $ \mathcal{O}(nS)$.
    \begin{proof}
    \textbf{Proof of Lemma 10}: this requires computing $\bm{V_l} \cdot \mathbf{w}_l^{DA}$ followed by $g \cdot \bm{V_l} \mathbf{w}_l^{DA}$ which has complexity $\mathcal{O}(mS + nm) = \mathcal{O}(nm)$.
    \end{proof}
\end{lemma}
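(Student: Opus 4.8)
The plan is to follow directly the two-stage reconstruction described in Section~\ref{sec:proposed_formulation}, namely $\delta\bm{x}^{DA} = g(\bm{V}_l \mathbf{w}_l^{DA})$, and to bound the cost of each stage separately before combining them.

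First I would treat stage one: the matrix--vector product $\bm{V}_l \cdot \mathbf{w}_l^{DA}$ with $\bm{V}_l \in \mathbb{R}^{m \times S}$ and $\mathbf{w}_l^{DA} \in \mathbb{R}^{S}$ costs $\mathcal{O}(mS)$ by the standard dense matrix--vector operation count. Next I would treat stage two: applying the decoder $g(\cdot)$ to the resulting latent vector of size $m$ to produce the full-space increment of size $n$; by the decoder inference complexity established above (the symmetric counterpart of the $f(\bm{x})$ argument, which gives $\mathcal{O}(nm)$), this stage costs $\mathcal{O}(nm)$. Adding the two stages gives $\mathcal{O}(mS + nm)$.

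It then remains to simplify $\mathcal{O}(mS + nm)$ down to $\mathcal{O}(nS)$. Since $n > S$ in every practical regime we have $mS \le mn$, so $mS + nm = \mathcal{O}(nm)$; and since the latent dimension is chosen no larger than the historical sample size, $m \le S$, we obtain $\mathcal{O}(nm) \subseteq \mathcal{O}(nS)$. This yields the claimed bound, and as a by-product shows it coincides with the mono-reduced restoration cost of Lemma~9.

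The main obstacle --- indeed the only non-mechanical point --- is that the $\mathcal{O}(nm)$ decoder cost is not a hard theorem but rests on the architectural heuristic that the convolutional decoder's per-output work constant $K$ satisfies $K \approx m$ (neither $\ll m$ nor $\gg m$), exactly as argued for the encoder. If one wishes to remain agnostic about $K$, the bound should be read as $\mathcal{O}(mS + nK)$, and the reduction to $\mathcal{O}(nS)$ then additionally requires $K = \mathcal{O}(S)$ alongside $m = \mathcal{O}(S)$. Everything else is routine matrix-dimension bookkeeping.
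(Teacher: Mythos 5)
Your proposal follows the paper's own argument exactly: decompose the restoration into $\bm{V}_l\mathbf{w}_l^{DA}$ at $\mathcal{O}(mS)$ plus a decoder pass at $\mathcal{O}(nm)$, and combine. You go slightly further than the paper in one useful respect: the paper's proof actually concludes $\mathcal{O}(mS+nm)=\mathcal{O}(nm)$, which does not literally match the lemma header's $\mathcal{O}(nS)$, whereas you explicitly close that gap via $m\le S$ (so $\mathcal{O}(nm)\subseteq\mathcal{O}(nS)$), and you correctly flag that the whole bound hinges on the $K\approx m$ architectural heuristic for the decoder rather than a hard operation count. Both points are correct and make the argument tighter than the paper's one-line version.
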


This gives an an overall \textbf{reduced space online complexity} of:
\begin{equation}\label{eq:R_on}
    R_{\text{on}} =\mathcal{O}(I_1(MS + M^2) + nS)
\end{equation}
and a \textbf{bi-reduced space online complexity} of:
\begin{equation}\label{eq:AE_complexity}
    B_{\text{on}} = \mathcal{O}(I_2(mS + m^2) + nm)
\end{equation}

Hence, the key comparison between the online complexity of the two methods is down to the relative sizes of variables $I_1, I_2, M, n, m \text{ and } S$. We assert that in most practical cases:
\begin{equation}\label{eq:complexity}
    R_{\text{on}} > L_{\text{on}}
\end{equation}

As many of these are user-chosen parameters we cannot prove that (\ref{eq:complexity}) \textit{always} holds but we can make concrete arguments about their ranges in practical settings. \\
\textbf{Argument 1:} \textit{$I_1 > I_2 $ in the majority of cases.}
    \begin{proof}
    \textbf{Justification}: \textit{We expect the problem to be better conditioned in the bi-reduced space than in the reduced space for the same reasons that it is better conditioned in the reduced space in comparison with the full space \cite{Hansen2006}.}
    \end{proof}
\textbf{Argument 2:} \textit{$M > S$ in the vast majority of cases.}
    \begin{proof}
    \textbf{Justification}: \textit{The Met Office uses $M =0.01n = 10 ^7$ \cite{MetOffice2019a}. They employ a combination of KFs and VarDA approaches \cite{Lorenc2018} but in the VarDA scheme, it is implausible that they would use anything close to $S = 10^7$ as $\bm{V}$ would be a matrix of size $10^7 \cdot 10^9 = 10^{16}$. We believe the same constraints will hold in all practical scenarios even if the number of observations is relatively small.}
    \end{proof}
\textbf{Argument 3: }\textit{$S > m$ for useful systems.}
    \begin{proof}
    \textbf{Justification}: \textit{We found that a value of $m$ that was a factor of $\mathbf{x}2.5$ smaller than $S$ gave superior DA accuracy compared with the traditional method. }
    \end{proof} 
\textbf{Argument 4:} \textit{$B_{\text{on}} =\mathcal{O}(nS)$ in the vast majority of cases.}
    \begin{proof}
    \textbf{Justification}: \textit{According to Argument 3 we have:
    \begin{align*}
        B_{\text{on}} &= \mathcal{O}(I_2(mS + m^2) + nm) = \mathcal{O}((I_2S + n)m)
    \end{align*}
    We found $I_2 = \mathcal{O}(10)$ meaning $I_2S << n$ and hence  $B_{\text{on}} = \mathcal{O}(nm)$. Concretely, with our values of $m =288$ and $S=791$ we found that upwards of 97\% of the execution time of the bi-reduced DA procedure was restoring $\mathbf{w}^{DA}_l$ to the full space.}
    \end{proof}
\begin{proof}

\end{proof}
Combining \textbf{Arguments 2-4}, gives the stated results in the previous section:
\begin{align*}
    R_{\text{on}} &=\mathcal{O}(I_1M^2 + nS) \\
    B_{\text{on}} &= \mathcal{O}(nm)
\end{align*}
and since we have argued that $n < S$, this implies $R_{\text{on}}$ is strictly greater than $L_{\text{on}}$. When $M$ is large ($M \geq \mathcal{O}( 0.05n)$) we can go further than this: \\
\textbf{Argument 5:} \textit{When $M > \mathcal{O}( 0.05n)$, $R_{\text{on}} =\mathcal{O}(I_1M^2)$. } 
\begin{proof}
    \textbf{Justification}: \textit{Clearly the exact values here will vary from one problem to another but we found that with $M = (0.01n)$, steps i) and ii) had approximately equal execution time but when $M$ rose much above this, the minimisation term dominated as a result of the quadratic complexity.}
\end{proof} 

We investigate the negative effect on DA accuracy of using fewer observations in Section~\ref{sec:expt_TSVDvsAE}.

\subsubsection{Computational Complexity: Offline}
In comparison, the offline costs are much larger for the proposed method in comparison with the formulation with TSVD: \\
\textbf{Reduced space offline cost}
\begin{equation}\label{eq:red_space_cost}
    R_{\text{off}} = \mathcal{O}(nS(M + S))
\end{equation}
\textbf{Bi-reduced space offline cost}
\begin{equation}\label{eq:bi_space_cost}
    B_{\text{off}} = \mathcal{O}(nSEm)
\end{equation} 
for $E$ epochs of training. We prove these results below.\\

Typically, $Em > M, S$ so $R_{\text{off}} < B_{\text{off}}$. In practice we found that $R_{\text{off}}< < B_{\text{off}}$: training a model took something on the order of 10-20 hours on a GPU whilst TSVD required approximately 10 minutes on the CPU.

\begin{proof}
\textbf{Derivation of (\ref{eq:red_space_cost})}: \textit{calculating SVD for $\bm{V} \in \mathbb{R}^{n \times S}$ where $S < n$ has complexity of $\mathcal{O}(S^2n)$ \cite{SVD2013}. It is also necessary to precompute $\bm{H} \cdot \bm{V_{\tau}}$ which is in $\mathcal{O}(MnS)=$ giving the overall complexity in (\ref{eq:red_space_cost})}.
\end{proof}

\begin{proof}
\textbf{Derivation of (\ref{eq:bi_space_cost})}: \textit{training for $E$ epochs, with $S$ historical samples, requires $ES$ evaluations of the encoder/decoder giving a complexity of $\mathcal{O}(SEmn)$. We must also precompute $\bm{V}_l = f \cdot \bm{V}$ which is in $\mathcal{O}(nmS)$ but this is comparatively small so the offline cost is as in equation (\ref{eq:bi_space_cost}).}
\end{proof}

\section{Architecture Search} \label{sec:architecture_search}

In this section we describe our architecture search framework (Section~\ref{sec:contrib_architecture}) and training configuration (Section~\ref{sec:contrib_training}) before detailing the results of this search in Section~\ref{sec:expt_architecture}. We found that the proposed approach was only successful relative to the Parish et al. approach when state of the art CAE architectures were used. This point is worth highlighting as many Data Assimilation practitioners use very simple neural networks in their research \cite{Liu2019,Quilodran2019}.

\subsection{Search Framework}\label{sec:contrib_architecture} 
In our process of finding a good CAE architecture, we were concerned that the successes of 2D image compression architectures might not be transferable to 3D spatial inputs. We attempted to minimise this risk by designing a framework within which it was feasible to systematically search an architecture space that approximately encompassed the design of every 2018/19 top-5 CLIC finisher in Table~\ref{tab:CLIC_res}. In order to achieve this, it was necessary to make small alterations to some of the original systems. As such, although we found that the \textit{Tucodec} model was vastly superior to our implementations of the other systems, it is possible that our small design variations mean the quoted performances are not representative of the original designs. We were not unduly worried by this possibility as our aim in this process was to find systems that performed well in our domain rather than make exact comparisons between image compression networks.

Following best-practices from the literature and the results from our early-stage experiments, all CAE designs that we searched had the following features:
\begin{enumerate}
    \item Encoder down-sampling occurs via strided-convolution \cite{Springenberg2015} rather than max pooling \cite{Krizhevsky2012} so that the network can learn its own sub-sampling routine.
    \item Encoder and decoder layers are mirrored to reduce the design space size\footnote{The single exception to this is the \textit{Tucodec} architecture in which the multi-scale path is removed in the decoder.}. Strided convolutions in $f(\bm{x})$ are replaced with transposed convolutions in $g(\mathbf{z})$.
    \item Convolutions have kernel size $k=3$ or less to reduce the number of parameters and computational cost (as these scale with $k^3$ for a 3D feature-map~\cite{Szegedy2015,Simonyan2015}).
    \item Convolutional down-sampling parameters are generated by our \texttt{ConvScheduler} class that has priorities of:
    \begin{enumerate}
        \item Avoiding addition of padding in later encoder layers as these can introduce artefacts in the reconstructed state. 
        \item Avoiding creation of feature maps that are not centred on the input as these are much harder to reconstruct in the decoder. This is achieved by refusing stride, $s$, padding, $p$ and kernel size, $k$ combinations that, when acting on input of width $W$, result in rounding in the floor operation when calculating the output size $= \big \lfloor \frac{W - k + 2p}{s} \big \rfloor + 1$.
    \end{enumerate} 
    \item Batch Normalisation is not used in the down-sampling backbone of the CAE architectures because our preliminary investigation showed that it resulted in reconstructions of considerably poorer quality\protect\footnotemark. This finding was also made by Chen et al. \cite{Chen2018}. 
    \item Batch Normalisation \textit{is} used in the residual blocks of the network. This was necessary to prevent activation and gradient overflow in deep networks.
    \item Latent size is fixed at $m=288$ to enable like-with-like comparisons between CAEs. This value was a reduction in state size by approximately three orders of magnitude.
\end{enumerate}

\footnotetext{We hypothesise that this degradation in quality was likely the result of loosing batch-specific averages that are crucial to reconstruction. In our work this problem was likely exacerbated by the fact that, as a result of memory pressures created by 3D input data, we were using small batch sizes of just $16$ or $8$ meaning the batch statistics have high variance.} 

For all models we investigated:
\begin{enumerate}
    \item Three activation functions: ReLU \cite{Hinton2010}, GDN \cite{Balle2015} and PReLU activations \cite{He2015a}.
    \item Four RBs: vanilla \cite{He2016} and NeXt RBs \cite{Xie2017a} (see \fref{fig:RBs}) each with and without the lightweight CBAM module \cite{Woo2018} after the original RB. 
\end{enumerate}
\begin{figure}[!htb]
    \center{\includegraphics[width=\textwidth]{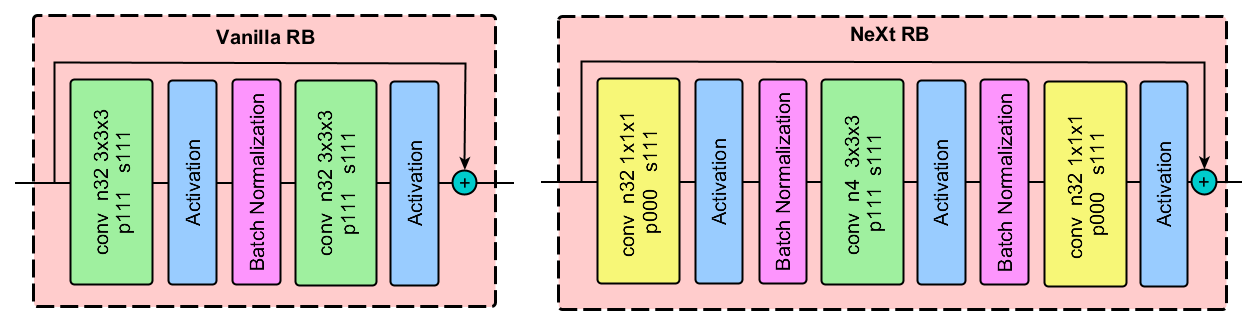}}
    \caption{\label{fig:RBs}The two basic RBs we evaluated. We also investigated the effect of placing CBAM blocks after each of these (see \ref{sec:appendix_architecture} for a diagram of the CBAM RB).}
\end{figure}
Note that vanilla RBs have considerably more parameters than NeXt RBs: for the 32 channel input versions shown in \fref{fig:RBs}, NeXt blocks have 2k parameters while vanilla blocks have almost 60k parameters. We found that we were able to reduce the \textit{Tucodec} decoder inference latency by $\mathbf{x}2.5$ by replacing vanilla RBs with NeXt RBs.

In designing a search strategy we noted that the models in Table \ref{tab:CLIC_res} fit into one of two categories:
\begin{enumerate}
    \item They are closely based on the \textit{Tucodec} 2018 entry.
    \item The encoder alternates between residual feature extraction and fully convolutional down-sampling operations.
\end{enumerate}

We found that we could capture most of the variation in the second category with the `backbone' encoder architecture shown in \fref{fig:backbone}. This design, without any `optional blocks' is a fully convolutional network with seven layers, the latter five of which down-sample the feature map. It is very similar to the encoder of Theis et al.~\cite{Theis2017} that was used in the first system which outperformed JPEG compression. The backbone is responsible for down-sampling while the optional blocks can introduce innovative feature extraction mechanisms. We require that any added blocks do not change the feature map size and are bypassed with a skip connection so that, at least in theory, they cannot hinder the down-sampling process. In Section~\ref{sec:expt_architecture} we evaluate four variants on this backbone that are summarised in Table~\ref{tab:backbone_vars}. For more information on these systems, including a discussion of our naming conventions, see \ref{sec:appendix_architecture}.

\begin{figure}[!htb]
    \center{\includegraphics[width=\textwidth]{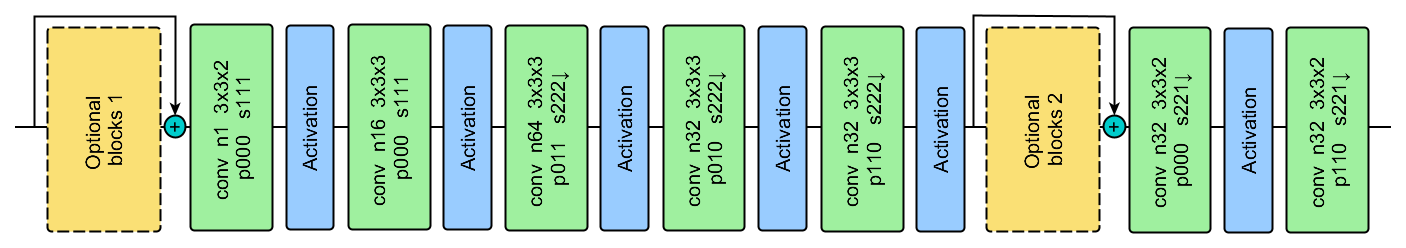}}
    \caption{\label{fig:backbone}Backbone encoder architecture. Convolutional parameters are specific to our input data with dimensions (C, Hx, Hy, Hz) = (1, 91, 85, 32) and latent dimensions of (32, 3, 3, 1). For `backbone' models, the optional blocks are empty. This architecture is loosely based on that in \cite{Theis2017}. The precise number of layers and convolutional parameters were chosen after an exploratory phase in which we compared a range of fully convolutional designs with between 5 and 11 layers.}
\end{figure}

\begin{table}[!htb]
    \let\center\empty
    \let\endcenter\relax
    \centering
    \resizebox{0.9\textwidth}{!}{\begin{tabular}{ccccc}
\toprule
\textbf{Model Name  }& \textbf{Optional Blocks 1 }& \textbf{Optional Blocks 2 } & \textbf{Figure} & \textbf{Reference}\\ \midrule
Backbone & -  & - & Fig. \ref{fig:backbone}& - \\
ResNeXt-L-N & - & L ResNeXt layers, cardinality N & Fig. \ref{fig:ResNeXtLayers}-\ref{fig:ResNeXt} & \cite{Xie2017a}\\
RAB-L & -  & L RABs  & Fig. \ref{fig:RAB}& \cite{Zhang2019}\\
GRDN  & 1 GRDN& -& Fig. \ref{fig:GRDN}&\cite{Zhang2020} \\ \bottomrule
\end{tabular}
}
    \caption{\label{tab:backbone_vars}Variants on the backbone encoder in \fref{fig:backbone} that are evaluated in Section~\ref{sec:expt_architecture}.}
\end{table} 

\subsection{Training Configuration}\label{sec:contrib_training}

\subsubsection{Data}\label{sec:data}
We used simulated data from a single run (988 time-steps) of the open-source, finite-element, fluid dynamic software Fluidity on a small domain in Elephant and Castle in South London. This system had 100,040 states spread over a region of size (x, y, z) = (700m, 650m, 250m). Fluidity uses an adaptive unstructured mesh in order to provide high resolution in regions of interest without requiring this same resolution in locations where there is little variation.
As convolutional kernels work on the assumption that adjacent states are equally spaced, it was necessary to interpolate between the points in the unstructured mesh to create a regular 3D grid with 247,520 evenly spaced points in the shape (91, 85, 32)\footnote{This increased the number of points by a factor of 2.5 as we found that a large amount of detail was lost in high-variance locations when the original number of points was used.}. When evaluating the reduced space approach, this interpolated input was flattened before use. The data was placed in time-step order and the first 80\% was used for the training set. All of the data was normalised using the training data statistics. In order to ensure a fair comparison with traditional TSVD methods, the reduced space $\bm{V}_{\tau}$ was calculated using the training set only.

\subsubsection{Augmentation}
With a view to increasing model generalisation, we investigated a number of regularisation techniques. We found that dropout harmed performance, even when it was applied channel-wise, and only to latter layers as recommended in \cite{Tompson2015}. Similarly, preliminary experiments showed that weight decay resulted in a small degradation in performance. As such, data augmentation was our only method of regularisation. We did not find any augmentation strategies for physical fields in the literature and decided that the only appropriate augmentation strategy was one borrowed from imaging: 3D `field-jitter' (the 3D mono-channel equivalent of colour-jitter). This involves injection of normal noise of amplitude $p\bm{\sigma}$ at $r$ of the state locations where $p, r \in [0, 1]$ and $\bm{\sigma}$ is the state standard deviation. With regards to other image augmentation strategies, we did not judge it appropriate to crop the inputs because the state space is of fixed size and hence there is no benefit in the CAE learning to compress inputs at variable sizes. Similarly, we did not flip the inputs horizontally as there are buildings in our domain which should stay fixed. 

\subsubsection{Training Duration}
Our Backbone architecture takes approximately 15 hours, and 400 epochs to converge on a NVIDIA Tesla K80. Some of the more heavyweight models take upwards of 40 hours. In order to conserve our resources and reduce the iteration time, we used a maximum of 150 epochs during our architecture search which amounted to an 6-15 hour period. Most systems had not converged by this point but we found that, for the sample of models that we trained to convergence, the performance ordering at convergence was almost completely unchanged from 150 epochs (see Figures \ref{fig:augmentation} and \ref{fig:fine_tuning}\footnote{Note that the training data-set metrics in these Figures are noisy but that the test-set values are stable and consistently ordered.}). All comparisons between AE architectures in Section~\ref{sec:expt_architecture} are made with this constraint and the quoted data assimilation figures in these sections should be used for comparative purposes only. We note that this may bias results towards smaller models that train more quickly but, all things considered, we would prefer a bias in this direction. We only trained a single model for each configuration.

\subsubsection{Evaluation metric}
When evaluating our systems' data assimilation performance we follow \cite{Arcucci2019a} in using the following quantity which we refer to as the `DA MSE':
\begin{equation}
    MSE(\bm{x}^{DA}) = \frac{\norm{\bm{x}^{DA} - \bm{x}^{obs}}_2}{\norm{\bm{x}^{obs}}_2}
\end{equation}
The equivalent quantity for the background state $MSE(\bm{x}^{b})$ is referred to as the `ref MSE' and is the value of the $MSE(\cdot)$ before DA has taken place. If a DA MSE is lower than the ref MSE, this implies the approach is performing better than the DA baseline system which always predicts the historical mean $\bm{x}^b$ regardless of the observations. The average ref MSE over the 197 test samples is 1.0001\footnote{The fact that this value is close to 1 is coincidental as we undo the normalisation before calculating this value.}. Unless otherwise stated, in all cases in which a single DA MSE is provided, we give the average over the test set.

\subsubsection{Hyperparameters} \label{sec:Implem_misc}
We trained our models with the Adam optimiser with the Pytorch default parameters of $\beta_1$= 0.9 and $\beta_2$ = 0.999. We used a fixed learning rate of 0.0002, He et al. initialization~\cite{He2015a}, and batch size of 16 for most models as this was largest multiple of eight at which the fp32 model, gradients and data could fit in 11GB of available GPU memory. One exception to this was the GDRN model which would only run at batch size 8.

\subsection{Architecture Search Results}\label{sec:expt_architecture}
\begin{table}[!htb]
    \let\center\empty
    \let\endcenter\relax
    \centering
    \resizebox{\textwidth}{!}{\begin{tabular}{lcccc}
\toprule
\multirow{2}{*}{ \textbf{Model} } & \multirow{2}{*}{ \textbf{Best DA MSE} } & \textbf{Relative Improvement} & \multirow{2}{*}{ \textbf{Best RB Type} } & \multirow{2}{*}{ \textbf{Best Activation} }   \\
 &   & \textbf{over Backbone}   &    &     \\ \midrule

Backbone   & 0.2309 & 0.00\%  & -  & PReLU   \\
ResNeXt-L-N    & 0.1900 & 17.71\% & Vanilla + CBAM    & GDN \\
RDB3-L-N   & 0.1865 & 19.21\% & Vanilla + CBAM    & GDN \\
RAB-L & 0.1917 & 16.98\%  & NeXt    & PReLU   \\
GRDN  & 0.1689 & 26.85\%    & NeXt + CBAM  & GDN  \\
Tucodec    & \textbf{0.0858} & \textbf{62.86\%} & vanilla  & PReLU  \\ 
Ref MSE    & 1.0001 & - & -  & -   \\ 
\bottomrule
\end{tabular}
}
    \caption{\label{tab:architecture}A summary of the data assimilation performance of the best performing model variants \textbf{after 150 epochs of training} and the ref MSE for comparison.}
\end{table} 
Table~\ref{tab:architecture} gives a high-level summary of the results of our architecture search. Note that we also conducted experiments to find the best L and N values for the ResNeXt-L-N and RAB-L CAEs but presentation of these results is deferred to~\ref{sec:appendix_architecture}.

\subsubsection{Residual Block}\label{sec:expt_blocktype}
\begin{table}[!htb]
    \let\center\empty
    \let\endcenter\relax
    \centering
    \resizebox{0.9\textwidth}{!}{\begin{tabular}{lccccc}
\toprule
\multirow{2}{*}{ \textbf{Model} } & \multirow{2}{*}{ \textbf{Vanilla} } & \textbf{Vanilla}  & \multirow{2}{*}{ \textbf{NeXt} } & \textbf{NeXt}    & \textbf{Relative Improvement}   \\
 &    & \textbf{+ CBAM}   & &\textbf{ + CBAM  }& \textbf{over Backbone}   \\\midrule
ResNeXt3-27-4  & 0.2108    & 0.1998   & 0.2028 & \textbf{0.1907}  & 17.41\%     \\
RDB3-27-4    & 0.1950    & \textbf{0.1893}   & 0.1964 & 0.2005  & 18.02\%     \\
ResNeXt3-27-1  & 0.2031    & \textbf{0.1948 }  & 0.2106 & 0.2110  & 15.63\%     \\
RDB3-27-1    & 0.2064    & 0.2167   & \textbf{0.1968} & 0.2060  & 14.75\%     \\
ResNeXt3-3-8 & 0.2196    & \textbf{0.2051}   & 0.2174 & 0.2148  & 11.16\%     \\
RDB3-3-8     & \textbf{0.1958}    & 0.2125   & 0.2004 & 0.2013  & 15.20\%     \\
RAB-4   & 0.2277    & 0.1970   & \textbf{0.1917} & 0.1927  & 16.98\%     \\
GRDN    & 0.2297    & 0.2204   & 0.2300 & \textbf{0.1893}  & 18.00\%     \\
Tucodec & \textbf{0.0858}    & 0.3172   & 0.0890 & 0.1870  & \textbf{62.86\%}     \\
\bottomrule
\end{tabular}
}
    \caption{\label{tab:RBs}DA MSE variation with residual block in models trained for 150 epochs.}
\end{table} 
In this experiment we investigated the effect of RB type on our pool of architectures. We found that no single RB was superior for all systems but, for a given architecture, there were large variations in model performance with RB type. For example, vanilla+CBAM RBs were better than NeXt blocks in five of six cases within the ResNeXt/RBD framework. In comparison, CBAMs significantly harmed performance for \textit{Tucodec} variants. This may be a result of interference between the coarse-grained attention mechanism of the CBAM blocks and the highly specific attention in the RAB blocks. 

\subsubsection{Activation function} \label{sec:expt_activations}
\begin{table}[!htb]
    \let\center\empty
    \let\endcenter\relax
    \centering
    \resizebox{0.85\textwidth}{!}{\begin{tabular}{lcccc}
\toprule
\multirow{2}{*}{ \textbf{Model} }   & \multirow{2}{*}{ \textbf{PReLU} } & \multirow{2}{*}{ \textbf{ReLU} } & \multirow{2}{*}{ \textbf{GDN} } & \textbf{Relative Improvement}    \\

      &     &    &   & \textbf{over Backbone } \\ \midrule
Backbone     & \textbf{0.2309}     & 0.9857    & 0.2970   & 0.00\%     \\
ResNeXt3-27-1-vanilla+CBAM & 0.1948     & 1.0058    & \textbf{0.1900 }         & 17.71\%    \\
RDB3-3-8-vanilla    & \textbf{0.1958}     & 0.9887    & 0.2027   & 15.20\%    \\
RDB3-27-4-vanilla+CBAM     & 0.1893     & -         & \textbf{0.1865}   & 19.21\%    \\
RAB-4-NeXt   & \textbf{0.1917}     & 0.9992    & 0.2146   & 16.98\%    \\
GRDN-NeXt+CBAM      & 0.1893    & 1.0001    & \textbf{0.1689}   & 26.85\%    \\
Tucodec-NeXt        & \textbf{0.0890}     & 0.1624    & 0.1586   & 61.47\%    \\
Tucodec-NeXt+CBAM   & \textbf{0.1870}     & 0.2662    & 0.2539   & 19.02\%    \\
Tucodec-vanilla     & \textbf{0.0858}     & 0.0939    & 0.1212   & \textbf{62.86\% }   \\
Tucodec-vanilla+CBAM       & 0.3172     & \textbf{0.1788 }   & 0.2805   & 22.56\%    \\\bottomrule
\end{tabular}
}
    \caption{\label{tab:activations}DA MSE variation with activation function in models trained for 150 epochs. All models used PReLU activations applied channel-wise.}
\end{table} 
In this experiment we investigated the effect of activation function on the best performing systems from the RB experiments. As the \textit{Tucodec} models were performing well, we investigated the effect of the different activations on all four RBs\footnote{Note that, the \textit{Tucodec} model has three GDN activations in its core encoder design (as shown in \fref{fig:zhou2019}) which were present throughout all experiments. Here we changed the activations in the RBs and RABs only.} The results are shown in Table~\ref{tab:activations}.

In IC, Cheng et al.~\cite{Cheng2018}, found that PReLU activations were superior to ReLUs while Ma et al.~\cite{Ma2018} found that GDNs outperformed ReLUs. We believe we are the first to compare GDNs to PReLUs and found that the latter is superior. Crucially, we found that GDNs were very unstable as the function they compute allows for division by zero. In fact, in two of the three cases in Table~\ref{tab:activations} for which GDNs performed `best' the models actually produced \texttt{inf} predictions for one of the 197 test set samples. This process occurred more often earlier in training than later, and with the test data than with the training data but as they work well in the \textit{Tucodec} backbone, we reasoned that they are unstable when the input distribution is unpredictable.

\subsubsection{L1 fine tuning} \label{sec:l1_finetuning}
    \begin{figure}[!htb]
        \center{\includegraphics[width=0.9\textwidth]
        {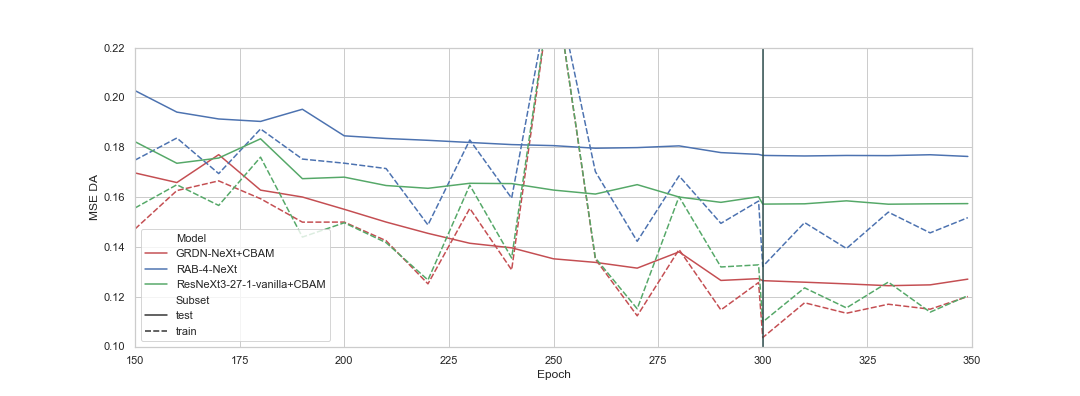}}
        \caption{\label{fig:fine_tuning}The MSE DA with L1 fine-tuning from epoch 300 onwards. The training-set DA MSE decreased but this was not accompanied with a test-set decrease. We investigated L1 fine tuning for all models in Table~\ref{tab:architecture_final} but just present a representative selection here.}
    \end{figure}

We experimented with the use of L1 fine-tuning late in the training process as recommended in \cite{Lu2019} but found that it did not give an appreciable benefit. In fact, as shown in \fref{fig:fine_tuning}, in our experiments it increased the degree of over fitting without providing any generalisation advantage. 

\subsubsection{Architecture Summary} 
\begin{table}[!htb]
\let\center\empty
\let\endcenter\relax
\centering
    \resizebox{0.9\textwidth}{!}{\begin{tabular}{lccc}
\toprule
\textbf{       Model}& \textbf{DA MSE} & \textbf{Execution Time (s)} & \textbf{Number of Parameters}\\\midrule
Backbone  & 0.1665 & 0.0897& 0.3M  \\
RDB3-27-4-vanilla+CBAM  & 0.1594 & 0.4666& 25.6M \\
ResNeXt3-27-1-vanilla+CBAM & 0.1548 & 0.1693& 3.5M\\
RAB-4-NeXt& 0.1723 & 0.1192& 1.3M\\
GRDN-NeXt+CBAM  & 0.1241 & 0.0983& 4.7M\\
Tucodec-vanilla & 0.0809 & 0.1294& 10.6M \\
Tucodec-NeXt & \textbf{0.0787} & \textbf{0.0537}& 2.5M\\
\bottomrule
\end{tabular}
 }
    \caption{Summary of the DA MSE and inference speeds of our best performing models after training to convergence.}
    \label{tab:architecture_final}
\end{table} 
We trained a selection of our best models to convergence and found that the Tucodec-NeXt and Tucodec-vanilla models performed best as shown in Table~\ref{tab:architecture_final}. The two models have very similar DA MSE values but the NeXt model is almost x2.5 faster during inference. As such, we use the Tucodec-NeXt model when making comparisons with reduced space DA in the following section.

\section{Evaluation}\label{sec:expt_TSVDvsAE}

In this section, we compare our system against Reduced space VarDA with TSVD as described in \cite{Arcucci2019a}. Our system has superior DA performance on the test set as shown in Table~\ref{tab:svd_ae}. This is not just true on average: our system is \textit{consistently} better (see \fref{fig:comp_time}) and space (see \ref{appendi:comp_DA_SVD}). Moreover, we show in \fref{fig:tsvd_modes} that our method has a DA MSE that is 15\% lower than the reduced space approach with $M=n $ even in the limit in which the method becomes Parish et al.'s approach as there is no truncation of $\bm{V}$ ($\tau = S$). This is surprising: even if our CAE was truly lossless, which it is not, the matrices $\bm{V}$ and $\bm{V}_l$ contain the same information (albeit the latter stores it more efficiently)\footnote{As an aside, note that in this case our method is x43 faster than the traditional approach.}. The better performance of our bi-reduced space approach might be explained by the poor conditioning in the mono-reduced space and the resulting numerical errors. This requires further research.

\begin{table}[!ht]
\let\center\empty
\let\endcenter\relax
\centering
\resizebox{0.6\textwidth}{!}{\begin{tabular}{lccc}
\toprule
\multicolumn{1}{c}{\textbf{Model }} & \textbf{DA MSE} & \textbf{ Excecution Time (s)} \\ \midrule
Ref MSE  & 1.0001 & -   \\ 
\midrule
TSVD, $\tau = 32$, $M = n$ & 0.1270 & 1.8597  \\
TSVD, $\tau = 32$, $M = 0.1n$ & 0.1270 & 0.2627 \\
TSVD, $\tau = 32$, $M = 0.01n$ & 0.1334 & 0.0443 \\
TSVD, $\tau = 32$, $M = 0.001n$ & 0.1680 &  \textbf{0.0390}\\
\midrule
Tucodec-NeXt  & \textbf{0.0787} & 0.0537  \\
\bottomrule
\end{tabular}

\caption{\label{tab:svd_ae}Comparison of our best \textit{Tucodec} model with the Arcucci et al. approach \cite{Arcucci2019a} which sets $\sigma_{\tau}= \sqrt{\sigma_1}$ = 32. Our DA MSE is 37\% lower than the best Arcucci et al. system.} 
\end{table}

\begin{figure}[!htb]
        \center{\includegraphics[width=.9\textwidth]
        {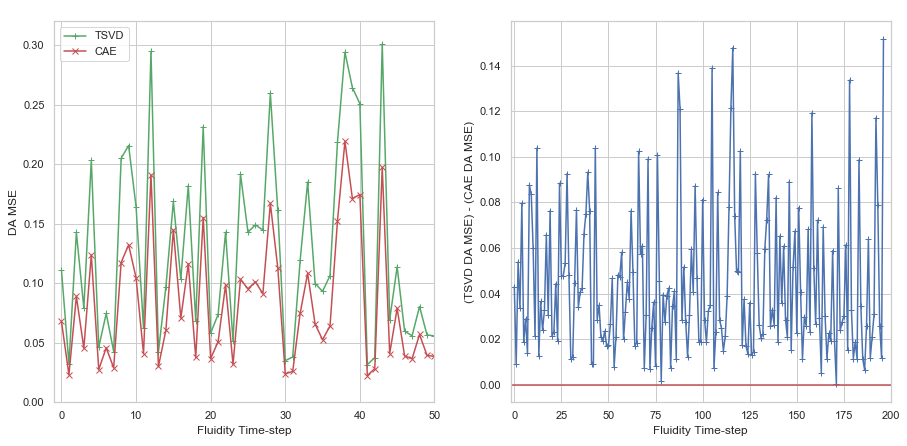}}
        \caption{\label{fig:comp_time}Comparison of TSVD ($\tau$ = 32, $M=n$) and AE data assimilation performance across sequential Fluidity time-steps. Note in Figure a) that the two methods find the same states difficult. In Figure b) we give the difference between the DA MSEs of the two methods for the whole test set. The proposed method performs better (is above the red line) in the vast majority of cases.}
      \end{figure}

\subsubsection{Performance-speed tradeoff}
\begin{figure}[!htb]
        \centering
        \includegraphics[width=\textwidth]
        {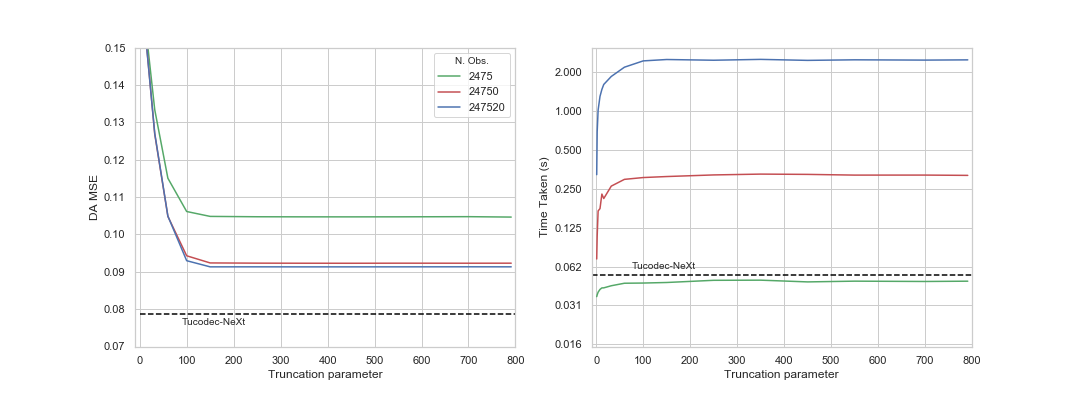}
        \caption{\label{fig:tsvd_modes}Effect of truncation parameter $\tau$ on a) DA MSE and b) online time. \textit{Tucodec} DA MSE and CPU execution time are marked with dashed black lines. Note the logarithmic scale on the y-axis in b). }
        \includegraphics[width=\textwidth]
        {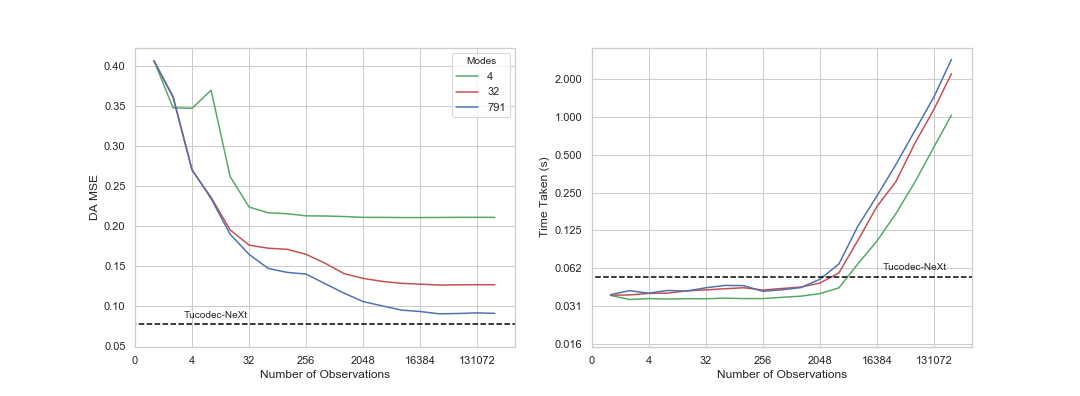}
        \caption{\label{fig:tsvd_obs}Effect of number of observations $M$ on a) DA MSE and b) execution time. The fact that the MSE is not monotonically increasing with modes=4 is due to the fact that we randomly choose a different subset of observations for each experiment. }
      \end{figure}

The reduced space approach in \cite{Arcucci2019a} has an acute performance-speed tradeoff occurring along three axes: 
\begin{enumerate}
    \item The size of the truncation parameter $\tau$. As this increases, the DA performance increases but the speed decreases as shown in \fref{fig:tsvd_modes}.  
    \item The number of observations $M$. As this increases, the DA performance increases but the speed decreases as shown in \fref{fig:tsvd_obs}. This is a stronger effect than 1. Noting the logarithmic scales in this \fref{fig:tsvd_obs}b, it is clear that there is only a small range of $M$ for which our method is slower than TSVD. 
    \item The size of the observation variance $\sigma_0$. We do not consider this here but \cite{Arcucci2019a} showed that as this parameter increases, performance increases but speed decreases. In all experiments here we used $\sigma_0 = \sigma_l =$ 0.005.
\end{enumerate}

Our system's evaluation speed is not sensitive to the number of observations\footnote{The performance of our system \textit{will} be affected by decreasing the number of observations.}, nor the value of $\sigma_l$. We show the performance-speed tradeoff for a range of models in \fref{fig:time_perf_tradeoff}. All timing measurements were carried out on the Intel Xeon E5-2690 v3 (Haswell) 2.60 GHz CPU and averaged over the test set. The clock was started at the beginning of the minimisation routine when all relevant data was already in memory. 
\begin{figure}[!htb]
    \centering
        \includegraphics[width=0.9\textwidth]
        {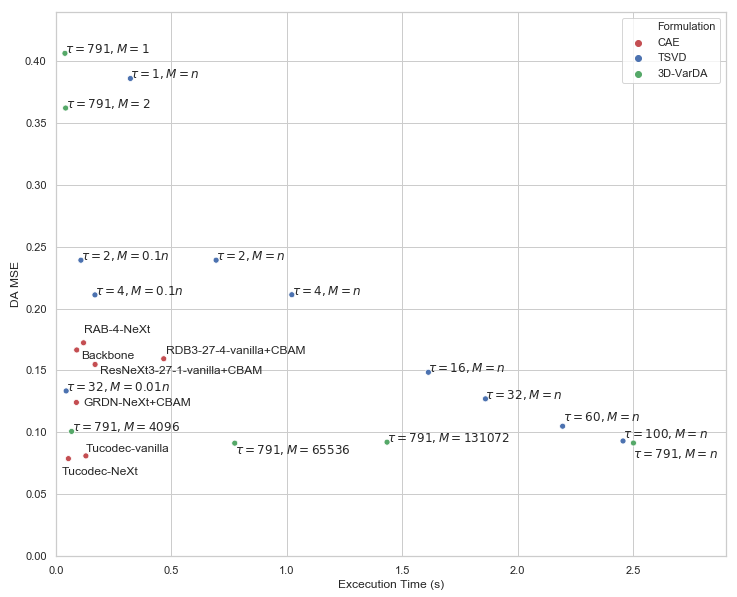}
        \caption{\label{fig:time_perf_tradeoff}Performance-speed tradeoff for a range of systems.}
      \end{figure}

\section{Discussion}\label{sec:discussion}
In this section we briefly discuss a few points raised in the course of this research.
\subsection{Optimal $\tau$ and $M$}
Considering \fref{fig:time_perf_tradeoff}, it appears that, for the combinations of $\tau$ and $M$ that we investigate here, the most successful pairing in the performance-speed tradeoff is $M = 4096$ and $\tau = 791$ (i.e. no truncation). It is worth making a few observations on this result:
\begin{enumerate}
    \item There was no way to know that this combination was the best in advance as it is data-set dependent. It took ~60 CPU hours to calculate the DA MSE on the test-set for the range of $\tau$ and $M$ displayed in \fref{fig:time_perf_tradeoff}. We note that this value could be reduced by a more intelligent search method, but draw the reader's attention to the fact that this is of the same order as the 15 hours required to train a Tucodec-NeXt model to convergence on a GPU. 
    \item This result is still 30\% slower and 20\% less accurate for DA than the Tucodec-Next model.
\end{enumerate}

\subsection{Hardware Accelerators}
All timing comparisons were made on the CPU as we did not have a GPU implementation of Arcucci et al.’s routine. Using a K80 GPU with our method provided a speed-up of approximately 40\% for our method. This was with a \textit{very} poor implementation in which the data was transferred from the CPU to the GPU and back again. We expect an optimised implementation of our system on a modern accelerator to achieve a much larger relative speed-up over the figures here than the equivalent optimised version of Arcucci et al.’s method. A full defense of this claim might take another paper but we briefly sketch our our argument for this claim in the following paragraph. 

Recall that the bottleneck in our system is a fully convolutional decoder while, in the Arcucci et al. routine, the cost is dominated by large matrix multiplications and vector dot-products. The convolutional kernel parameters are used many times in a forward pass but there is almost no data reuse in the Arcucci et al. case. As such, the latter will be bandwidth-limited but the former may enter the compute-bound domain on some hardware-platforms\footnote{Convolution may be memory-bound depending on channel size and dimensions of the feature map.}. Historically, it has proven easier to accelerate compute-bound processes than memory bound ones and there is reason to believe that this will continue, not least because there is a whole industry built-around the design of systems that specifically accelerate convolutional inference workloads. We will not attempt to review the hardware options here but if the Graphcore `IPU', which is set to ship early in 2020, delivers on its marketing promises \cite{Lacey2018}, it would speed up our inference by up to three order of magnitude. There are also lower-cost, lower-power options such as FPGAs \cite{Chen2015}. As  such, we believe the quoted figures underestimate the latency advantage of our approach. 

\subsection{Other Acceleration Options}
During our architecture search, we optimised for DA performance rather than inference latency. Had we been focusing for the latter, there are a number of techniques aside from hardware acceleration that could be used to aid this. Firstly, a thinner decoder could be used as suggested in \cite{Theis2017} since only the decoder is evaluated in the online setting. Secondly, the existing network could be quantized \cite{Jacob2018} or pruned \cite{Li2016,Bellec2017} or both \cite{Han2016} to provide a substantial speed-increase. Additionally, convolutional acceleration approaches such as Pixel Shuffle \cite{Shi2016} or factorised convolutions \cite{Wang2018} might be employed to reduce the number of FLOPs in the forward decoder pass. Finally, there are innumerable small architectural changes that could be made in a similar vein to the replacement of vanilla blocks in the original \textit{Tucodec} model with NeXt blocks. None of the above strategies are available to traditional VarDA approaches. We note that some of these techniques will reduce the performance of our system but, as our approach has a considerable performance cushion over traditional approaches in its current form, this may be acceptable in some settings.

\subsection{Importance of Architecture}
The results in this paper demonstrate the central importance of using good CAE architectures. This field is moving exceptionally fast: our Backbone network, was state-of-the-art for image compression in 2017 \cite{Theis2017} but gives a DA MSE that is a) double that of the \textit{Tucodec} models and b) considerably poorer for DA than the Arcucci et. al. approach with $\tau = 32$ and $M=0.01$.

We found that it was non-trivial to extend many architectures to three spatial input dimensions and it required a large amount of manual tuning of the channel sizes so as not to create unreasonably large 4D feature maps (three spatial dimensions and one channel). In particular, our implementation of the GDRN \cite{Kim2019} had extreme computational requirements in 3D which, despite its modest number of parameters (see Table~\ref{tab:architecture_final}), took almost three times longer to train than any other network.

\section{Conclusions and Future Work}\label{sec:conc_fw}

We have presented a new Bi-reduced space 3D-VarDA formulation and show that, in combination with the Zhou et al. or `\textit{Tucodec}' image compression CAE, this method gives superior data assimilation performance in comparison with reduced space VarDA regardless of the parameters used in the latter case. We have demonstrated that our method is also faster in the majority of scenarios. On the theoretical side, we show that our method produces approximately equivalent solutions to the traditional method at lower computational complexity. Unlike the previous approach which is in $\mathcal{O}(M^2)$ for large $M$, our method does not penalize the collection of more observation data. We have released our work in a well tested Python module \texttt{VarDACAE}.

There were many extensions to this work which we would have liked to explore further. We feel that the most important of these is the validation of our hypothesis that is possible to create an observation encoder network $f^o$ to calculate the latent misfits $\bm{d}_l$. We would also have liked to apply our approach to 4D-VarDA, validate it on other data sets and investigate alternatives to the L-BFGS minimization routine. A more substantial extension would involve integrating our method with CAE-based ROM approaches to produce a single end-to-end network for reduced space data assimilation and we believe this would be complemented by the use of data assimilation localization techniques \cite{Montmerle2018}. Finally, there is also potential for the use of VAEs within the proposed system to enforce orthogonality in the CAE latent dimension.

\section*{Acknowledgements}
This work is supported by the EPSRC Grand Challenge grant ``Managing Air for Green Inner Cities'' (MAGIC) EP/N010221/1, by the EPSRC Centre for Mathematics of Precision Healthcare EP/N0145291/1 and the EP/T003189/1 Health assessment across biological length scales for personal pollution exposure and its mitigation (INHALE). Thanks to Dr. Laetitia Mottet for the set up of the full model in Fluidity.
M. Molina-Solana was supported by European Union's H2020 MSCA-IF (ga. No. 743623) and Athenea3i (ga. No. 754446) programmes.

\bibliographystyle{elsarticle-num}
\bibliography{DA_report}

\begin{thebibliography}{10}
\expandafter\ifx\csname url\endcsname\relax
  \def\url#1{\texttt{#1}}\fi
\expandafter\ifx\csname urlprefix\endcsname\relax\def\urlprefix{URL }\fi
\expandafter\ifx\csname href\endcsname\relax
  \def\href#1#2{#2} \def\path#1{#1}\fi

\bibitem{MetOffice2019a}
{Met. Office},
  \href{https://www.metoffice.gov.uk/research/weather/data-assimilation/data-assimilation-methods}{{Data
  Assimilation Methods}} (2019).
\newline\urlprefix\url{https://www.metoffice.gov.uk/research/weather/data-assimilation/data-assimilation-methods}

\bibitem{arcucci2017variational}
R.~Arcucci, L.~D'Amore, J.~Pistoia, R.~Toumi, A.~Murli, On the variational data
  assimilation problem solving and sensitivity analysis, Journal of
  Computational Physics 335 (2017) 311--326.

\bibitem{arcucci2018toward}
R.~Arcucci, L.~Carracciuolo, R.~Toumi, Toward a preconditioned scalable 3dvar
  for assimilating sea surface temperature collected into the caspian sea,
  JOURNAL OF NUMERICAL ANALYSIS, INDUSTRIAL AND APPLIED MATHEMATICS 12~(1-2)
  (2018) 9--28.

\bibitem{Vincent2008}
P.~Vincent, H.~Larochelle, Y.~Bengio, P.~A. Manzagol, {Extracting and composing
  robust features with denoising autoencoders}, in: Proceedings of the 25th
  International Conference on Machine Learning (ICML'08), 2008, pp. 1096--1103.
\newblock \href {https://doi.org/10.1145/1390156.1390294}
  {\path{doi:10.1145/1390156.1390294}}.

\bibitem{Lore2017}
K.~G. Lore, A.~Akintayo, S.~Sarkar,
  \href{http://dx.doi.org/10.1016/j.patcog.2016.06.008}{{LLNet: A deep
  autoencoder approach to natural low-light image enhancement}}, Pattern
  Recognition 61 (2017) 650--662.
\newblock \href {https://doi.org/10.1016/j.patcog.2016.06.008}
  {\path{doi:10.1016/j.patcog.2016.06.008}}.
\newline\urlprefix\url{http://dx.doi.org/10.1016/j.patcog.2016.06.008}

\bibitem{Theis2017}
L.~Theis, W.~Shi, A.~Cunningham, F.~Husz{\'{a}}r,
  \href{http://arxiv.org/abs/1703.00395}{{Lossy Image Compression with
  Compressive Autoencoders}}, in: 5th International Conference on Learning
  Representations, {ICLR} 2017, Conference Track Proceedings, Toulon, France,
  2017.
\newblock \href {http://arxiv.org/abs/1703.00395} {\path{arXiv:1703.00395}}.
\newline\urlprefix\url{http://arxiv.org/abs/1703.00395}

\bibitem{Balle2018}
J.~Ball{\'{e}}, D.~Minnen, S.~Singh, S.~J. Hwang, N.~Johnston,
  \href{http://arxiv.org/abs/1802.01436}{{Variational image compression with a
  scale hyperprior}} (2018).
\newblock \href {http://arxiv.org/abs/1802.01436} {\path{arXiv:1802.01436}}.
\newline\urlprefix\url{http://arxiv.org/abs/1802.01436}

\bibitem{Tribbia2004}
J.~Tribbia, D.~P. Baumhefner, {Scale Interactions and Atmospheric
  Predictability: An Updated Perspective}, Monthly Weather Review 132~(3)
  (2004) 703--713.
\newblock \href
  {https://doi.org/10.1175/1520-0493(2004)132<0703:siaapa>2.0.co;2}
  {\path{doi:10.1175/1520-0493(2004)132<0703:siaapa>2.0.co;2}}.

\bibitem{Lorenc1986}
A.~C. Lorenc, {Analysis methods of numerical weather prediction}, Quarterly
  Journal of the Royal Meteorological Society 112~(474) (1986) 1177--1194.
\newblock \href {https://doi.org/10.1002/qj.49711247414}
  {\path{doi:10.1002/qj.49711247414}}.

\bibitem{Lorenc1988}
A.~C. Lorenc, {Optimal Nonlinear objective analysis}, Quarterly Journal of the
  Royal Meteorological Society 114~(479) (1988) 205--240.
\newblock \href {https://doi.org/10.1002/qj.49711447911}
  {\path{doi:10.1002/qj.49711447911}}.

\bibitem{Courtier1994}
P.~Courtier, J.-N. Th{\'{e}}paut, A.~Hollingsworth,
  \href{https://doi.org/10.1002/qj.49712051912}{{A strategy for operational
  implementation of 4D‐Var, using an incremental approach}}, Quarterly
  Journal of the Royal Meteorological Society 120~(519) (1994) 1367--1387.
\newblock \href {https://doi.org/10.1002/qj.49712051912}
  {\path{doi:10.1002/qj.49712051912}}.
\newline\urlprefix\url{https://doi.org/10.1002/qj.49712051912}

\bibitem{Courtier1998}
P.~Courtier, E.~Anderson, W.~Heckley, D.~Vasiljevec, M.~Hamrud,
  A.~Hollingsworth, F.~Rabier, M.~Fisher, J.~Pailleux,
  \href{https://doi.org/10.1002/qj.49712455002}{{The ECMWF implementation of
  three‐dimensional variational assimilation (3D‐Var). I: Formulation}},
  Quarterly Journal of the Royal Meteorological Society 124~(550) (1998)
  1783--1807.
\newblock \href {https://doi.org/10.1002/qj.49712455002}
  {\path{doi:10.1002/qj.49712455002}}.
\newline\urlprefix\url{https://doi.org/10.1002/qj.49712455002}

\bibitem{Huang2004a}
W.~Huang, A.~J. Bourgeois, Q.~N. Xiao, D.~M. Barker, Y.-R. Guo, {A
  Three-Dimensional Variational Data Assimilation System for MM5:
  Implementation and Initial Results}, Monthly Weather Review 132~(4) (2004)
  897--914.
\newblock \href
  {https://doi.org/10.1175/1520-0493(2004)132<0897:atvdas>2.0.co;2}
  {\path{doi:10.1175/1520-0493(2004)132<0897:atvdas>2.0.co;2}}.

\bibitem{Evensen2003}
G.~Evensen, {The Ensemble Kalman Filter: Theoretical formulation and practical
  implementation}, Ocean Dynamics 53~(4) (2003) 343--367.
\newblock \href {https://doi.org/10.1007/s10236-003-0036-9}
  {\path{doi:10.1007/s10236-003-0036-9}}.

\bibitem{Dobricic2008}
S.~Dobricic, N.~Pinardi, {An oceanographic three-dimensional variational data
  assimilation scheme}, Ocean Modelling 22~(3-4) (2008) 89--105.
\newblock \href {https://doi.org/10.1016/j.ocemod.2008.01.004}
  {\path{doi:10.1016/j.ocemod.2008.01.004}}.

\bibitem{Lang2019}
M.~Lang, M.~J. Owens, {A Variational Approach to Data Assimilation in the Solar
  Wind}, Space Weather 17~(1) (2019) 59--83.
\newblock \href {https://doi.org/10.1029/2018SW001857}
  {\path{doi:10.1029/2018SW001857}}.

\bibitem{Arcucci2019a}
R.~Arcucci, L.~Mottet, C.~Pain, Y.~K. Guo,
  \href{https://doi.org/10.1016/j.jcp.2018.10.042}{{Optimal reduced space for
  Variational Data Assimilation}}, Journal of Computational Physics 379 (2019)
  51--69.
\newblock \href {https://doi.org/10.1016/j.jcp.2018.10.042}
  {\path{doi:10.1016/j.jcp.2018.10.042}}.
\newline\urlprefix\url{https://doi.org/10.1016/j.jcp.2018.10.042}

\bibitem{arcucci2018effective}
R.~Arcucci, C.~Pain, Y.-K. Guo, Effective variational data assimilation in
  air-pollution prediction, Big Data Mining and Analytics 1~(4) (2018)
  297--307.

\bibitem{Parrish1992}
D.~Parrish, J.~Derber, {The National Meteorological Center's Spectral
  Statistical-Interpolation Analysis System.pdf} (1992).

\bibitem{Zhou2019}
L.~Zhou, Z.~Sun, X.~Wu, J.~Wu,
  \href{http://openaccess.thecvf.com/content{\_}CVPRW{\_}2019/papers/CLIC
  2019/Zhou{\_}End-to-end{\_}Optimized{\_}Image{\_}Compression{\_}with{\_}Attention{\_}Mechanism{\_}CVPRW{\_}2019{\_}paper.pdf}{{End-to-end
  Optimized Image Compression with Attention Mechanism}} (2019).
\newline\urlprefix\url{http://openaccess.thecvf.com/content{\_}CVPRW{\_}2019/papers/CLIC
  2019/Zhou{\_}End-to-end{\_}Optimized{\_}Image{\_}Compression{\_}with{\_}Attention{\_}Mechanism{\_}CVPRW{\_}2019{\_}paper.pdf}

\bibitem{Kim2019}
D.-W. Kim, J.~R. Chung, S.-W. Jung,
  \href{http://arxiv.org/abs/1905.11172}{Grdn: Grouped residual dense network
  for real image denoising and gan-based real-world noise modeling}, in: CVPR
  Workshops, 2019.
\newline\urlprefix\url{http://arxiv.org/abs/1905.11172}

\bibitem{He2015}
K.~He, X.~Zhang, S.~Ren, J.~Sun, {Deep residual learning for image
  recognition}, Proceedings of the IEEE Computer Society Conference on Computer
  Vision and Pattern Recognition 2016-Decem (2016) 770--778.
\newblock \href {http://arxiv.org/abs/arXiv:1512.03385v1}
  {\path{arXiv:arXiv:1512.03385v1}}, \href
  {https://doi.org/10.1109/CVPR.2016.90} {\path{doi:10.1109/CVPR.2016.90}}.

\bibitem{Xie2017a}
S.~Xie, R.~Girshick, P.~Doll{\'{a}}r, Z.~Tu, K.~He, {Aggregated residual
  transformations for deep neural networks}, Proceedings - 30th IEEE Conference
  on Computer Vision and Pattern Recognition, CVPR 2017 2017-Janua (2017)
  5987--5995.
\newblock \href {http://arxiv.org/abs/arXiv:1611.05431v2}
  {\path{arXiv:arXiv:1611.05431v2}}, \href
  {https://doi.org/10.1109/CVPR.2017.634} {\path{doi:10.1109/CVPR.2017.634}}.

\bibitem{Bannister2017a}
R.~N. Bannister, {A review of operational methods of variational and
  ensemble-variational data assimilation}, Quarterly Journal of the Royal
  Meteorological Society 143~(703) (2017) 607--633.
\newblock \href {https://doi.org/10.1002/qj.2982} {\path{doi:10.1002/qj.2982}}.

\bibitem{Zupanski2002}
D.~Zupanski, {A General Weak Constraint Applicable to Operational 4DVAR Data
  Assimilation Systems}, Monthly Weather Review 125~(9) (2002) 2274--2292.
\newblock \href
  {https://doi.org/10.1175/1520-0493(1997)125<2274:agwcat>2.0.co;2}
  {\path{doi:10.1175/1520-0493(1997)125<2274:agwcat>2.0.co;2}}.

\bibitem{Chai2007}
T.~Chai, G.~R. Carmichael, Y.~Tang, A.~Sandu, M.~Hardesty, P.~Pilewskie,
  S.~Whitlow, E.~V. Browell, M.~A. Avery, P.~N{\'{e}}d{\'{e}}lec, J.~T.
  Merrill, A.~M. Thompson, E.~Williams, {Four-dimensional data assimilation
  experiments with International Consortium for Atmospheric Research on
  Transport and Transformation ozone measurements}, Journal of Geophysical
  Research Atmospheres 112~(12) (2007) 1--18.
\newblock \href {https://doi.org/10.1029/2006JD007763}
  {\path{doi:10.1029/2006JD007763}}.

\bibitem{Cheng2010}
H.~Cheng, M.~Jardak, M.~Alexe, A.~Sandu, {A hybrid approach to estimating error
  covariances in variational data assimilation}, Tellus, Series A: Dynamic
  Meteorology and Oceanography 62~(3) (2010) 1--18.
\newblock \href {https://doi.org/10.1029/2006JD007763}
  {\path{doi:10.1029/2006JD007763}}.

\bibitem{Lorenz1956}
E.~N. Lorenz, {Empirical Orthogonal Functions and Statistical Weather
  Prediction} (1956).

\bibitem{Rumelhart1986}
D.~E. Rumelhart, G.~Hinton, R.~J. Williams,
  \href{http://www.dtic.mil/dtic/tr/fulltext/u2/a164453.pdf}{{Learning Internal
  Representations by Error Propagation}}, in: Parallel Distributed Processing:
  Explorations in the Microstructure of Cognition: Foundations, MIT Press,
  1986, Ch.~8, pp. 318--362.
\newline\urlprefix\url{http://www.dtic.mil/dtic/tr/fulltext/u2/a164453.pdf}

\bibitem{Baldi2012}
P.~Baldi,
  \href{http://proceedings.mlr.press/v27/baldi12a/baldi12a.pdf}{{Autoencoders,
  Unsupervised Learning, and Deep Architectures}} (2012) 37--50.
\newline\urlprefix\url{http://proceedings.mlr.press/v27/baldi12a/baldi12a.pdf}

\bibitem{Baldi1989}
P.~Baldi, K.~Hornik, {Neural networks and principal component analysis:
  Learning from examples without local minima}, Neural Networks 2~(1) (1989)
  53--58.
\newblock \href {https://doi.org/10.1016/0893-6080(89)90014-2}
  {\path{doi:10.1016/0893-6080(89)90014-2}}.

\bibitem{Lu2019}
M.~Lu, T.~Chen, H.~Liu, Z.~Ma, {Learned Image Restoration for VVC Intra Coding}
  (2019) 2--5.

\bibitem{Sakurada2014}
M.~Sakurada, T.~Yairi, {Anomaly detection using autoencoders with nonlinear
  dimensionality reduction}, ACM International Conference Proceeding Series
  02-Decembe (2014) 4--11.
\newblock \href {https://doi.org/10.1145/2689746.2689747}
  {\path{doi:10.1145/2689746.2689747}}.

\bibitem{Baur2019}
C.~Baur, B.~Wiestler, S.~Albarqouni, N.~Navab, {Deep autoencoding models for
  unsupervised anomaly segmentation in brain MR images}, in: A.~Crimi,
  S.~Bakas, H.~Kuijf, F.~Keyvan, M.~Reyes, T.~van Walsum (Eds.), Brainlesion:
  Glioma, Multiple Sclerosis, Stroke and Traumatic Brain Injuries, Vol. 11383,
  Springer, 2019, pp. 161--169.
\newblock \href {https://doi.org/10.1007/978-3-030-11723-8_16}
  {\path{doi:10.1007/978-3-030-11723-8_16}}.

\bibitem{Kingma2013}
D.~P. Kingma, M.~Welling, \href{http://arxiv.org/abs/1312.6114}{{Auto-Encoding
  Variational Bayes}}, in: Proc. 2nd International Conference on Learning
  Representations (ICLR), 2013.
\newblock \href {http://arxiv.org/abs/1312.6114} {\path{arXiv:1312.6114}}.
\newline\urlprefix\url{http://arxiv.org/abs/1312.6114}

\bibitem{Kusner2017}
M.~J. Kusner, B.~Paige, J.~M. Hem{\'{a}}ndez-Lobato, {Grammar variational
  autoencoder}, 34th International Conference on Machine Learning, ICML 2017 4
  (2017) 3072--3084.
\newblock \href {http://arxiv.org/abs/arXiv:1703.01925v1}
  {\path{arXiv:arXiv:1703.01925v1}}.

\bibitem{Pu2016}
Y.~Pu, Z.~Gan, R.~Henao, X.~Yuan, C.~Li, A.~Stevens, L.~Carin, {Variational
  autoencoder for deep learning of images, labels and captions}, in: Advances
  in Neural Information Processing Systems, 2016, pp. 2360--2368.

\bibitem{deeplearningbook2017}
I.~Goodfellow, {Deep Learning}, The Brain {\&} Neural Networks 24~(1) (2017)
  1--2.
\newblock \href {https://doi.org/10.3902/jnns.24.1}
  {\path{doi:10.3902/jnns.24.1}}.

\bibitem{Huang2019}
C.~Huang, H.~Liu, T.~Chen, S.~Pu, Q.~Shen, Z.~Ma,
  \href{http://arxiv.org/abs/1904.03851}{{Extreme Image Compression via
  Multiscale Autoencoders With Generative Adversarial Optimization}} (2019)
  0--4\href {http://arxiv.org/abs/1904.03851} {\path{arXiv:1904.03851}}.
\newline\urlprefix\url{http://arxiv.org/abs/1904.03851}

\bibitem{Wang2017}
Z.~Wang, D.~Xiao, F.~Fang, C.~Pain, {Model identification of reduced order
  fluid dynamics systems using deep learning}, Int J Numer Meth Fluids 86
  (2018) 255--268.
\newblock \href {https://doi.org/10.1002/fld.4416}
  {\path{doi:10.1002/fld.4416}}.

\bibitem{Merwe2007}
R.~van~der Merwe, T.~K. Leen, Z.~Lu, S.~Frolov, A.~M. Baptista, {Fast neural
  network surrogates for very high dimensional physics-based models in
  computational oceanography}, Neural Networks 20~(4) (2007) 462--478.
\newblock \href {https://doi.org/10.1016/j.neunet.2007.04.023}
  {\path{doi:10.1016/j.neunet.2007.04.023}}.

\bibitem{Wang2016}
M.~Wang, H.~X. Li, X.~Chen, Y.~Chen, {Deep Learning-Based Model Reduction for
  Distributed Parameter Systems}, IEEE Transactions on Systems, Man, and
  Cybernetics: Systems 46~(12) (2016) 1664--1674.
\newblock \href {https://doi.org/10.1109/TSMC.2016.2605159}
  {\path{doi:10.1109/TSMC.2016.2605159}}.

\bibitem{Loh2018}
K.~Loh, P.~S. Omrani, R.~van~der Linden,
  \href{http://arxiv.org/abs/1802.05141}{{Deep Learning and Data Assimilation
  for Real-Time Production Prediction in Natural Gas Wells}} (2018).
\newblock \href {http://arxiv.org/abs/1802.05141} {\path{arXiv:1802.05141}}.
\newline\urlprefix\url{http://arxiv.org/abs/1802.05141}

\bibitem{Sutskever2014}
I.~Sutskever, O.~Vinyals, Q.~V. Le, {Sequence to sequence learning with neural
  networks}, Advances in Neural Information Processing Systems 4~(January)
  (2014) 3104--3112.
\newblock \href {http://arxiv.org/abs/arXiv:1409.3215v3}
  {\path{arXiv:arXiv:1409.3215v3}}.

\bibitem{Balle2015}
J.~Ball{\'{e}}, V.~Laparra, E.~P. Simoncelli,
  \href{http://arxiv.org/abs/1511.06281}{{Density Modeling of Images using a
  Generalized Normalization Transformation}} (2015) 1--14\href
  {http://arxiv.org/abs/1511.06281} {\path{arXiv:1511.06281}}.
\newline\urlprefix\url{http://arxiv.org/abs/1511.06281}

\bibitem{Szegedy2015}
C.~Szegedy, W.~Liu, Y.~Jia, P.~Sermanet, S.~Reed, D.~Anguelov, D.~Erhan,
  V.~Vanhoucke, A.~Rabinovich, {Going deeper with convolutions}, Proceedings of
  the IEEE Computer Society Conference on Computer Vision and Pattern
  Recognition 07-12-June (2015) 1--9.
\newblock \href {http://arxiv.org/abs/arXiv:1409.4842v1}
  {\path{arXiv:arXiv:1409.4842v1}}, \href
  {https://doi.org/10.1109/CVPR.2015.7298594}
  {\path{doi:10.1109/CVPR.2015.7298594}}.

\bibitem{Ronneberger2015}
O.~Ronneberger, P.~Fischer, T.~Brox, {U-net: Convolutional networks for
  biomedical image segmentation}, Lecture Notes in Computer Science (including
  subseries Lecture Notes in Artificial Intelligence and Lecture Notes in
  Bioinformatics) 9351 (2015) 234--241.
\newblock \href {http://arxiv.org/abs/arXiv:1505.04597v1}
  {\path{arXiv:arXiv:1505.04597v1}}, \href
  {https://doi.org/10.1007/978-3-319-24574-4_28}
  {\path{doi:10.1007/978-3-319-24574-4_28}}.

\bibitem{Bahdanau2014}
D.~Bahdanau, K.~Cho, Y.~Bengio, \href{http://arxiv.org/abs/1409.0473}{{Neural
  Machine Translation by Jointly Learning to Align and Translate}} (2014)
  1--15\href {http://arxiv.org/abs/1409.0473} {\path{arXiv:1409.0473}}.
\newline\urlprefix\url{http://arxiv.org/abs/1409.0473}

\bibitem{Zhang2018}
Y.~Zhang, Y.~Tian, Y.~Kong, B.~Zhong, Y.~Fu,
  \href{http://arxiv.org/abs/1812.10477}{{Residual Dense Network for Image
  Restoration}} 13~(9) (2018) 1--14.
\newblock \href {http://arxiv.org/abs/1812.10477} {\path{arXiv:1812.10477}}.
\newline\urlprefix\url{http://arxiv.org/abs/1812.10477}

\bibitem{Woo2018}
S.~Woo, J.~Park, J.-y. Lee, I.~S. Kweon, {CBAM: Convolutional Block Attention
  Module}.

\bibitem{Zhang2019}
Y.~Zhang, K.~Li, K.~Li, B.~Zhong, Y.~Fu,
  \href{http://arxiv.org/abs/1903.10082}{{Residual Non-local Attention Networks
  for Image Restoration}} (2019) 1--18\href {http://arxiv.org/abs/1903.10082}
  {\path{arXiv:1903.10082}}.
\newline\urlprefix\url{http://arxiv.org/abs/1903.10082}

\bibitem{Mentzer2018}
F.~Mentzer, E.~Agustsson, M.~Tschannen, R.~Timofte, L.~V. Gool, {Conditional
  Probability Models for Deep Image Compression}, Proceedings of the IEEE
  Computer Society Conference on Computer Vision and Pattern Recognition (2018)
  4394--4402\href {http://arxiv.org/abs/arXiv:1801.04260v4}
  {\path{arXiv:arXiv:1801.04260v4}}, \href
  {https://doi.org/10.1109/CVPR.2018.00462}
  {\path{doi:10.1109/CVPR.2018.00462}}.

\bibitem{Cheng2018}
Z.~Cheng, H.~Sun, M.~Takeuchi, J.~Katto, {Deep Convolutional AutoEncoder-based
  Lossy Image Compression}, 2018 Picture Coding Symposium, PCS 2018 -
  Proceedings (2018) 253--257\href {https://doi.org/10.1109/PCS.2018.8456308}
  {\path{doi:10.1109/PCS.2018.8456308}}.

\bibitem{Zhou2018}
L.~Zhou, C.~Cai, Y.~Gao, S.~Su, J.~Wu, {Variational Autoencoder for Low
  Bit-rate Image Compression}, IEEE International Conference on Computer Vision
  and Pattern Recognition (2018) 2617--2620.

\bibitem{Chen2018}
Z.~Chen, Y.~Li, F.~Liu, Z.~Liu, X.~Pan, W.~Sun, Y.~Wang, Y.~Zhou, H.~Zhu,
  S.~Liu, \href{http://www.compression.cc/leaderboard/}{{CNN-Optimized Image
  Compression with Uncertainty based Resource Allocation}} (2018) 2559--2562.
\newline\urlprefix\url{http://www.compression.cc/leaderboard/}

\bibitem{Fan2018}
Y.~Fan, J.~Yu, T.~S. Huang, {Wide-activated Deep Residual Networks based
  Restoration for BPG-compressed Images}, Cvpr2018 (2018) 2621--2624.

\bibitem{Cho2019a}
S.~Cho, J.~Lee, J.~Kim, Y.~Kim, D.-w. Kim, J.~R. Chung, S.-w. Jung, {Low
  Bit-rate Image Compression based on Post-processing with Grouped Residual
  Dense Network} (2019) 1--5.

\bibitem{ZhouJ2019}
J.~Zhou, S.~Wen, A.~Nakagawa, K.~Kazui, Z.~Tan, {Multi-scale and
  Context-adaptive Entropy Model for Image Compression}, in: IEEE International
  Conference on Computer Vision and Pattern Recognition, 2019, pp. 4321--4324.

\bibitem{Li2019}
M.~Li, C.~Xia, J.~Hu, Z.~Huang, Y.~Zhang, D.~Chen, J.~Zan, G.~Li, J.~Nie,
  {VimicroABCnet: An Image Coder Combining A Better Color Space Conversion
  Algorithm and A Post Enhancing Network} (2019) 2--6.

\bibitem{VVC2019}
{Versatile video coding reference software version 4.0 (VTM-4.0)}.

\bibitem{Mentzer2019}
F.~Mentzer, E.~Agustsson, M.~Tschannen, R.~Timofte, L.~{Van Gool},
  \href{http://arxiv.org/abs/1811.12817}{{Practical Full Resolution Learned
  Lossless Image Compression}} (2018) 1--14\href
  {http://arxiv.org/abs/1811.12817} {\path{arXiv:1811.12817}}.
\newline\urlprefix\url{http://arxiv.org/abs/1811.12817}

\bibitem{Li2018}
M.~Li, W.~Zuo, S.~Gu, D.~Zhao, D.~Zhang, {Learning Convolutional Networks for
  Content-Weighted Image Compression}, Proceedings of the IEEE Computer Society
  Conference on Computer Vision and Pattern Recognition (2018) 3214--3223\href
  {http://arxiv.org/abs/arXiv:1703.10553v2} {\path{arXiv:arXiv:1703.10553v2}},
  \href {https://doi.org/10.1109/CVPR.2018.00339}
  {\path{doi:10.1109/CVPR.2018.00339}}.

\bibitem{Doersch2016}
C.~Doersch, \href{http://arxiv.org/abs/1606.05908}{{Tutorial on Variational
  Autoencoders}} (2016) 1--23\href {http://arxiv.org/abs/1606.05908}
  {\path{arXiv:1606.05908}}.
\newline\urlprefix\url{http://arxiv.org/abs/1606.05908}

\bibitem{Montmerle2018}
T.~Montmerle, Y.~Michel, {\'{E}}.~Arbogast, B.~M{\'{e}}n{\'{e}}trier,
  P.~Brousseau, {A 3D ensemble variational data assimilation scheme for the
  limited-area AROME model: Formulation and preliminary results}, Quarterly
  Journal of the Royal Meteorological Society 144~(716) (2018) 2196--2215.
\newblock \href {https://doi.org/10.1002/qj.3334} {\path{doi:10.1002/qj.3334}}.

\bibitem{Hansen2006}
P.~C. Hansen, J.~G. Nagy, D.~P. O'Leary,
  \href{http://vdisk.weibo.com/file/list{\%}5Cnpapers2://publication/uuid/5CFDF79C-2BB9-4EBA-A02D-8427262C13A2}{{Deblurrinq
  Images: Matrices, Spectra and Filtering}}, Matrices, Spectra, and Filtering
  (2006) 1--145.
\newline\urlprefix\url{http://vdisk.weibo.com/file/list{\%}5Cnpapers2://publication/uuid/5CFDF79C-2BB9-4EBA-A02D-8427262C13A2}

\bibitem{Lorenc2018}
A.~C. Lorenc, M.~Jardak, {A comparison of hybrid variational data assimilation
  methods for global NWP}, Quarterly Journal of the Royal Meteorological
  Society 144~(717) (2018) 2748--2760.
\newblock \href {https://doi.org/10.1002/qj.3401} {\path{doi:10.1002/qj.3401}}.

\bibitem{SVD2013}
A.~Cline, I.~Dhillon, {Computation of the Singular Value Decomposition}, in:
  L.~Hogben (Ed.), Handbook of Linear Algebra, 2nd Edition, 2013.

\bibitem{Liu2019}
M.~Liu, D.~Grana, {Ensemble-based seismic history matching with data
  re-parameterization using convolutional autoencoder}, in: 2018 SEG
  International Exposition and Annual Meeting, SEG 2018, no. August, 2019, pp.
  3156--3160.
\newblock \href {https://doi.org/10.1190/segam2018-2997988.1}
  {\path{doi:10.1190/segam2018-2997988.1}}.

\bibitem{Quilodran2019}
C.~A. {Quilodran Casas}, N.~Sparks, R.~Toumi, {Fast ocean data assimilation
  using a neural-network reduced-space regional ocean model of the North Brazil
  Current}, Progress in Oceanography (2019).

\bibitem{Springenberg2015}
J.~T. Springenberg, A.~Dosovitskiy, T.~Brox, M.~Riedmiller, {Striving for
  Simplicity: the All Convolutional Net}, in: 3rd International Conference on
  Learning Representations, ICLR 2015 (workshop track), San Diego, CA, USA,
  2015, pp. 1--14.
\newblock \href {http://arxiv.org/abs/arXiv:1412.6806v3}
  {\path{arXiv:arXiv:1412.6806v3}}.

\bibitem{Krizhevsky2012}
A.~Krizhevsky, I.~Sutskever, G.~E. Hinton, {ImageNet Classification with Deep
  Convolutional Neural Networks Alex} (2012) 60--1--60--16.

\bibitem{Simonyan2015}
K.~Simonyan, A.~Zisserman, \href{http://arxiv.org/abs/1409.1556}{{Very Deep
  Convolutional Networks for Large-Scale Image Recognition}} (2015) 1--14\href
  {http://arxiv.org/abs/1409.1556} {\path{arXiv:1409.1556}}.
\newline\urlprefix\url{http://arxiv.org/abs/1409.1556}

\bibitem{Hinton2010}
G.~Hinton, V.~Nair,
  \href{http://dl.acm.org/citation.cfm?id=3104322.3104425}{{Rectified Linear
  Units Improve Restricted Boltzmann Machines}}, Proceeding ICML'10 Proceedings
  of the 27th International Conference on International Conference on Machine
  Learning (2010) 807--814.
\newline\urlprefix\url{http://dl.acm.org/citation.cfm?id=3104322.3104425}

\bibitem{He2015a}
K.~He, X.~Zhang, S.~Ren, J.~Sun, {Delving Deep into Rectifiers : Surpassing
  Human-Level Performance on ImageNet Classification} (2015).
\newblock \href {http://arxiv.org/abs/arXiv:1502.01852v1}
  {\path{arXiv:arXiv:1502.01852v1}}.

\bibitem{Tompson2015}
J.~Tompson, R.~Goroshin, A.~Jain, Y.~LeCun, C.~Bregler, {Efficient object
  localization using Convolutional Networks}, Proceedings of the IEEE Computer
  Society Conference on Computer Vision and Pattern Recognition 07-12-June
  (2015) 648--656.
\newblock \href {http://arxiv.org/abs/arXiv:1411.4280v3}
  {\path{arXiv:arXiv:1411.4280v3}}, \href
  {https://doi.org/10.1109/CVPR.2015.7298664}
  {\path{doi:10.1109/CVPR.2015.7298664}}.

\bibitem{Ma2018}
K.~Ma, W.~Liu, K.~Zhang, Z.~Duanmu, Z.~Wang, W.~Zuo, {End-To-end blind image
  quality assessment using deep neural networks}, IEEE Transactions on Image
  Processing 27~(3) (2018) 1202--1213.
\newblock \href {https://doi.org/10.1109/TIP.2017.2774045}
  {\path{doi:10.1109/TIP.2017.2774045}}.

\bibitem{Lacey2018}
D.~Lacey,
  \href{https://www.graphcore.ai/posts/preliminary-ipu-benchmarks-providing-previously-unseen-performance-for-a-range-of-machine-learning-applications}{{Graphcore:
  PRELIMINARY IPU BENCHMARKS}} (2018).
\newline\urlprefix\url{https://www.graphcore.ai/posts/preliminary-ipu-benchmarks-providing-previously-unseen-performance-for-a-range-of-machine-learning-applications}

\bibitem{Chen2015}
C.~Zhang, P.~Li, G.~Sun, Y.~Guan, B.~Xiao, J.~Cong, {Optimizing FPGA-based
  accelerator design for deep convolutional neural networks}, FPGA 2015 - 2015
  ACM/SIGDA International Symposium on Field-Programmable Gate Arrays (2015)
  161--170\href {https://doi.org/10.1145/2684746.2689060}
  {\path{doi:10.1145/2684746.2689060}}.

\bibitem{Jacob2018}
B.~Jacob, S.~Kligys, B.~Chen, M.~Zhu, M.~Tang, A.~Howard, H.~Adam,
  D.~Kalenichenko, {Quantization and Training of Neural Networks for Efficient
  Integer-Arithmetic-Only Inference}, Proceedings of the IEEE Computer Society
  Conference on Computer Vision and Pattern Recognition (2018) 2704--2713\href
  {http://arxiv.org/abs/arXiv:1712.05877v1} {\path{arXiv:arXiv:1712.05877v1}},
  \href {https://doi.org/10.1109/CVPR.2018.00286}
  {\path{doi:10.1109/CVPR.2018.00286}}.

\bibitem{Li2016}
H.~Li, A.~Kadav, I.~Durdanovic, H.~Samet, H.~P. Graf,
  \href{http://arxiv.org/abs/1608.08710}{{Pruning Filters for Efficient
  ConvNets}}~(2016) (2016) 1--13.
\newblock \href {http://arxiv.org/abs/1608.08710} {\path{arXiv:1608.08710}}.
\newline\urlprefix\url{http://arxiv.org/abs/1608.08710}

\bibitem{Bellec2017}
G.~Bellec, D.~Kappel, W.~Maass, R.~Legenstein,
  \href{http://arxiv.org/abs/1711.05136}{{Deep Rewiring: Training very sparse
  deep networks}}, in: Proc. 2018 International Conference on Learning
  Representations (ICLR), 2017.
\newblock \href {http://arxiv.org/abs/1711.05136} {\path{arXiv:1711.05136}}.
\newline\urlprefix\url{http://arxiv.org/abs/1711.05136}

\bibitem{Han2015}
S.~Han, H.~Mao, W.~J. Dally, \href{http://arxiv.org/abs/1510.00149}{{Deep
  Compression: Compressing Deep Neural Networks with Pruning, Trained
  Quantization and Huffman Coding}} (2015) 1--14\href
  {http://arxiv.org/abs/1510.00149} {\path{arXiv:1510.00149}}.
\newline\urlprefix\url{http://arxiv.org/abs/1510.00149}

\bibitem{Shi2016}
W.~Shi, J.~Caballero, F.~Huszar, J.~Totz, A.~P. Aitken, R.~Bishop, D.~Rueckert,
  Z.~Wang, {Real-Time Single Image and Video Super-Resolution Using an
  Efficient Sub-Pixel Convolutional Neural Network}, Proceedings of the IEEE
  Computer Society Conference on Computer Vision and Pattern Recognition
  2016-Decem (2016) 1874--1883.
\newblock \href {http://arxiv.org/abs/arXiv:1609.05158v2}
  {\path{arXiv:arXiv:1609.05158v2}}, \href
  {https://doi.org/10.1109/CVPR.2016.207} {\path{doi:10.1109/CVPR.2016.207}}.

\bibitem{Wang2018}
M.~Wang, B.~Liu, H.~Foroosh, {Factorized Convolutional Neural Networks},
  Proceedings - 2017 IEEE International Conference on Computer Vision
  Workshops, ICCVW 2017 2018-Janua (2018) 545--553.
\newblock \href {https://doi.org/10.1109/ICCVW.2017.71}
  {\path{doi:10.1109/ICCVW.2017.71}}.

\bibitem{Huang2017}
G.~Huang, Z.~Liu, L.~{Van Der Maaten}, K.~Q. Weinberger, {Densely connected
  convolutional networks}, Proceedings 30th IEEE Conference on Computer Vision
  and Pattern Recognition, CVPR 2017 2017-Janua (2017) 2261--2269.
\newblock \href {http://arxiv.org/abs/arXiv:1608.06993v5}
  {\path{arXiv:arXiv:1608.06993v5}}, \href
  {https://doi.org/10.1109/CVPR.2017.243} {\path{doi:10.1109/CVPR.2017.243}}.

\end{thebibliography}

\begin{appendices}

\appendix
\section{Architecture Search Details}\label{sec:appendix_architecture}
In this appendix, we give details of our architecture search that would be out of place in the main text. 
\subsection{Residual Block diagrams}
\begin{figure}[!htb]
        \centering
        \center{\includegraphics[width=\textwidth]
        {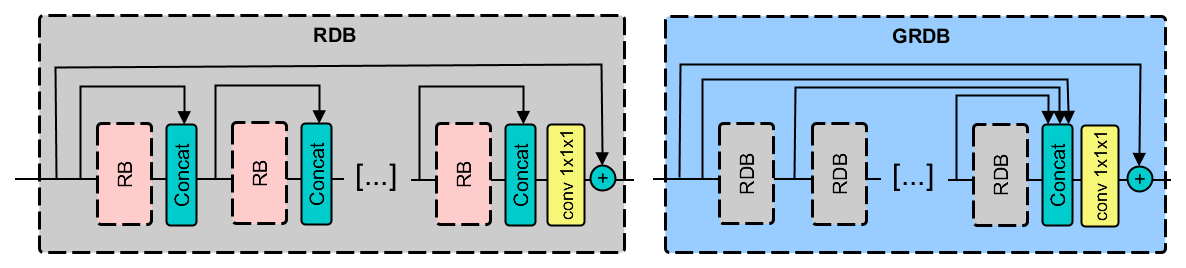}
        \caption{\label{fig:RDB_GRDB}a) The Residual Dense Block \cite{Huang2017}, \cite{Zhang2019} and its extension b) The Grouped Residual Dense Block \cite{Kim2019}.} 
        \includegraphics[width=\textwidth]
        {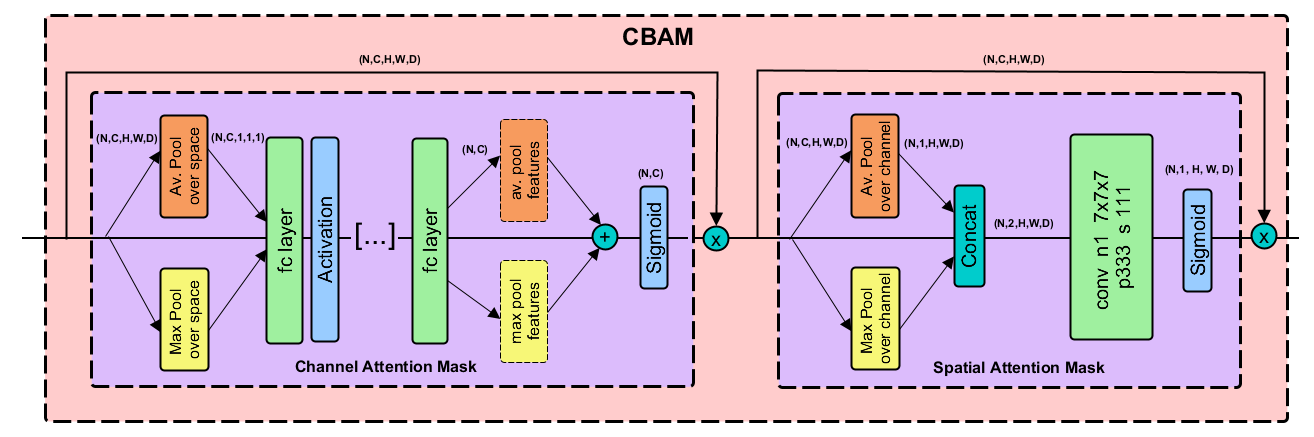}
        \caption{\label{fig:CBAM}The Convolutional Block Attention Module \cite{Woo2018}. The channel mask $M_C(\bm{x})$ and spatial mask $M_s(\bm{x})$ are applied sequentially. These masks are broadcast to full dimensions $(N, C, H, W, D)$ before their element-wise multiplication with the inputs $\bm{x}$. Note that in $M_C(\bm{x})$, the features from max pooling and average pooling are fed through the same fully connected network one after the other and the results are then added. `conv  n1  7x7x7 p333   s111' represents a convolutional layer with 1 channel, kernel size = (7, 7, 7), padding = (3,3,3) and stride = (1,1,1) and is specific to our implementation (although in some cases we found that kernel size = (3, 3, 3) was necessary to enable efficient training). Our CBAM has just two fully connected layers in $M_C(\bm{x})$.} 
        \includegraphics[width=0.8\textwidth]
        {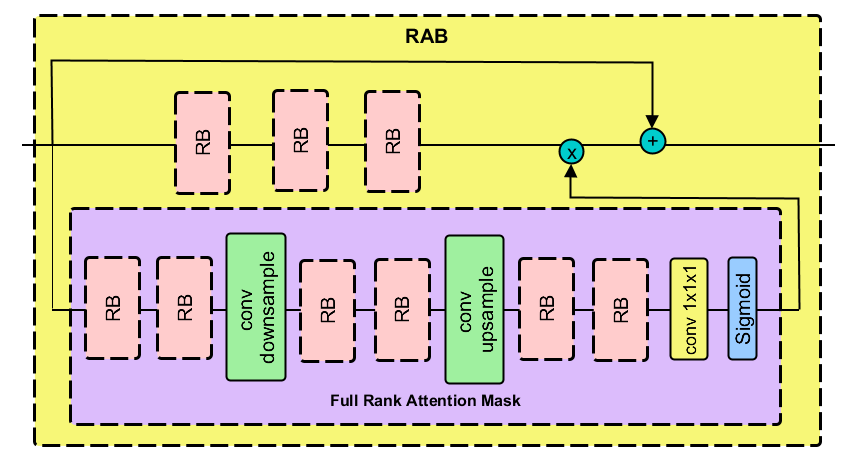}
        \caption{\label{fig:RAB}The Residual Attention Block proposed by \cite{Zhang2019} and utilised by \cite{Zhou2019}. Note that unlike CBAMs, the trunk of RABs (yellow background) are not the identity mapping.}}
\end{figure}
\begin{figure}[!htb]
        \center{\includegraphics[width=.8\textwidth]
        {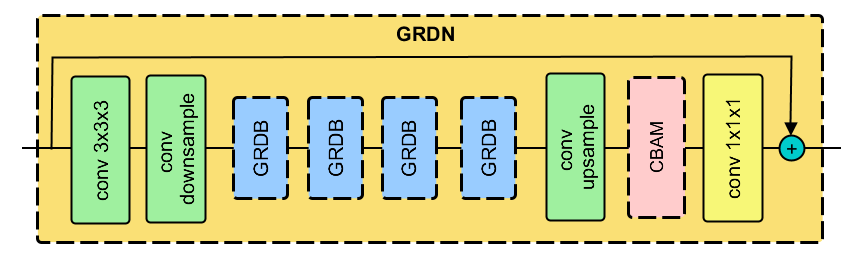}}
        \caption{\label{fig:GRDN}The GRDN block \cite{Zhang2020} in a series of GRDBs (see in \fref{fig:RDB_GRDB}) are used with a CBAM module.}
\end{figure}
The RAB-L variant is investigated in an attempt to separate the \textit{Tucodec} model's success from its use of RABs and the GRDN model follows the work of \cite{Cho2019a}. 

\subsection{ResNext Variant}
\begin{figure}[!htb]
        \center{\includegraphics[width=\textwidth]
        {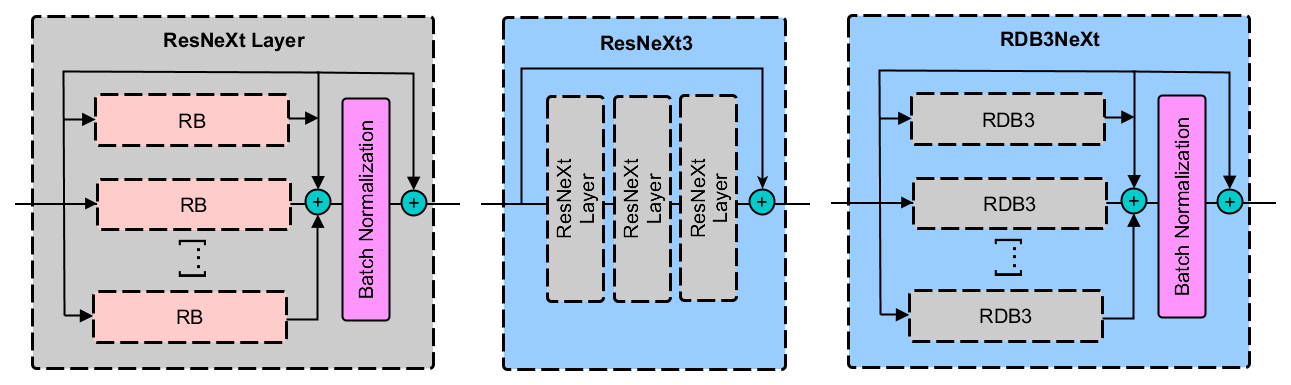}}
        \caption{\label{fig:ResNeXtLayers}\textbf{a)} A single ResNeXt layer, repeated from \fref{fig:ResNeXt_orig} for clarity \cite{Xie2017a}. The ResNeXt cardinality describes the number of RBs in each layer, \textbf{b)} three stacked ResNeXt layers with an extra residual connection, and \textbf{c)} a ResNeXt layer with RDBs instead of simple RBs. Each `RDB3' has 3 RBs.}
        \center{\includegraphics[width=0.8\textwidth]
        {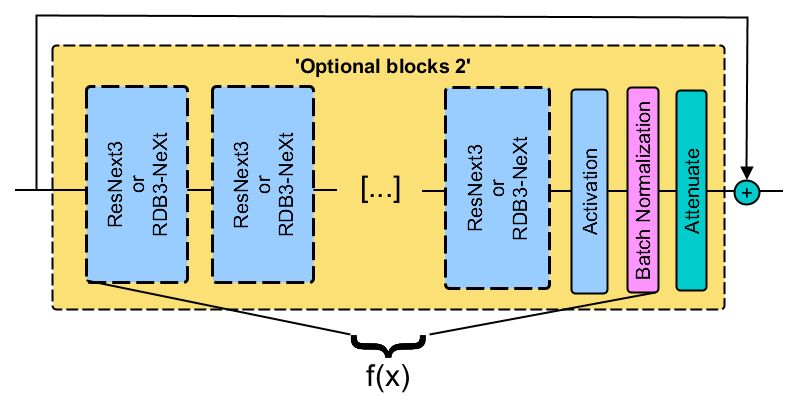}}
        \caption{\label{fig:ResNeXt}Our ResNeXt variant in which ResNeXt layers are grouped in threes. The residual attenuation coefficient in green is applied before leaving the block.}
      \end{figure}
      
We found that placing a flexible variant of the ResNeXt system \cite{Xie2017a} within the second optional block of our backbone was sufficient to describe almost all of the top non-Tucodec-based CLIC entries. In order to include the Chen et al. 2018 entry which used RDBs \cite{Chen2018}, each with three RBs, we extended the ResNeXt system to allow these building blocks as shown in \fref{fig:ResNeXtLayers}c). In order to make our vanilla ResNeXt variants comparable with these `RDB3s', we added an extra skip connection over every third ResNeXt layer as in \cite{Mentzer2018}. 

Within this system we refer to an architecture as: \\ $$\text{`RBD3NeXt-L-N-RB' \ \  \  or  \  \ \  `ResNeXt3-L-N-RB'}$$ \\
for an encoder that consists of the ResNeXt variant in \fref{fig:ResNeXt} with L layers each of cardinality N arranged in either the ResNeXt3 or the RBD3NeXt structure with residual blocks of type RB all embedded within the second optional block of our backbone in \fref{fig:backbone}. When the backbone network is included, these encoders have (L + 7) layers.\\

In this way, Chen et al.'s encoder can be described as a `RBD3NeXt-8-1-vanilla' \cite{Chen2018} while Mentzer et al.'s is a `ResNeXt3-27-1-vanilla' \cite{Mentzer2018}. By placing the CLIC entries within this structure, the landscape \textit{between} the entries in Table~\ref{tab:CLIC_res} became available to search. We evaluated the grid search of options within this space and find that 27 layers of width 4 RDB3s blocks (with CBAMs) perform best. This design is dissimilar to any CLIC entry meaning we would not have found it by simply following examples in the literature. 

\textbf{Attenuation coefficient}
We found that it was difficult to train ResNeXt variants with large cardinality but, as the backbone trained easily, it was clear that the new residual blocks were interfering with the backbone's ability to down-sample the inputs. Therefore we introduced a residual attenuation coefficient $\alpha$ at the exit to the block shown in \fref{fig:ResNeXt} such that the computed function is:
\begin{equation}
    g(\mathbf{x}) = \mathbf{x} + \alpha  \ f(\mathbf{x})
\end{equation} 
$\alpha$ was initialised to 0.05 at the start of training and then updated with the other network parameters. This down-weights the ResNext block's importance initially so that the backbone has time to learn a good compression. 

\textbf{ResNeXt width and cardinality}
\begin{figure}[!htb]
        \center{\includegraphics[width=0.8\textwidth]
        {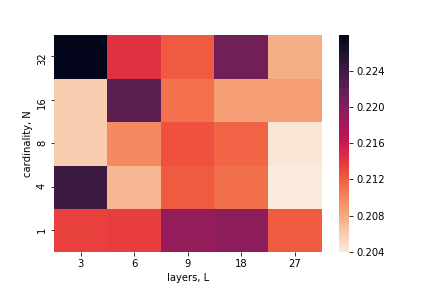}}
        \caption{\label{fig:ResNext_struct}The DA MSE heatmap for a ResNeXt3-L-N-NeXt models with a range of cardinalities and layers. All use PReLU activations.}
      \end{figure}

In the first experiment, we investigated the effect of changing cardinality and number of layers within residual component of our ResNeXt variant. We used NeXt RBs for these experiments. The results are in \fref{fig:ResNext_struct}. The best system was ResNeXt3-27-4-NeXt but there was also an interesting group of models with three layers on the left of \fref{fig:ResNext_struct}. We decided to preserve this diversity and take three models forward to the next stage of experiments. These were ResNeXt3-27-4-NeXt, ResNeXt3-3-8-NeXt and ResNeXt3-27-1-NeXt\footnote{This final model did not perform particularly well but is a ResNet-34 with the final linear layer removed and, as we knew we would be evaluating the systems with vanilla RBs, we thought that in view of its historical successes, there was a good argument for keeping this architecture.}.

\textbf{RAB-L} \label{sec:expt_RAB}
\begin{table}[!htb]
\let\center\empty
\let\endcenter\relax
\centering
\resizebox{0.7\textwidth}{!}{\begin{tabular}{ccc}
\toprule
\multirow{2}{*}{ \textbf{Number of RABs} } & \multirow{2}{*}{\textbf{ DA MSE }} & \textbf{Relative Improvement} \\
& & \textbf{over Backbone }\\
\midrule
1              & 0.2005 & 13.17\%                 \\
2              & 0.2188 & 5.24\%                  \\
4              & 0.1917 & 16.98\%                 \\
8              & 0.2071 & 10.31\%                 \\
\bottomrule
\end{tabular}

}
\caption{\label{tab:RAB}The DA performance of the RAB-L architectures.}
\end{table} 

In our second experiment we investigated the effect changing the number of consecutive RAB blocks in the backbone network. The results are in Table~\ref{tab:RAB}. The the best of these systems with four RABs, gives a 17\% improvement relative to the Backbone but is poor in comparison with the best \textit{Tucodec} model.

\subsubsection{Augmentation}\label{sec:expt_augmentation}
\begin{table}[!htb]
\let\center\empty
\let\endcenter\relax
\centering
\resizebox{0.6\textwidth}{!}{\begin{tabular}{cccc}
\toprule
\textbf{Augmentation}  & \textbf{Jitter}    & \textbf{Jitter}    & \textbf{Jitter Amplitude}   \\
\textbf{Strength}      & \textbf{Amplitude} & \textbf{Frequency} & \textbf{per Location}       \\
\midrule
0             & None      & None      & None               \\
1             & 0.005     & 0.5       & 0.0025             \\
2             & 0.05      & 0.25      & 0.0125             \\
3             & 0.1       & 0.5       & 0.0500             \\
\bottomrule
\end{tabular}
}
\caption{\label{tab:augmentation_strengths}The field-jitter augmentation strengths we investigated. We added Gaussian noise with standard deviation of `Jitter Amplitude' at `Jitter Frequency' of the total locations in the state.}
\end{table} 
To quantify what, if any, effect our augmentation technique was having, we retrained the Tucodec-NeXt model with a range of augmentation strengths as shown in \fref{fig:augmentation}. We did not observe a large difference between the methods so choose the strongest augmentation that did not harm performance (augmentation strength 2 in Table~\ref{tab:augmentation_strengths}) when training our models to convergence. 
\begin{figure}[!htb]
        \center{\includegraphics[width=\textwidth]
        {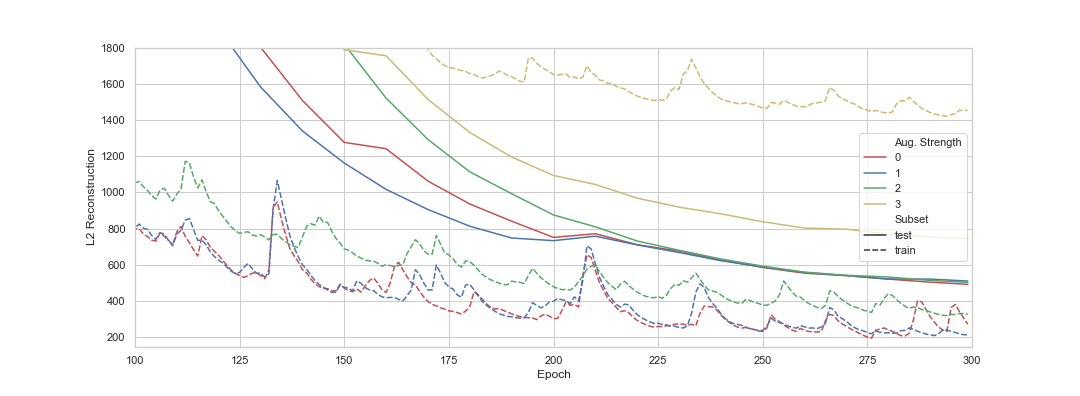}}
        \caption{\label{fig:augmentation}The training and validation MSE reconstruction errors during training with different strengths of augmentation detailed in Table~\ref{tab:augmentation_strengths}. These graphs have been smoothed with an exponential moving average with $\alpha=0.4$ as the spikes in the training curves made this diagram too noisy to be illustrative. A non-smoothed version is given in Appendix \ref{appendix:augmentation}.}
      \end{figure}

\section{Further Comparisons}\label{appendi:comp_DA_SVD}
This Appendix contains two graphs that would have been repetitious in the full text but provide useful context to the comparison between reduced space VarDA and bi-reduced space VarDA. 
\begin{figure}[!htb]
\center{\includegraphics[width=0.8\textwidth]
        {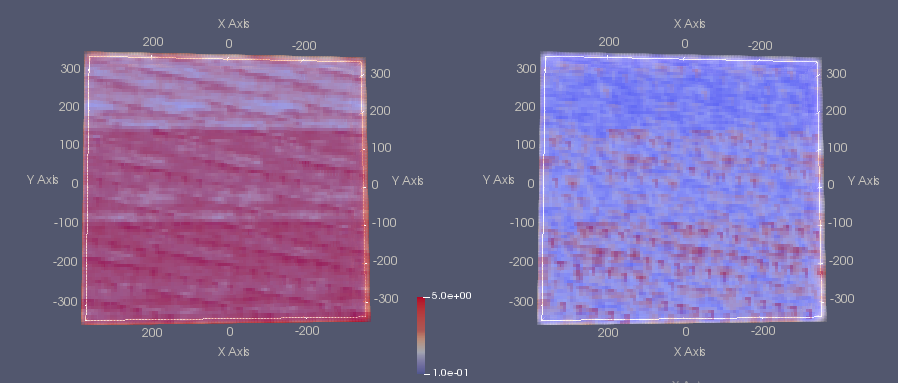}}
        \caption{\label{fig:comp_space}DA MSE across a slice of the spatial domain averaged over all test-set time-steps. We show a) the reduced-space variant with TSVD ($\tau$ = 32 and $M=n$) and b) Bi-reduced space variant with the Tucodec-Next model.}
\center{\includegraphics[width=0.95\textwidth]
        {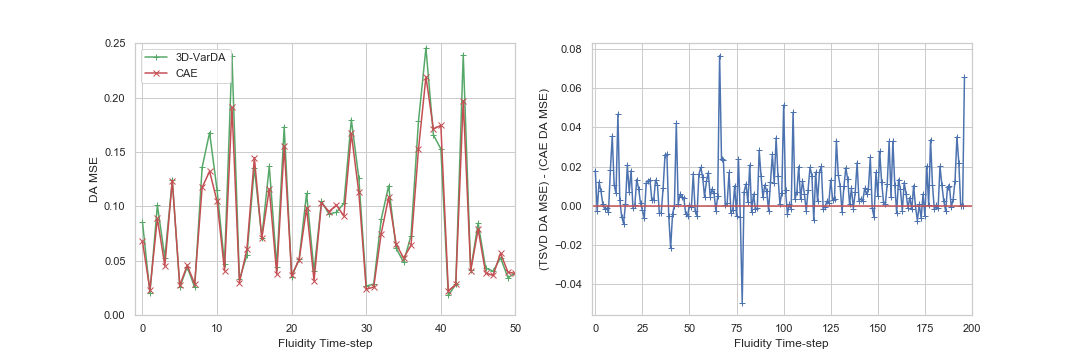}}
        \caption{\label{fig:comp_time791}Repetition of \fref{fig:comp_time} with $\tau = 791$ (i.e. no truncation) instead of $\tau = 32$  and $M= n$ as before. Although the performance is more similar in this case, our method still performs better on average. We also note that, in order to achieve this level performance, the reduced space method takes 2.5s, or x43 longer than our approach.}
\end{figure}
\end{appendices}


\end{document}